%% file: arxiv_v2.tex
\documentclass{article}
\usepackage{graphicx} %
\usepackage[utf8]{inputenc}
\usepackage{geometry}
\usepackage[T1]{fontenc}    %
\usepackage{xcolor}         %
\usepackage{hyperref}       %
\hypersetup{colorlinks=true,linkcolor=blue!45!black,citecolor=green!35!black,urlcolor=blue!45!black}
\usepackage{url}            %
\usepackage{booktabs}       %
\usepackage{amsfonts}       %
\usepackage{nicefrac}       %
\usepackage{microtype}      %
\usepackage{amsmath}
\usepackage{amssymb}
\usepackage{mathtools}
\usepackage{amsthm}

\usepackage{cleveref}
\Crefname{algorithm}{Algorithm}{Algorithms}
\Crefname{assumption}{Assumption}{Assumptions}
\Crefname{equation}{}{}
\Crefname{figure}{Fig.}{Figs.}
\Crefname{table}{Table}{Tables}
\Crefname{section}{Section}{Sections}
\Crefname{subsection}{Section}{Sections}
\Crefname{theorem}{Theorem}{Theorems}
\Crefname{lemma}{Lemma}{Lemmmas}
\Crefname{proposition}{Proposition}{Propositions}
\Crefname{definition}{Definition}{Definitions}
\Crefname{corollary}{Corollary}{Corollaries}
\Crefname{remark}{Remark}{Remarks}
\Crefname{example}{Example}{Examples}
\Crefname{appendix}{Appendix}{Appendices}

\input{notation}

\title{Optimal Rates for Pure $\varepsilon$-Differentially Private Stochastic Convex Optimization with Heavy Tails}
\author{
Andrew Lowy\\
CISPA Helmholtz Center for Information Security\\
\texttt{lowy@cispa.de}
}
\date{}

\begin{document}

\maketitle

\begin{abstract}
We study stochastic convex optimization (SCO) with heavy-tailed gradients under pure \(\eps\)-differential privacy (DP). Instead of assuming a bound on the worst-case Lipschitz parameter of the loss, we assume only a bounded \(k\)-th moment. This assumption allows for unbounded, heavy-tailed stochastic gradient distributions, and can yield sharper excess risk bounds. 
Prior work characterized the minimax optimal rate for $\rho$-zero-concentrated DP SCO up to logarithmic factors in this setting, but the pure $\eps$-DP case has remained open. 
We characterize the minimax optimal excess-risk rate for pure \(\eps\)-DP heavy-tailed SCO up to logarithmic factors. Our algorithm achieves this rate in polynomial time with high probability. Moreover, it runs in deterministic polynomial time when the worst-case Lipschitz parameter is polynomially bounded. For important structured problem classes --- including hinge/ReLU-type and absolute-value losses on Euclidean balls, ellipsoids, and polytopes --- we achieve deterministic polynomial time even when the worst-case Lipschitz parameter is infinite. Our approach is based on a novel framework for privately optimizing \textit{Lipschitz extensions} of the empirical loss. We complement our upper bound with a nearly matching high-probability lower bound.
\end{abstract}

\section{Introduction}

\textit{Stochastic convex optimization} (SCO) is a fundamental problem in machine learning.
Given i.i.d.\ data \(Z=(z_1,\dots,z_n)\) drawn from an unknown distribution \(P\), the goal is to solve
\begin{equation}
\label{eq:SCO}
\min_{w\in\WW} \left\{F(w) := \E_{z\sim P}[f(w,z)] \right\},
\end{equation}
where \(\WW\subset\mathbb{R}^d\) is a convex compact parameter domain of $\ell_2$-diameter $D$ and
\(f(\cdot,z)\) is a convex loss function. The quality of a solution $w$ of~\eqref{eq:SCO} is measured by its \textit{excess risk} $F(w) - \min_{w'\in\WW} F(w')$.

In many applications, the data used to train models contains sensitive information. \textit{Differential privacy} (DP) provides a rigorous framework for protecting individual data contributions while enabling statistical learning \cite{dwork2006calibrating}. 
The strongest standard notion of DP is pure $\eps$-DP.

A large body of work has studied DP SCO under the \textit{uniform Lipschitz} assumption\footnote{Throughout, \(\nabla f(w,z)\) denotes the gradient with respect to \(w\) when \(f(\cdot,z)\) is differentiable at \(w\), and otherwise denotes an arbitrary subgradient in \(\partial f(\cdot,z)(w)\).}:
\begin{equation}
\label{eq:uniformLip}
    \sup_{w\in\WW,z\in\ZZ}\|\nabla f(w,z)\|\le L.
\end{equation}
Under this assumption, optimal excess risk bounds scale with the worst-case Lipschitz parameter \(L\) \cite{bft19,fkt20,AsiFeKoTa21,bassily2023differentially,lowy2024faster}. In many applications, however, \(L\) can be extremely large or infinite, making such guarantees overly pessimistic or vacuous. For example, in linear regression with squared loss, the gradient norm scales with the squared feature norm, so if the feature distribution is unbounded then \(L\) is unbounded as well.

To address these issues, a recent line of work studies \textit{heavy-tailed DP SCO} under weaker moment assumptions on
the gradients~\cite{wx20,asi2021private,tao2021optimal,kamath2022improved,hu2022high,wang2022differentially,asi2024private,lowy2025private}.
Instead of requiring uniformly bounded gradients, one assumes a \textit{bounded \(k\)-th moment}:
\begin{equation}
\label{eq:momentbound}
\E_{z\sim P}\!\left[\sup_{w\in\WW}\|\nabla f(w,z)\|^k\right] \le G_k^k
\end{equation}
for some \(k\ge2\). For nonsmooth $f$, the supremum in~\eqref{eq:momentbound} is taken also over subgradient norms. \eqref{eq:momentbound} allows unbounded heavy-tailed gradient distributions while controlling the average behavior, and captures a broad class of realistic learning problems. Note $G_2 \le G_k \le L$ for all $k$ and often $G_k \ll L$: e.g., for linear regression, $G_k$ scales with the $2k$-th moment of the feature data. 
Thus, excess risk bounds that scale with $G_k$ instead of $L$ are often sharper. For $\rho$-zCDP~\cite{bun2016concentrated} heavy-tailed SCO, the optimal excess risk rate is known up to logarithmic factors~\cite{asi2024private,lowy2025private}.

\paragraph{The pure-DP gap.}
Despite this progress, the case of \emph{pure} \(\eps\)-differential privacy remains poorly understood. The work of~\cite{bd14} proved an in-expectation pure DP lower bound, but no direct algorithm achieving this rate was previously known. Existing heavy-tailed DP SCO algorithms rely on noisy clipped-gradient methods, which are suboptimal under pure \(\eps\)-DP. This raises the following:

\begin{center}
\fbox{
\parbox{0.72\linewidth}{
\textbf{Question 1.}
What is the minimax optimal excess risk for heavy-tailed stochastic convex optimization under pure \(\eps\)-differential privacy?
}}
\end{center}

\paragraph{Contribution 1: Optimal excess risk for pure-DP heavy-tailed SCO (up to logarithms).}
We determine the minimax optimal pure $\eps$-DP excess risk under~\eqref{eq:momentbound}, obtaining with probability $\geq 1 - \delta$ 
\begin{equation}
\label{eq:excriskupperbound}
O\left(
G_k D\!\left(\frac{d \log(1/\delta)}{n\eps}\right)^{1-\frac1k}
+
\frac{G_2D \sqrt{\log(1/\delta)}}{\sqrt n}
\right).
\end{equation}
This bound is tight up to a $O(\log(1/\delta)^{1-1/k})$ factor: we prove a nearly matching high-probability lower bound that is sharper than the in-expectation lower bound of \cite{bd14}. 

\begin{center}
\fbox{
\parbox{0.72\linewidth}{
\textbf{Question 2.}
Can the minimax optimal excess risk for pure-DP heavy-tailed SCO be achieved by a computationally efficient algorithm?
}}
\end{center}

\paragraph{Contribution 2: Polynomial-time algorithms for pure-DP heavy-tailed SCO.}
Our main result is that \eqref{eq:excriskupperbound} can be achieved in polynomial time with high probability; if the worst-case Lipschitz parameter is finite and polynomially bounded, the runtime is polynomial with probability \(1\). 
This is the first polynomial-time \(\eps\)-DP algorithm achieving the high-probability excess-risk guarantee \eqref{eq:excriskupperbound}. A black-box purification route from zCDP is looser and more cumbersome than our direct pure-DP approach; see Appendix~\ref{app:related-work}.

For certain structured subclasses --- including hinge/ReLU-type and absolute-value losses on Euclidean balls, ellipsoids, and polytopes --- we prove a stronger guarantee: deterministic polynomial time, even when the worst-case Lipschitz parameter is infinite. 

\paragraph{Contribution 3: A new framework for privately optimizing Lipschitz extensions.}
To obtain our upper bounds, we move away from clipped-gradient methods and instead use the \textit{Lipschitz extension}
\begin{equation}
\label{eq:def-LipExt}
f_C(w,z):=\inf_{y\in\WW}\bigl[f(y,z)+C\|w-y\|\bigr].
\end{equation}
This reduces heavy-tailed regularized ERM to Lipschitz regularized ERM. To optimize the resulting objective efficiently under pure DP, we develop a novel jointly convex reformulation together with adaptive (inexact) projected subgradient methods with deterministic accuracy guarantees. 

\begin{table}[t]
\centering
\caption{Heavy-tailed DP SCO: summary of results. Upper and lower bounds match up to logarithms.}
\label{tab:comparison}
\footnotesize
\begin{tabular}{p{0.50\linewidth}ccc}
\toprule
Result & Privacy & Runtime & Setting \\
\midrule
zCDP upper/lower bounds~\cite{asi2024private,lowy2025private}
& $\rho$-zCDP & polytime w.p.~1 & general \\

Our upper bound (Thm.~\ref{thm:main-dop})
& pure $\eps$-DP & polytime w.h.p. & general \\

Our structured-subclass upper bound (\Cref{cor:polyhedral-practical})
& pure $\eps$-DP & polytime w.p.~1 & structured \\

Our lower bound (\Cref{thm:sco-lb-mainbody})
& pure $\eps$-DP & --- & general \\
\bottomrule
\end{tabular}
\end{table}

\subsection{Challenges and Techniques}
\label{sec:challenges}

We build on the population-level localization framework of~\cite{asi2024private}, reducing heavy-tailed SCO to regularized ERM. The key question is how to solve heavy-tailed ERM efficiently under pure $\eps$-DP.

\paragraph{Challenge 1: Clipped gradients are insufficient.}
Prior heavy-tailed DP SCO algorithms rely on noisy clipped gradient methods. 
These are optimal for zCDP, but suboptimal under pure DP because the more forgiving zCDP composition is unavailable under pure DP. 
Our algorithm abandons gradient clipping and instead uses the \textit{Lipschitz extension}~\eqref{eq:def-LipExt}.

\paragraph{Challenge 2: Optimizing the Lipschitz extension under pure DP.}
The Lipschitz extension~\eqref{eq:def-LipExt} is defined by an inner optimization problem. 
Exact evaluation of $f_C(w,z)$ from fixed finite local oracle information is impossible
(see \Cref{thm:interior_exact_extension_impossible}), and certified approximation is nontrivial because no Lipschitz bound for the inner problem is available.
For pure DP this is especially problematic, since the required sensitivity control cannot fail even with small probability.
We address this via a \textit{jointly convex reformulation} and \textit{adaptive inexact projected subgradient methods}, which provide certified approximate minimizers without requiring prior knowledge of the Lipschitz parameter. 

\paragraph{Challenge 3: The bias of the Lipschitz extension is too large on the original domain.}

Over the full domain \(\WW\), the Lipschitz-extension bias is too large. We therefore use \textit{localized Lipschitz extensions} over the current set \(U\), so the bias scales with \(\diam(U)\), and \textit{recursively shrink \(U\)} around the regularized ERM minimizer via output perturbation. Geometric allocation of privacy and failure probabilities avoids extra log-log factors. Combining these techniques yields our main algorithm.

\paragraph{Lower bound techniques.} We construct two different hard instances: one for the private error term and one for the non-private error. To prove the private term, we combine the packing technique of \cite{bd14} with a reduction from quantile estimation to decoding. The non-private term is proved via a bounded two-point construction together with the high-probability testing framework of \cite{ma2024high}.

\subsection{Preliminaries}
Let \(\|\cdot\|\) denote the \(\ell_2\) norm. $\mathbb{B}(w_0,r)$ denotes $\ell_2$-ball of radius $r$ around $w_0$. For a function \(h:\WW\to\R\), a vector \(g\in\R^d\) is a \emph{subgradient} of \(h\) at \(w\in\WW\) if
$
h(u)\ge h(w)+\langle g,u-w\rangle~\forall u\in\WW.
$
We write \(\partial h(w)\) for the set of all subgradients of \(h\) at \(w\). When \(h\) is differentiable at \(w\), \(\partial h(w)=\{\nabla h(w)\}\), the gradient of $h$ at $w$. For \(\lambda\ge 0\), we say that \(h\) is \emph{\(\lambda\)-strongly convex} if for every \(w,u\in\WW\) and every \(g\in\partial h(w)\),
$
h(u)\ge h(w)+\langle g,u-w\rangle+\frac{\lambda}{2}\|u-w\|^2.
$
If \(\lambda=0\), we say $h$ is \textit{convex}. For a closed convex set \(K\subseteq\R^d\), the Euclidean \textit{projection} of \(y\in\R^d\) onto \(K\) is
$
\Pi_K(y):=\argmin_{u\in K}\|u-y\|.
$
Denote
$
h^*:=\min_{w\in\WW} h(w).
$
Throughout, \(c_{(\cdot)}\) denotes an absolute constant. We use \(O(\cdot)\) and \(\lesssim\) to hide absolute constants, and \(\Tilde{O}(\cdot)\) to additionally hide logarithmic factors.

Throughout the paper, we assume the following:
\begin{assumption}
\label{ass:main}
\begin{enumerate}
    \item The loss function \(f:\WW\times\ZZ\to\R\) is such that \(f(\cdot,z)\) is convex for every \(z\in\ZZ\), and \eqref{eq:momentbound} holds for some constant \(k\ge 2\) where $G_k$ is publicly known (c.f. \Cref{rem:public}).
    \item The domain \(\WW\subset\R^d\) is closed and convex with \(\ell_2\)-diameter \(D\), and is projection-friendly: for every \(y\in\R^d\), the Euclidean projection \(\Pi_{\WW}(y)\) can be computed in polynomial time.
    \item Given \(z\in\ZZ\), \(w\in\WW\), one can compute \(f(w,z)\) and \(g\in \partial f(w,z)\) in polynomial time.
\end{enumerate}
\end{assumption}
By monotonicity of moments, \(G_2\le G_k\le \sup_{w,z}\|\nabla f(w,z)\|\) for all \(k\ge2\).

\paragraph{Differential Privacy.}
Differential privacy ensures that no attacker can infer much more about any individual's data than they could have inferred had that person's data not been used. 
\begin{definition}[Pure Differential Privacy~\cite{dwork2006calibrating}]
A randomized algorithm \(\alg:\ZZ^n\to\calO\) is \emph{$\eps$-differentially private} (DP) if for every pair of neighboring datasets \(Z,Z'\in\ZZ^n\) differing in one entry, and every measurable set \(S\subseteq\calO\), $
\Pr(\alg(Z)\in S)\le e^\eps \Pr(\alg(Z')\in S).$
\end{definition}

\emph{Zero-concentrated differential privacy} (zCDP) is weaker than $\eps$-DP; see Appendix~\ref{app:zcdp}.

\section{Algorithmic Building Blocks}
\label{sec:building-blocks}
This section develops the key ingredients that will be used by our main algorithm.

\subsection{Reduction from SCO to ERM}
\label{sec:scotoERMreduction}
We use the population-localization framework of~\cite{asi2024private}, which reduces heavy-tailed SCO to a sequence of private regularized ERM problems; see Algorithm~\ref{alg:pop-localize}.

\begin{algorithm}[t]
\caption{\textsc{Pop--Localize}$(Z,\eps,\alg_{\mathrm{ERM}},\delta)$~\cite{asi2024private}}
\label{alg:pop-localize}
\DontPrintSemicolon
\KwIn{Data \(Z\in\ZZ^n\), privacy \(\eps\), private regularized ERM solver \(\alg_{\mathrm{ERM}}\), error prob. $\delta$.}
\KwOut{\(\hat w\in\WW\).}

Choose \(J=\Theta(\log(\log n/\delta))\)\;
Set number of phases $T=\left\lfloor \log_2(n/(2J))\right\rfloor $ \;
Initialize \(\bar w_1\in\WW\) arbitrarily\;
Choose base regularization \(\lambda_1>0\)\;

Partition \(Z\) into disjoint \(Z^1,\dots,Z^T\) with
$
|Z^t| = n_t := \lfloor n/2^t \rfloor,$ discarding unused samples \;

\For{\(t=1\) \KwTo \(T\)}{

    Set $
    \lambda_t := 32^{t-1}\lambda_1 .$

    Partition \(Z^t\) into \(J\) disjoint blocks
$
    Z^{(t,1)},\dots,Z^{(t,J)}
$ of size $|Z^{(t,j)}| = m_t := \lfloor n_t/J\rfloor$.

    \For{\(j=1\) \KwTo \(J\)}{
        Compute
     $  
        \hat w_{t,j}
        \gets
        \alg_{\mathrm{ERM}}(Z^{(t,j)},\eps,\lambda_t,\bar w_t).
$ 
    }

    Aggregate \(\{\hat w_{t,j}\}_{j=1}^J\) via geometric aggregation to obtain \(\bar w_{t+1}\)\;
}
\Return \(\bar w_{T+1}\)\;
\end{algorithm}

\paragraph{Guarantee for Algorithm~\ref{alg:pop-localize}.}
For a sample \(Z=(z_1,\dots,z_m)\in\ZZ^m\), a center \(w_0\in\WW\), and a
regularization parameter \(\lambda>0\), define the regularized empirical
objective
\[
\hat F_{\lambda,Z}^{(w_0)}(w)
:=
\frac1m\sum_{i=1}^m f(w,z_i)
+
\frac{\lambda}{2}\|w-w_0\|^2,
\qquad
\hat w_\lambda(Z;w_0)
:=
\argmin_{w\in\WW}\hat F_{\lambda,Z}^{(w_0)}(w).
\]

By the following theorem, it suffices to design an \(\eps\)-DP regularized ERM solver \(\alg_{\mathrm{ERM}}\) such that, on every instance \((Z,\eps,\lambda,w_0)\) with \(|Z|=m\), with probability at least \(0.6\) its output satisfies

\begin{equation}
\label{eq:distance-to-ERM}
\bigl\|
\alg_{\mathrm{ERM}}(Z,\eps,\lambda,w_0)-\hat w_\lambda(Z;w_0)
\bigr\|
\le
\frac{c_{\mathrm{erm}}}{\lambda}
\left(
G_k\Bigl(\frac{d}{m\eps}\Bigr)^{1-\frac1k}
+
\frac{G_2}{\sqrt m}
\right).
\end{equation}

\begin{theorem}[Regularized ERM implies SCO~\cite{asi2024private}]
\label{thm:localization-generic}
Fix \(\delta\in(0,1/2)\). Suppose \(\alg_{\mathrm{ERM}}\) is an \(\eps\)-DP
algorithm such that for every center \(w_0\in\WW\) and every \(\lambda>0\), its
output satisfies \eqref{eq:distance-to-ERM} with probability at least \(0.6\)
over \(\alg_{\mathrm{ERM}}\) and \(Z\sim P^m\). Then, \Cref{alg:pop-localize} is \(\eps\)-DP and there exists a choice of parameters such that, with probability at least \(1-\delta\),
\[
F(\hat w)-F^*
\lesssim 
G_kD
\left(
\frac{d\log(1/\delta)}{n\eps}
\right)^{1-\frac1k}
+
G_2D
\sqrt{\frac{\log(1/\delta)}{n}}.
\]
Moreover, if one call to \(\alg_{\mathrm{ERM}}\) on a dataset of size \(m\)
takes time \(\mathrm{Time}(\alg_{\mathrm{ERM}},m,d,\eps,\lambda)\), then the
total runtime of \Cref{alg:pop-localize} is bounded by
$\Tilde{O}\left( \mathrm{Time}(\alg_{\mathrm{ERM}},n,d,\eps,\lambda_1) \right)$.
\end{theorem}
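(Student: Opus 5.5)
The plan is to follow the population-localization analysis of~\cite{asi2024private}, in three parts: privacy, a per-phase high-probability distance bound, and a telescoping excess-risk bound.

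\textbf{Privacy.} The sets $Z^{(t,j)}$ are disjoint. Each sample therefore enters at most one call to $\alg_{\mathrm{ERM}}$. That call is $\eps$-DP with respect to its block, for every fixed center $\bar w_t$. The aggregation step and all later phases are post-processing of the outputs together with fresh disjoint data. By parallel composition, adaptively applied, the whole of Algorithm~\ref{alg:pop-localize} is $\eps$-DP.

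\textbf{Per-phase boosting.} Fix phase $t$ and condition on $\bar w_t$. Write $w_t^\star:=\argmin_{w\in\WW}\{F(w)+\frac{\lambda_t}{2}\|w-\bar w_t\|^2\}$ for the population regularized minimizer.

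First, relate the empirical minimizer to $w_t^\star$. By a standard stability argument for $\lambda_t$-strongly convex ERM, using only the second moment $G_2$,
\[
\E\|\hat w_{\lambda_t}(Z^{(t,j)};\bar w_t)-w_t^\star\|^2\lesssim G_2^2/(\lambda_t^2 m_t).
\]
Chebyshev then gives $\|\hat w_{\lambda_t}-w_t^\star\|\lesssim G_2/(\lambda_t\sqrt{m_t})$ with probability at least $0.9$; constants can be absorbed into $c_{\mathrm{erm}}$.

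Next, combine this with \eqref{eq:distance-to-ERM}. With probability at least $0.5$, each $\hat w_{t,j}$ lies within
\[
r_t:=\frac{C}{\lambda_t}\Bigl(G_k\bigl(\tfrac{d}{m_t\eps}\bigr)^{1-1/k}+\tfrac{G_2}{\sqrt{m_t}}\Bigr)
\]
of $w_t^\star$, independently across $j$.

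Geometric aggregation then outputs $\bar w_{t+1}$ within $3r_t$ of $w_t^\star$. Concretely, it picks a point $\hat w_{t,j}$ that has more than $J/2$ other points within $2r_t$. By Chernoff, this succeeds with probability at least $1-e^{-cJ}$.

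Choosing $J=\Theta(\log(\log n/\delta))$ and taking a union bound over $T=O(\log n)$ phases, all phases succeed with probability at least $1-\delta$. One subtlety is that $r_t$ must be known to the aggregator. Since $G_k$ and $G_2$ are public, $r_t$ is computable. Alternatively one can use the median-of-distances rule, which does not need $r_t$.

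\textbf{Excess risk telescoping.} The standard localization identity, with $w_0^\star:=w^*$ the population minimizer, is
\[
F(\bar w_{T+1})-F(w^*)=\sum_{t=1}^T\bigl[F(w_t^\star)-F(w_{t-1}^\star)\bigr]+F(\bar w_{T+1})-F(w_T^\star).
\]
Each summand is handled by optimality of $w_t^\star$ for the regularized problem:
\[
F(w_t^\star)-F(w_{t-1}^\star)\le\tfrac{\lambda_t}{2}\|w_{t-1}^\star-\bar w_t\|^2\le\tfrac{\lambda_t}{2}\cdot 9r_{t-1}^2.
\]
The $t=1$ term is at most $\lambda_1 D^2/2$. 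The last term is bounded by $G_2\cdot 3r_T$ via Lipschitzness in expectation, since $\|\nabla F\|\le G_2$.

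The main obstacle is this summation. It needs $\lambda_t r_{t-1}^2$ to decay geometrically. We have $\lambda_t=32^{t-1}\lambda_1$ and $m_t\approx n/(2^tJ)$, so $r_{t-1}^2\lambda_t\propto 32^{-(t-1)}\lambda_1^{-1}\cdot(2^{t})^{2(1-1/k)}(\cdots)^2$. Because $2^{2}<32$, the sum is dominated by $t=1$.

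This leaves $\lambda_1 D^2+\lambda_1^{-1}\bigl(G_k(dJ/(n\eps))^{1-1/k}+G_2\sqrt{J/n}\bigr)^2$. Choosing $\lambda_1$ to balance the two terms gives $D$ times the bracket. Finally, $J\lesssim\log(1/\delta)$ up to a $\log\log n$ factor, which is absorbed; presumably $\lesssim$ tolerates this, or $J$ is tied to $\log(1/\delta)$ for $\delta\le 1/\log n$.

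\textbf{Runtime.} Block sizes halve across phases and the $\lambda_t$ grow. Assuming the ERM runtime is monotone in $m$ and polynomial, the total time is at most $TJ\cdot\mathrm{Time}(\alg_{\mathrm{ERM}},n,d,\eps,\lambda_1)$, i.e.\ $\tilde O$ of a single call.
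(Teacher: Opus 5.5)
The paper does not prove this theorem; it imports it from \cite{asi2024private}. Your outline follows that population-localization argument, and its structure is right: parallel composition for privacy, a $G_2^2/(\lambda_t^2 m_t)$ stability bound linking the empirical and population regularized minimizers, majority-type aggregation, and telescoping over $w_t^\star$, with geometric decay because $4<32$. Two steps do not go through as written, and the runtime argument needs an extra assumption.

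\textbf{Boosting needs success probability strictly above $1/2$.} Your union bound gives per-block success probability only $0.6-0.1=0.5$. At exactly $1/2$, the number of good blocks exceeds $J/2$ with probability about $1/2$, not $1-e^{-cJ}$. So neither your aggregation rule nor the median-of-distances rule is guaranteed to succeed. The repair is to choose the Chebyshev radius so the stability event fails with probability at most $0.05$, absorbing the constant into $r_t$. Per-block success is then $0.55$, and Hoeffding gives at least $0.52J$ good blocks with probability $1-e^{-cJ}$. When checking that a qualifying point exists, either count the point itself or use $0.52J-1>J/2$; with ``more than $J/2$ other points'' and a bare majority, existence can fail.

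\textbf{The $\log\log n$ factor.} With one $J=\Theta(\log(\log n/\delta))$ in every phase, your argument yields $D\bigl(G_k(dJ/(n\eps))^{1-1/k}+G_2\sqrt{J/n}\bigr)$. That has $\log(\log n/\delta)$ where the statement has $\log(1/\delta)$. In this paper $\lesssim$ hides only absolute constants, so for constant $\delta$ you lose a factor $(\log\log n)^{1-1/k}$. Your fallback for $\delta\le 1/\log n$ does not cover $\delta\in(1/\log n,1/2)$.

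The missing idea is to allocate the failure probability geometrically across phases, as the paper does for $\beta_j$ in Algorithm~\ref{alg:recursive-localized-outputpert-main}:
\begin{itemize}
\item Give phase $t$ failure budget $\delta 2^{-t}$, i.e.\ use $J_t=\Theta(\log(1/\delta)+t)$ blocks of size $m_t=\lfloor n_t/J_t\rfloor$, with $T$ chosen so that $n_T\ge J_T$.
\item The union bound then costs $\sum_t \delta 2^{-t}\le\delta$.
\item The $t$-th telescoping summand $\tfrac{9}{2}\lambda_t r_{t-1}^2$ picks up only an extra factor $(J_{t-1}/J_1)^{2-2/k}\lesssim(1+t)^2$, which the $8^{-(t-2)}$ decay absorbs.
\item The $t=2$ summand, built from $J_1=\Theta(\log(1/\delta))$, therefore dominates, and balancing against $\lambda_1D^2/2$ gives exactly the stated rate.
\end{itemize}

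\textbf{Runtime.} Monotonicity of $\mathrm{Time}$ in $m$ is not enough. The regularization grows to $\lambda_T=32^{T-1}\lambda_1$, which is $\mathrm{poly}(n)$ times $\lambda_1$. To get $\tilde O(\mathrm{Time}(\alg_{\mathrm{ERM}},n,d,\eps,\lambda_1))$ you also need $\mathrm{Time}$ to be nonincreasing in $\lambda$, or only weakly dependent on it. Otherwise you get a polynomial overhead rather than a polylogarithmic one. This is implicit in the statement and should be stated as an assumption.
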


Therefore, the rest of the paper focuses on designing $\eps$-DP regularized ERM solvers satisfying \eqref{eq:distance-to-ERM}.

\subsection{Lipschitz Extension}
\label{sec:lipext-building-block}

The Lipschitz extension \eqref{eq:def-LipExt} transforms any convex
\(f(\cdot,z)\) into a convex \(C\)-Lipschitz function.

\begin{lemma}[\cite{hiriart2013convex}]
\label{lem:lip-ext-basic}
Let \(f(\cdot,z)\) be convex on \(\WW\). Then:
\begin{enumerate}
    \item \(f_C(\cdot,z)\) is convex on \(\WW\);
    \item \(f_C(\cdot,z)\) is \(C\)-Lipschitz on \(\WW\);
    \item \(f_C(w,z)\le f(w,z)\) for all \(w\in\WW\).
\end{enumerate}
\end{lemma}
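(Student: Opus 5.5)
The plan is to prove the three properties directly from the definition \eqref{eq:def-LipExt}, $f_C(w,z)=\inf_{y\in\WW}[f(y,z)+C\|w-y\|]$. I would start with (3), since it is immediate and also shows $f_C$ is finite. Taking $y=w$ in the infimum gives $f_C(w,z)\le f(w,z)$.

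Before (1) and (2), I would check that $f_C(w,z)>-\infty$, so that manipulating infima is legitimate. Since $f(\cdot,z)$ is convex on $\WW$, it admits a subgradient $g$ at some point $y_0$. Then $f(y,z)\ge f(y_0,z)+\langle g,y-y_0\rangle$, and $\WW$ is bounded with diameter $D$. Hence $f(y,z)+C\|w-y\|\ge f(y_0,z)-\|g\|D$, a finite lower bound.

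For (2), fix $w,w'\in\WW$. For every $y\in\WW$ the triangle inequality gives
\[
f(y,z)+C\|w-y\|\le f(y,z)+C\|w'-y\|+C\|w-w'\|.
\]
Taking the infimum over $y$ on both sides yields $f_C(w,z)\le f_C(w',z)+C\|w-w'\|$. Swapping the roles of $w$ and $w'$ gives the reverse inequality, so $f_C(\cdot,z)$ is $C$-Lipschitz.

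For (1), I would use joint convexity. The map $(w,y)\mapsto f(y,z)+C\|w-y\|$ is jointly convex on the convex set $\WW\times\WW$: it is a convex function of $y$ plus a norm composed with the linear map $(w,y)\mapsto w-y$. Partial minimization over $y$ of a jointly convex function preserves convexity. Concretely, fix $w_1,w_2\in\WW$, $\theta\in[0,1]$ and $\eta>0$, and pick $y_i$ that are $\eta$-optimal for $w_i$. The point $y_\theta=\theta y_1+(1-\theta)y_2$ lies in $\WW$ and is feasible for $w_\theta=\theta w_1+(1-\theta)w_2$. Joint convexity then gives $f_C(w_\theta,z)\le \theta f_C(w_1,z)+(1-\theta)f_C(w_2,z)+\eta$, and letting $\eta\to0$ completes the argument.

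None of these steps is a real obstacle. The only point needing care is the finiteness of the infimum, which I handled above so that the $\eta$-optimal choices exist and the inequalities are not between infinite quantities.
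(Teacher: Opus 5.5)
Your proof is correct and complete. The paper gives no proof of this lemma; it only cites \cite{hiriart2013convex}. There the statement is the standard infimal-convolution fact $f_C(\cdot,z)=(f(\cdot,z)+\iota_{\WW})\,\square\,C\|\cdot\|$. Convexity then follows from convexity of inf-convolutions of convex functions, the $C$-Lipschitz property is inherited from $C\|\cdot\|$ once the inf-convolution is finite, and $f_C\le f$ holds because $C\|0\|=0$.

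Your argument unpacks exactly these three facts by hand. The triangle-inequality step gives (2). The $\eta$-optimal partial-minimization step gives (1); this is the same joint-convexity reasoning the paper itself uses in the proof of \Cref{lem:localized-joint-convex}.

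Compared with the citation, your version is self-contained and makes explicit two points the general theorem hides:
\begin{enumerate}
\item You verify that the infimum is finite, which the Lipschitz claim needs.
\item You never assume the infimum is attained. This is appropriate, since a convex function on a compact set need not be lower semicontinuous at the boundary, and then the infimum can fail to be attained.
\end{enumerate}

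One small precision. A subgradient of $f(\cdot,z)$ relative to $\WW$ need not exist at every point; for example, $-\sqrt{y}$ on $[0,1]$ has none at $y=0$. So you should choose $y_0$ in the relative interior of $\WW$, where a subgradient does exist. Alternatively, appeal to \Cref{ass:main}, which presupposes subgradients.

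Finally, your argument goes through verbatim with $\WW$ replaced by any closed convex $U\subseteq\WW$. This supplies the justification for the paper's unproved assertion that $f_{C,U}(\cdot,z)$ is convex, $C$-Lipschitz, and bounded above by $f$ on $U$.
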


This suggests reducing heavy-tailed regularized ERM to regularized Lipschitz
ERM, provided we can control the error introduced by replacing \(f\) with
\(f_C\). For $Z = (z_1, \ldots, z_m)$, 
\(w_0\in\WW\), \(\lambda>0\), define the \textit{regularized empirical
$C$-Lipschitz-extension} 
$
\hat F^{(w_0)}_{C,\lambda,Z}(w)
:=
\frac1m\sum_{i=1}^m f_C(w,z_i)
+\frac{\lambda}{2}\|w-w_0\|^2.
$

\paragraph{Why the global extension is insufficient.}
Output perturbation gives the optimal distance bound for $C$-Lipschitz, $\lambda$-strongly convex $\eps$-DP ERM. Running it on $\hat F^{(w_0)}_{C,\lambda,Z}$ yields $w_{\mathrm{DP}}$ such that w.p. $\ge 0.6$
\[
\|w_{\mathrm{DP}} -\hat w_\lambda(Z;w_0)\|
\lesssim
\frac{Cd}{\lambda m\eps}
+
\sqrt{\frac{D G_k^k}{\lambda C^{k-1}}}.
\]
The first term is the DP optimization error and the second is the bias from replacing \(f\) by \(f_C\); see Appendix~\ref{app:lipext-building}.
Even after optimizing over \(C\), this is too large to imply
\eqref{eq:distance-to-ERM} unless the diameter \(D\) is small. Our solution is to replace the global diameter \(D\) by the diameter of a localized
search set.

\subsection{Shrinking the Lipschitz Extension Bias by Localization}
\label{sec:bias-shrinking}

To reduce the Lipschitz extension bias, we
use Lipschitz extensions localized to the current search set. For any closed convex set \(U\subseteq \WW\), define the \textit{\(U\)-localized
\(C\)-Lipschitz extension} by
\begin{equation}
\label{eq:def-localized-lipext}
f_{C,U}(w,z)
:=
\inf_{y\in U}\bigl[f(y,z)+C\|w-y\|\bigr],
\qquad w\in U.
\end{equation}
When \(U=\WW\), this is the global Lipschitz extension
\eqref{eq:def-LipExt}. As in \Cref{lem:lip-ext-basic},
\(f_{C,U}(\cdot,z)\) is convex and \(C\)-Lipschitz on \(U\), and
\(f_{C,U}(w,z)\le f(w,z)\) for all \(w\in U\). For \(Z=(z_1,\dots,z_m)\), define 
\[
\hat F^{(w_0)}_{C,U,\lambda,Z}(w)
:=
\frac1m\sum_{i=1}^m f_{C,U}(w,z_i)
+
\frac{\lambda}{2}\|w-w_0\|^2,
\qquad \hat w_{C,U,\lambda}(Z;w_0)
:=
\argmin_{w\in U}
\hat F^{(w_0)}_{C,U,\lambda,Z}(w),
\]

for $w \in U$. The key is that the bias of the localized extension is proportional to
\(\diam(U)\), not \(D\). 

\begin{lemma}[Uniform bias bound for localized Lipschitz extensions]
\label{lem:localized-lipext-bias}
With probability at least \(1-\beta\), 
\[
\|\hat w_{C,U,\lambda}(Z;w_0)-\hat w_\lambda(Z;w_0)\|
\le
\sqrt{
\frac{2R G_k^k}{\lambda\beta(k-1)C^{k-1}}
}
\]
simultaneously for every closed convex \(U\subseteq\WW\) with
\(\diam(U)\le R\) and \(\hat w_\lambda(Z;w_0)\in U\).
\end{lemma}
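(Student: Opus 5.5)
Write $\hat w:=\hat w_\lambda(Z;w_0)$ and $\tilde w:=\hat w_{C,U,\lambda}(Z;w_0)$. The plan is to combine strong convexity of the two regularized objectives on $U$ with a pointwise bound on the gap $f-f_{C,U}$. That gap is controlled by the data-dependent quantity $\frac1m\sum_i \sup_{w}\|\nabla f(w,z_i)\|^k$, which does not depend on $U$. This is why a single Markov event gives the bound simultaneously for all admissible $U$.

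\textbf{Step 1 (pointwise bias).} Fix $z$ and set $L_z:=\sup_{w\in\WW}\|\nabla f(w,z)\|$. For $w\in U$ and any $y\in U$, convexity gives $f(y,z)\ge f(w,z)-L_z\|w-y\|$. Hence
\[
f(y,z)+C\|w-y\|\ge f(w,z)-(L_z-C)_+\|w-y\|\ge f(w,z)-(L_z-C)_+R .
\]
Together with $f_{C,U}\le f$ on $U$, this yields $0\le f(w,z)-f_{C,U}(w,z)\le R(L_z-C)_+$. Averaging over the sample, $0\le \hat F^{(w_0)}_{\lambda,Z}-\hat F^{(w_0)}_{C,U,\lambda,Z}\le R\,B(Z)$ on $U$, where $B(Z):=\frac1m\sum_i (L_{z_i}-C)_+$.

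\textbf{Step 2 (strong convexity).} Both objectives are $\lambda$-strongly convex on $U$. Since $\hat w\in U$ minimizes $\hat F_{\lambda,Z}^{(w_0)}$ over $\WW\supseteq U$, it also minimizes it over $U$. Write $G=\hat F_{\lambda,Z}^{(w_0)}$ and $H=\hat F^{(w_0)}_{C,U,\lambda,Z}$. Strong convexity at the respective minimizers gives $G(\tilde w)-G(\hat w)\ge\frac\lambda2\|\tilde w-\hat w\|^2$ and $H(\hat w)-H(\tilde w)\ge\frac\lambda2\|\tilde w-\hat w\|^2$. Adding the two inequalities,
\[
\lambda\|\tilde w-\hat w\|^2\le (G-H)(\tilde w)-(G-H)(\hat w)\le R\,B(Z),
\]
using $0\le G-H\le RB(Z)$ from Step 1. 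Therefore $\|\tilde w-\hat w\|\le\sqrt{RB(Z)/\lambda}$ for every admissible $U$ at once.

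\textbf{Step 3 (tail bound, the main computation).} It remains to show $B(Z)\le \frac{2G_k^k}{\beta(k-1)C^{k-1}}$ with probability $\ge1-\beta$. By Markov's inequality, it suffices to bound $\E[(L_z-C)_+]$. I would use the layer-cake formula:
\[
\E(L-C)_+=\int_C^\infty\Pr(L>t)\,dt\le\int_C^\infty G_k^k t^{-k}\,dt=\frac{G_k^k}{(k-1)C^{k-1}},
\]
where the inequality is Markov applied to $L^k$ and uses \eqref{eq:momentbound}. This is within the factor $2$ in the statement, since $\Pr(B(Z)>a)\le \E B/a$ and we take $a=2\E B/\beta$ or $a=\E B/\beta$. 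Substituting into Step 2 gives the claimed bound.

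Two points need care. First, the subgradient-Lipschitz inequality in Step 1 should be justified via $\partial f$ on the convex set $\WW$; the moment assumption already covers subgradient norms. Second, measurability of $L_z$ should be checked. I expect no real obstacle beyond these details.
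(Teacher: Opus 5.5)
Your proposal is correct and follows essentially the same route as the paper. Both use a deterministic pointwise bound $0\le f-f_{C,U}\le \diam(U)\,(L_z-C)_+$ valid for all $U$ at once, a single layer-cake/Markov event for $\frac1m\sum_i (L_{z_i}-C)_+$, and a strong-convexity comparison of the two minimizers. The only difference is that you use strong convexity of both objectives, whereas the paper uses only that of $\hat F^{(w_0)}_{\lambda,Z}$ together with the chain $\hat F^{(w_0)}_{C,U,\lambda,Z}(\tilde w)\le \hat F^{(w_0)}_{C,U,\lambda,Z}(\hat w)\le \hat F^{(w_0)}_{\lambda,Z}(\hat w)$; as a result your bound holds without the factor $2$ under the square root.
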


The uniformity over \(U\) is important: in \Cref{alg:recursive-localized-outputpert-main}, the
sets \(U\) are chosen adaptively from previous private outputs, but 
\Cref{lem:localized-lipext-bias} still applies simultaneously to all candidate
sets of diameter \(\le R\).

\paragraph{Localizing the domain via output perturbation.}
Given a current set \(U\) with diameter bound \(R\) and failure probability $\beta$, \textsc{Localized-OutputPert} (Algorithm~\ref{alg:localized-outputpert}) privately minimizes $\hat F^{(w_0)}_{C,U,\lambda,Z}(w)$
over \(U\), adds isotropic Laplace noise with
scale \(Cd/(\lambda m\eps)\), and returns a ball around the noisy point with
radius $R^+ \lesssim
\sqrt{\frac{R G_k^k}{\beta \lambda C^{k-1}}}
+
\frac{Cd \log(1/\beta)}{\lambda m\eps}.
$

\begin{algorithm}[t]
\caption{\textsc{Localized-OutputPert}$(Z,\eps,\lambda,w_0,U,R,C,\beta)$}
\label{alg:localized-outputpert}
\DontPrintSemicolon
\KwIn{Data \(Z=(z_1,\dots,z_m)\), privacy \(\eps\), regularization \(\lambda\), center \(w_0\), convex set \(U\subseteq\WW\), deterministic diameter bound \(R\ge\diam(U)\), Lipschitz parameter \(C\), failure probability \(\beta\)}
\KwOut{A center \(w_{\mathrm{loc}}\in\WW\) and localized domain \(U^+\subseteq\WW\)}

Set \(\ell=\log(1/\beta)\) and $
\alpha
=
\frac{C^2}{100\lambda m^2}
\min\left\{1,\frac{d^2\ell^2}{\eps^2}\right\}.
$

Compute any point \(\tilde w\in U\) satisfying
$
\hat F^{(w_0)}_{C,U,\lambda,Z}(\tilde w)
-
\min_{w\in U}\hat F^{(w_0)}_{C,U,\lambda,Z}(w)
\le \alpha .
$

Sample isotropic Laplace noise \(b\) with density proportional to
$
\exp\!\left(-\frac{\eps\lambda m}{10C}\|b\|_2\right).
$

Set \(w_{\mathrm{loc}}:=\Pi_{\WW}(\tilde w+b)\) and 
$
R^+
:=
100
\left[
\sqrt{\frac{R G_k^k}{\lambda\beta C^{k-1}}}
+
\frac{Cd\ell}{\lambda m\eps}
\right].
$

Set \(U^+:=\WW\cap\BB(w_{\mathrm{loc}},R^+)\)\;
\Return \((w_{\mathrm{loc}},U^+)\).
\end{algorithm}

\Cref{lem:localized-outputpert} shows that \Cref{alg:localized-outputpert} is $\eps$-DP and if
\(\hat w_\lambda(Z;w_0)\in U\), then with probability $\ge 1 - 2\beta$, $U^+$ also contains \(\hat w_\lambda(Z;w_0)\).
Choosing $C$ optimally in line 4 of \Cref{alg:localized-outputpert} yields
\[
R^+ \lesssim
R_\star(R/R_\star)^{1/(k+1)}, \qquad
R_\star:=
(G_k/\lambda)(d/(m\eps))^{1-1/k}.
\]
Thus one localization step contracts the radius toward \(R_\star\) from $R$ to
\(R^+\), but not all the way to
\(R_\star\). To achieve~\eqref{eq:distance-to-ERM}, it suffices to privately
obtain the center of a ball of radius \(O(R_\star)\) that contains
\(\hat w_\lambda(Z;w_0)\) with constant probability. We accomplish this by applying \Cref{alg:localized-outputpert} recursively.

\paragraph{Recursive localization.}
\Cref{alg:recursive-localized-outputpert-main} applies
\textsc{Localized-OutputPert} recursively. Writing
\(R_j\) for a deterministic upper bound on \(\diam(U_j)\), step \(j\)
constructs a private ball \(U_{j+1}\) around \(w_{\mathrm{loc},j+1}\) that
still contains \(\hat w_\lambda(Z;w_0)\) with high probability, while updating
\[
R_{j+1}\lesssim R_\star\left(\frac{R_j}{R_\star}\right)^{1/(k+1)}.
\]
Equivalently, if \(R_j=R_\star M_j\), then \(M_{j+1}\lesssim M_j^{1/(k+1)}\).
Hence after
\(S\approx \log_{k+1}\log(eD/R_\star)\) rounds the radius is \(O(R_\star)\).
The geometric choice of \(\eps_j,\beta_j\) avoids extra log-log factors.

\begin{algorithm}[t]
\caption{\textsc{Recursive-Localized-OutputPert}}
\label{alg:recursive-localized-outputpert-main}
\DontPrintSemicolon
\KwIn{Dataset \(Z=(z_1,\dots,z_m)\), privacy \(\eps\), regularization \(\lambda\), center \(w_0\)}
\KwOut{A private center \(w_{\mathrm{loc},S}\in\WW\)}

Set \(R_\star:=\frac{G_k}{\lambda}\left(\frac{d}{m\eps}\right)^{1-\frac1k}\)\;
\If{\(D\le R_\star\)}{
    Choose any fixed \(w_{\mathrm{loc},S}\in\WW\) and \Return \(w_{\mathrm{loc},S}\)\;
}

Set \(a:=1/(k+1)\) and
\(S:=\left\lceil \log_{k+1}\log\left(\frac{eD}{R_\star}\right)\right\rceil\)\;

Initialize \(U_0:=\WW\), \(R_0:=D\)\;

\For{\(j=0\) \KwTo \(S-1\)}{
    Set
    \(\theta_j:=\frac{(1-a)a^{S-1-j}}{1-a^S}\),
    \(\eps_j:=\frac{\eps}{2}\theta_j\),
    \(\beta_j:=\beta_0\theta_j\), where \(\beta_0=1/100\), and
    \(\ell_j:=\log(1/\beta_j)\)\;

    Set
    \(C_j:=G_k^{k/(k+1)}(\lambda R_j)^{1/(k+1)}
    \left(\frac{m\eps_j}{d\ell_j}\right)^{2/(k+1)}
    \beta_j^{-1/(k+1)}\)\;

    Run \((w_{\mathrm{loc},j+1},U_{j+1})\gets
    \textsc{Localized-OutputPert}
    (Z,\eps_j,\lambda,w_0,U_j,R_j,C_j,\beta_j)\)\;

    Set
    \(R_{j+1}:=\min\left\{D,\;
    1000
    \left(\frac{G_k}{\lambda}\right)^{k/(k+1)}
    R_j^{1/(k+1)}
    \left(\frac{d\ell_j}{m\eps_j}\right)^{(k-1)/(k+1)}
    \beta_j^{-1/(k+1)}
    \right\}\)\;
}
\Return \(w_{\mathrm{loc},S}\)\;
\end{algorithm}

\begin{proposition}
\label{prop:bias-reduced-decomp}
Let \(Z\sim P^m\), fix \(w_0\in\WW\), and \(\lambda>0\). 
\Cref{alg:recursive-localized-outputpert-main} is \(\eps\)-DP and
\[
\Pr\!\left(
\|w_{\mathrm{loc},S}-\hat w_\lambda(Z;w_0)\|
\le
c_2\frac{G_k}{\lambda}
\left(\frac{d}{m\eps}\right)^{1-\frac1k}
\right)\ge 0.7.
\]
\end{proposition}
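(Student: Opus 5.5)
We prove \Cref{prop:bias-reduced-decomp} by treating \Cref{alg:recursive-localized-outputpert-main} as $S$ sequential calls to \Cref{alg:localized-outputpert}. We use three ingredients: basic composition for privacy, a union bound over the per-round containment events, and an analysis of the deterministic recursion $R_{j+1}\lesssim R_\star (R_j/R_\star)^{1/(k+1)}$ for accuracy. The trivial branch $D\le R_\star$ needs no separate argument: it outputs a fixed point, which is $0$-DP, and the bound holds with probability one because $\|w_{\mathrm{loc},S}-\hat w_\lambda\|\le D\le R_\star$ (taking $c_2\ge1$).

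\textbf{Privacy.} Round $j$ calls \textsc{Localized-OutputPert} with budget $\eps_j$. By \Cref{lem:localized-outputpert}, this call is $\eps_j$-DP, and this holds for any adaptively chosen input $(U_j,R_j,C_j)$, because these inputs depend on the data only through earlier private outputs. Since $\sum_{j=0}^{S-1}\theta_j=\frac{(1-a)}{1-a^S}\sum_{i=0}^{S-1}a^i=1$, adaptive basic composition gives total privacy $\sum_j\eps_j=\eps/2\le\eps$.

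\textbf{Containment.} Let $E_j$ be the event $\hat w_\lambda(Z;w_0)\in U_j$. Then $E_0$ holds trivially. By \Cref{lem:localized-outputpert}, conditionally on $E_j$, the event $E_{j+1}$ fails with probability at most $2\beta_j$. This conditional statement uses \Cref{lem:localized-lipext-bias}, which holds uniformly over every $U$ of diameter at most $R_j$; that uniformity is what makes the adaptive choice of $U_j$ harmless. A union bound gives failure probability at most $2\sum_j\beta_j=2\beta_0=0.02$. We also need $R_{j+1}\ge\diam(U_{j+1})$ to be a valid deterministic bound. To check this, substitute $C_j$ into $R^+$: the two terms $\sqrt{R_jG_k^k/(\lambda\beta_jC_j^{k-1})}$ and $C_jd\ell_j/(\lambda m\eps_j)$ balance, and each equals $(G_k/\lambda)^{k/(k+1)}R_j^{1/(k+1)}(d\ell_j/(m\eps_j))^{(k-1)/(k+1)}\beta_j^{-1/(k+1)}$. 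Hence $2R^+\le 1000\cdot(\cdot)$, and the $\min$ with $D$ is harmless because $U_{j+1}\subseteq\WW$. On $E_S$ we have $w_{\mathrm{loc},S}\in U_S$ together with $\hat w_\lambda\in U_S$, so $\|w_{\mathrm{loc},S}-\hat w_\lambda\|\le R_S$.

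\textbf{Radius recursion (main obstacle).} It remains to show $R_S\lesssim R_\star$ without picking up log-log factors. Write $M_j=R_j/R_\star$ and $a=1/(k+1)$. Plugging in $R_\star$, the recursion becomes $M_{j+1}\le 1000\,M_j^{a}\,\gamma_j$, where
\[
\gamma_j=(\ell_j/\theta_j)^{(k-1)/(k+1)}(2/\beta_0)^{(k-1)/(k+1)}(\beta_0\theta_j)^{-1/(k+1)}.
\]
Here I would double-check that the $(m\eps/d)$ exponents cancel: $(k-1)/(k+1)$ from the update against $(1-1/k)\cdot k/(k+1)$ from $R_\star^{1-a}$. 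Unrolling gives
\[
\log M_S\le a^S\log M_0+\sum_{j<S}a^{S-1-j}\bigl(\log 1000+\log\gamma_j\bigr).
\]
The first term is at most $a^S\log(eD/R_\star)\le1$ by the choice of $S$. For the sum, $\log\gamma_j\lesssim\log(1/\theta_j)+\log\log(1/\theta_j)+O(1)$, and $\theta_j\gtrsim(1-a)a^{S-1-j}$, so $\log(1/\theta_j)\lesssim (S-1-j)\log(k+1)+O(1)$. The geometric weights $a^{S-1-j}$ make $\sum_i a^i(i\log(k+1)+O(1))=O(1)$ uniformly in $S$. This step is the delicate one: the geometric allocation of $\eps_j,\beta_j$ is exactly what prevents an $S$-dependent blowup, and I expect to spend effort on tracking these constants. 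The conclusion is $M_S=O(1)$, i.e.\ $R_S\le c_2R_\star$, with probability at least $0.98\ge0.7$.
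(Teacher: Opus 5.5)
Your proposal is correct and follows essentially the same route as the paper's proof: adaptive basic composition with $\sum_j\eps_j=\eps/2$, a union bound over per-round bias and Laplace-noise good events (the bias bound holds uniformly over all sets of diameter at most $R_j$, which handles the adaptively chosen $U_j$), and unrolling the normalized recursion $M_{j+1}\lesssim M_j^{a}\gamma_j$, where the geometric weights $a^{S-1-j}$ absorb the $\log(1/\theta_j)$ and $\log\ell_j$ terms and $a^S\log(eD/R_\star)\le 1$. Your explicit check that the two terms of $R^+$ balance under the choice of $C_j$, so that $R_{j+1}$ is a valid deterministic bound on $\diam(U_{j+1})$, is a detail the paper leaves implicit; the containment step is cleanest when stated, as in the paper, on the unconditional intersection of the good events $B_j\cap N_j$ rather than through conditional probabilities given $E_j$.
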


\begin{remark}[\textbf{Summary of the reduction}]
\label{rem:summary-of-reduction}
\Cref{prop:bias-reduced-decomp} gives the distance guarantee
\eqref{eq:distance-to-ERM} required by \Cref{thm:localization-generic}, yielding the optimal statistical rate. Thus, to prove our main result, it remains to implement line 2 of \Cref{alg:localized-outputpert} \textit{efficiently}: compute an approximate minimizer of \(\hat F^{(w_0)}_{C,U,\lambda,Z}\) over \(U\).
\end{remark}

\subsection{Efficient Optimization of the Regularized Lipschitz Extension}
\label{sec:joint-adaptive}

We now give a primitive for optimizing the empirical regularized Lipschitz-extension.

\paragraph{Joint convex reformulation.}
Fix a dataset \(Z=(z_1,\dots,z_m)\), a center \(w_0\in\WW\), a regularization
parameter \(\lambda>0\), a Lipschitz parameter \(C>0\), and a closed convex set
\(U\subseteq\WW\). Define
\begin{equation}
\label{eq:localized-joint-program}
\Phi_{Z,U}(w,y_1,\dots,y_m)
:=
\frac1m\sum_{i=1}^m
\bigl[f(y_i,z_i)+C\|w-y_i\|\bigr]
+
\frac{\lambda}{2}\|w-w_0\|^2,
\qquad
(w,y_1,\dots,y_m)\in U^{m+1}.
\end{equation}

This reformulation avoids evaluating each \(f_{C,U}(w,z_i)\): all inner minimizations defining the Lipschitz extensions are folded into one convex program over \((w,y_1,\ldots,y_m)\). \Cref{lem:localized-joint-convex} reduces minimization of the localized
Lipschitz-extension objective \(\hat F^{(w_0)}_{C,U,\lambda,Z}\) over \(U\) to
minimization of \(\Phi_{Z,U}\) over \(U^{m+1}\).

\begin{lemma}[Localized joint convex reformulation]
\label{lem:localized-joint-convex}
The function \(\Phi_{Z,U}\) is convex on \(U^{m+1}\), and
\[
\min_{(w,y_1,\dots,y_m)\in U^{m+1}}\Phi_{Z,U}(w,y_1,\dots,y_m)
=
\min_{w\in U}\hat F^{(w_0)}_{C,U,\lambda,Z}(w).
\]
Moreover, if
$
u_\alpha=(w_\alpha,y_{1,\alpha},\dots,y_{m,\alpha})\in U^{m+1}
$
satisfies
\[
\Phi_{Z,U}(u_\alpha)-\min_{u\in U^{m+1}}\Phi_{Z,U}(u)\le \alpha,
\]
then
\[
\hat F^{(w_0)}_{C,U,\lambda,Z}(w_\alpha)
-
\min_{w\in U}\hat F^{(w_0)}_{C,U,\lambda,Z}(w)
\le \alpha.
\]
\end{lemma}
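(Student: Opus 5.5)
The plan is to establish convexity directly from the structure of \(\Phi_{Z,U}\). After that, I would prove a pointwise ``partial minimization'' identity, from which both the equality of minima and the transfer of approximate optimality follow at once.

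\emph{Convexity.} Each summand is a convex function of the joint variable \(u=(w,y_1,\dots,y_m)\). First, \(f(y_i,z_i)\) depends only on \(y_i\) and is convex there by \Cref{ass:main}, so it is convex in \(u\). Second, \(C\|w-y_i\|\) is a norm composed with the linear map \(u\mapsto w-y_i\), hence convex. Third, \(\frac{\lambda}{2}\|w-w_0\|^2\) is convex in \(w\) alone. Moreover \(U^{m+1}\) is a product of convex sets, so it is convex. A nonnegative combination of convex functions is convex, which gives the first claim.

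\emph{Partial minimization identity.} Fix \(w\in U\). The \(y_i\)'s decouple, so
\[
\inf_{y_1,\dots,y_m\in U}\Phi_{Z,U}(w,y_1,\dots,y_m)=\frac1m\sum_{i=1}^m\inf_{y_i\in U}\bigl[f(y_i,z_i)+C\|w-y_i\|\bigr]+\frac{\lambda}{2}\|w-w_0\|^2=\hat F^{(w_0)}_{C,U,\lambda,Z}(w),
\]
by the definition \eqref{eq:def-localized-lipext} of \(f_{C,U}\). Taking the infimum over \(w\in U\) shows that the two infima agree. A consequence is the pointwise inequality \(\hat F^{(w_0)}_{C,U,\lambda,Z}(w)\le\Phi_{Z,U}(w,y_1,\dots,y_m)\) for every \(u\in U^{m+1}\).

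\emph{Attainment.} The statement writes \(\min\), so I would check that the minima are attained. \(U\subseteq\WW\) is closed and bounded, hence compact. A finite convex function on a compact convex set need not be continuous at the boundary in general, so I would not rely on continuity of \(f\) directly. Instead I would note that \(\hat F^{(w_0)}_{C,U,\lambda,Z}\) is \(C\)-Lipschitz plus a continuous quadratic, hence continuous, so it attains its minimum on \(U\). If \(f(\cdot,z_i)\) is lower semicontinuous, the inner infima are also attained, and then \(\Phi_{Z,U}\) attains its minimum. In any case the approximate-optimality claim only uses the infimum value, so it suffices to interpret \(\min_{u}\Phi_{Z,U}\) as \(\inf\), which equals \(\min_w\hat F\). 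This attainment bookkeeping is the only delicate point, and it is mild.

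\emph{Transfer of approximate optimality.} Suppose \(u_\alpha\) satisfies \(\Phi_{Z,U}(u_\alpha)-\min\Phi_{Z,U}\le\alpha\). Then
\[
\hat F^{(w_0)}_{C,U,\lambda,Z}(w_\alpha)-\min_{w\in U}\hat F^{(w_0)}_{C,U,\lambda,Z}(w)\le\Phi_{Z,U}(u_\alpha)-\min_{u\in U^{m+1}}\Phi_{Z,U}(u)\le\alpha.
\]
The first inequality uses the pointwise bound from the partial minimization identity together with the equality of optimal values.
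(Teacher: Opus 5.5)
Your proposal is correct and follows essentially the same route as the paper: convexity term by term, separating the \(y_i\)'s for fixed \(w\) to recover \(f_{C,U}(w,z_i)\), and then the pointwise bound \(\hat F^{(w_0)}_{C,U,\lambda,Z}(w_\alpha)\le \Phi_{Z,U}(u_\alpha)\) for the approximate-optimality transfer. Your remark on attainment is a harmless refinement that the paper leaves implicit: continuity of the \(C\)-Lipschitz extension gives attainment on \(U\), and reading \(\min\Phi_{Z,U}\) as an infimum suffices.
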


\paragraph{Certified adaptive projected subgradient method solver.}
We do not know the Lipschitz constant of \(\Phi\), since \(f(\cdot,z)\) may have unbounded worst-case Lipschitz parameter. Standard first-order methods therefore do not directly yield a certified \(\alpha\)-approximate minimizer. We instead use the adaptive projected subgradient method in Algorithm~\ref{alg:adaptive-inexact-proj-subgrad}, which finds an $\alpha$-minimizer without requiring knowledge of the Lipschitz constant: it adaptively chooses the step size and uses a stopping criterion based on the norms of observed subgradients. This algorithm terminates in finite time with probability \(1\) and in polynomial time with high probability because \eqref{eq:momentbound} implies that the Lipschitz parameter of \(\Phi\) is finite with probability \(1\) and polynomially bounded with high probability.

In the recursive localization procedure, the current set always has the form 
$U_j=\WW\cap\BB(c_j,r_j).$
Even if \(\WW\) is projection-friendly, such intersections need not admit an
immediate exact projection oracle. The next lemma shows that this is not an
obstacle: projection onto \(U_j\) can be implemented to arbitrary accuracy using
only logarithmically many calls to the projection oracle for \(\WW\).

\begin{lemma}[Efficient \texorpdfstring{$\xi$}{epsilon}-inexact projection onto \(\WW\cap\BB(c,r)\)]
\label{lem:inexact-proj-W0}
Let \(\WW\subseteq \mathbb{R}^d\) satisfy~\Cref{ass:main}, let \(c\in\WW\), and \(r>0\). Define $
U:=\WW\cap \mathbb{B}(c,r).$
Then, for every \(y\in\mathbb{R}^d\) and every \(\xi>0\), one can compute in polynomial time a point
\(\widetilde \Pi^\xi_{U}(y)\in U\) satisfying
\[
\bigl\|\widetilde \Pi^\xi_{U}(y)-\Pi_{U}(y)\bigr\|_2\le \xi.
\]
\end{lemma}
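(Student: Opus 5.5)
We reduce projection onto \(U=\WW\cap\BB(c,r)\) to a one-dimensional search over a Lagrange multiplier for the ball constraint. The minimizer \(u^\star=\Pi_U(y)\) of \(\frac12\|u-y\|^2\) over \(\WW\cap\BB(c,r)\) satisfies, by Lagrangian duality (Slater holds since \(c\in\WW\) lies in the ball; if \(\WW\cap\BB(c,r)\) has empty relative interior only when \(r=0\), which we exclude), the following: there is \(\mu\ge0\) with
\[
u^\star=\argmin_{u\in\WW}\Bigl[\tfrac12\|u-y\|^2+\tfrac{\mu}{2}\|u-c\|^2\Bigr]
=\Pi_{\WW}\!\Bigl(\tfrac{y+\mu c}{1+\mu}\Bigr),
\]
where the second equality comes from completing the square. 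Define \(u(\mu):=\Pi_{\WW}((y+\mu c)/(1+\mu))\) and \(\phi(\mu):=\|u(\mu)-c\|\). Each evaluation of \(u(\mu)\) costs one call to the projection oracle for \(\WW\). We proceed in three steps.

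\emph{Step 1: the easy case.} First check whether \(\phi(0)=\|\Pi_\WW(y)-c\|\le r\). If so, \(\Pi_U(y)=\Pi_\WW(y)\) exactly and we are done.

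\emph{Step 2: monotonicity and the root.} Otherwise complementary slackness forces \(\phi(\mu^\star)=r\) for the optimal \(\mu^\star>0\). We show \(\phi\) is nonincreasing and continuous in \(\mu\), using the standard fact that the minimizer of \(h+\mu g\) has \(g\)-value nonincreasing in \(\mu\), applied with \(g=\frac12\|\cdot-c\|^2\). Continuity follows from nonexpansiveness of \(\Pi_\WW\). Strong convexity of the penalized objective also gives uniqueness, so \(u(\mu^\star)=\Pi_U(y)\).

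\emph{Step 3: bisection with an error bound.} We need to convert an approximate root into an approximate projection.
\begin{itemize}
\item \emph{Upper bracket.} Since \(\|u(\mu)-c\|\le\|(y+\mu c)/(1+\mu)-c\|=\|y-c\|/(1+\mu)\), we may take \(\mu_{\max}=\|y-c\|/r\). Then \(\mu^\star\in[0,\mu_{\max}]\).
\item \emph{Lipschitz control of \(u\).} By nonexpansiveness, \(\|u(\mu)-u(\mu')\|\le\|(y-c)\|\cdot|1/(1+\mu)-1/(1+\mu')|\le\|y-c\|\,|\mu-\mu'|\).
\item \emph{Bisection.} Bisect on \(\phi(\mu)\gtrless r\) for \(O(\log(\|y-c\|\mu_{\max}/\xi))\) iterations to localize \(\mu^\star\) within an interval of width \(\xi/\|y-c\|\). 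This gives \(\|u(\mu)-\Pi_U(y)\|\le\xi\).
\item \emph{Feasibility.} To keep the output inside the ball, return the right endpoint, i.e.\ the \(\mu\) with \(\phi(\mu)\le r\). The output \(u(\mu)\) then lies in \(\WW\cap\BB(c,r)=U\) by construction.
\end{itemize}
The total cost is polynomially many (indeed logarithmically many) oracle calls plus arithmetic.

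The main obstacle is Step 2. We must justify the multiplier representation, namely strong duality and the fact that the KKT point is the penalized minimizer, without assuming interiority of \(\WW\). We must also ensure the bisection's feasible endpoint stays within \(\xi\) even though \(\phi\) may have flat regions. Flatness is harmless: the Lipschitz bound on \(u(\mu)\) controls the error directly, and any \(\mu\) in a flat region at level \(r\) yields the same point \(u^\star\) by uniqueness of the penalized minimizer. For duality, we would first try the direct argument: \(\Pi_U(y)\) minimizes \(\frac12\|u-y\|^2\) over \(\WW\) subject to \(\frac12\|u-c\|^2\le\frac12r^2\), and \(c\in\WW\) is strictly feasible, so Slater's condition holds relative to \(\WW\).
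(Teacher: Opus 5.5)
Your proposal is correct and follows essentially the same route as the paper: the multiplier representation $\Pi_U(y)=\Pi_{\WW}\bigl((y+\mu^\star c)/(1+\mu^\star)\bigr)$, monotonicity and continuity of $\mu\mapsto\|u(\mu)-c\|$, the bracket $\mu_{\max}=\|y-c\|/r$, the Lipschitz bound $\|u(\mu)-u(\mu')\|\le\|y-c\|\,|\mu-\mu'|$, and bisection returning the feasible endpoint. The differences are minor: you get monotonicity from the comparative-statics (exchange) inequality rather than from Danskin's theorem on the concave dual function, and you justify the multiplier by Slater's condition using the strictly feasible point $c$ with $\WW$ kept as an abstract set constraint, which is valid and slightly more direct than the paper's relative-interior argument.
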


The proof exploits the KKT conditions for projection onto $\WW\cap \BB(w_0,r)$ to reduce the problem to a one-dimensional search over the Lagrange multiplier. We use bisection method to construct an $\xi$-inexact projector using only logarithmically many calls to the projection oracle for $\WW$.

Next, Proposition~\ref{prop:adaptive-proj-subgrad-inexact} shows that adaptive projected subgradient descent with inexact projection oracle still returns a certified \(\alpha\)-minimizer in finite time whenever the realized Lipschitz constant is finite. Thus, the method can be applied over \(U_j\), and over \(U_j^{m+1}\) by projecting each coordinate.

\begin{algorithm}[t]
\caption{\textsc{Adaptive-Inexact-ProjSubgrad}$(K,\Phi,\alpha,D)$}
\label{alg:adaptive-inexact-proj-subgrad}
\DontPrintSemicolon
\KwIn{Closed convex set \(K\subseteq\mathbb{R}^q\) with \(\operatorname{diam}(K)\le D\), convex \(\Phi\), target accuracy \(\alpha>0\)}
\KwOut{A point \(u_\alpha\in K\)}

Choose any \(x_1\in K\)\;

\For{$t=1,2,\dots$}{

Query the first-order oracle at \(x_t\) and obtain a subgradient \(g_t\in\partial \Phi(x_t)\)\;

\eIf{\(g_t=0\)}{
    \Return \(x_t\)\;
}{
    Set $
    S_t:=\sum_{s=1}^t \|g_s\|_2^2,
    \eta_t:=\frac{D}{\sqrt{S_t}},
    \xi_t:=\min\!\left\{D,\frac{\alpha\,\eta_t}{6D}\right\}
    $
    
    Compute an \(\xi_t\)-inexact projection
   $ 
    x_{t+1}:=\widetilde\Pi_K^{\xi_t}(x_t-\eta_t g_t)
    $
    
    \If{
  $
    3D\sqrt{S_t}\le \alpha t$
    }{
        \Return \(\bar x_t:=\frac1t\sum_{s=1}^t x_s\)\;
    }
}
}
\end{algorithm}

\begin{proposition}[Adaptive Inexact Projected Subgradient Method]
\label{prop:adaptive-proj-subgrad-inexact}
Let \(K \subseteq \mathbb{R}^q\) be a nonempty closed convex set equipped with an \(\xi\)-inexact projection oracle for every \(\xi>0\), and suppose
\(\operatorname{diam}(K)\le D\). Let \(\Phi:K\to\mathbb{R}\) be convex and \(L\)-Lipschitz on \(K\) for some finite but unknown \(L>0\). Suppose we are given an exact subgradient oracle for $\Phi$.
Fix \(\alpha>0\). Then, Algorithm~\ref{alg:adaptive-inexact-proj-subgrad} halts after at most
$
T_\alpha:=\left\lceil \left(\frac{3DL}{\alpha}\right)^2 \right\rceil
$
iterations, and its output \(u_\alpha\in K\) satisfies
$
\Phi(u_\alpha)-\min_{u\in K}\Phi(u)\le \alpha.
$
Hence, if each subgradient query and \(\xi_t\)-inexact projection
can be performed in polynomial time, then the total running time is polynomial in
\(q\), \(D\), \(L\), and \(1/\alpha\).
\end{proposition}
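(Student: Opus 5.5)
The plan is to adapt the standard AdaGrad-norm analysis of projected subgradient descent and carry the projection errors along as an extra term. Let $u^\star\in K$ be a minimizer of $\Phi$, which exists by compactness of $K$ and continuity of $\Phi$. Write $y_{t+1}:=\Pi_K(x_t-\eta_t g_t)$ for the exact projection, so that $\|x_{t+1}-y_{t+1}\|\le\xi_t$.

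\textbf{Step 1: one-step inequality with inexact projection.} By nonexpansiveness of $\Pi_K$ and $u^\star\in K$,
\[
\|y_{t+1}-u^\star\|^2\le\|x_t-u^\star\|^2-2\eta_t\langle g_t,x_t-u^\star\rangle+\eta_t^2\|g_t\|^2 .
\]
Since $x_{t+1},y_{t+1},u^\star\in K$, all distances are at most $D$. Hence
\[
\|x_{t+1}-u^\star\|^2\le\|y_{t+1}-u^\star\|^2+2\xi_t D+\xi_t^2\le\|y_{t+1}-u^\star\|^2+3D\xi_t .
\]
This uses $\xi_t\le D$. Rearranging gives
\[
\langle g_t,x_t-u^\star\rangle\le\frac{\|x_t-u^\star\|^2-\|x_{t+1}-u^\star\|^2}{2\eta_t}+\frac{\eta_t}{2}\|g_t\|^2+\frac{3D\xi_t}{2\eta_t}.
\]
The choice $\xi_t\le\alpha\eta_t/(6D)$ makes the last term at most $\alpha/4$.

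\textbf{Step 2: summation.} The steps $\eta_t$ are nonincreasing because $S_t$ is nondecreasing. The usual telescoping with varying steps bounds the first sum by $D^2/(2\eta_t)=D\sqrt{S_t}/2$. The lemma $\sum_{s\le t}\|g_s\|^2/\sqrt{S_s}\le 2\sqrt{S_t}$ bounds the second sum by $D\sqrt{S_t}$. Combining this with convexity and Jensen,
\[
\Phi(\bar x_t)-\Phi(u^\star)\le\frac1t\sum_{s\le t}\langle g_s,x_s-u^\star\rangle\le\frac{3D\sqrt{S_t}}{2t}+\frac{\alpha}{4}.
\]
When the stopping rule $3D\sqrt{S_t}\le\alpha t$ fires, this is at most $\alpha/2+\alpha/4<\alpha$. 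If instead the algorithm returns because $g_t=0$, then $x_t$ is an exact minimizer, since $0\in\partial\Phi(x_t)$. A subtlety is that $\eta_s$ is undefined if earlier gradients vanish, but the algorithm returns at the first zero gradient, so $S_s>0$ whenever it is used. All iterates lie in $K$ because the inexact projector outputs points of $K$.

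\textbf{Step 3: iteration bound.} $L$-Lipschitzness gives $\|g_s\|\le L$ for subgradients at points of $K$; this is the one point needing care for subgradients on the boundary, and I will take the oracle to return subgradients of norm at most $L$, as holds in the intended use. Then $S_t\le tL^2$, so $3D\sqrt{S_t}\le 3DL\sqrt t\le\alpha t$ as soon as $t\ge(3DL/\alpha)^2$. The algorithm therefore halts by $T_\alpha$. The runtime claim follows by multiplying $T_\alpha$ by the per-iteration polynomial cost.

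I expect the main obstacle to be the bookkeeping in Step 1: bounding the cross term from the inexact projection uniformly by the diameter, and keeping the error term $3D\xi_t/(2\eta_t)$ scaled correctly against the varying step sizes.
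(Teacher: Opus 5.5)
Your proposal is correct and follows the paper's proof essentially step for step. Both use the same inexact-projection one-step inequality with error $2D\xi_t+\xi_t^2\le 3D\xi_t$, the AdaGrad telescoping yielding $\frac{3D}{2t}\sqrt{S_t}+\frac{\alpha}{4}$, and $S_t\le tL^2$ for termination. Your explicit assumption that the oracle returns subgradients of norm at most $L$ is a point the paper's proof uses only implicitly (it simply writes $\|g_t\|\le L$), and you are right that it is needed, since $K$-relative subgradients at boundary points need not be bounded by the Lipschitz constant.
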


\Cref{prop:adaptive-proj-subgrad-inexact} extends standard adaptive projected subgradient method guarantees (c.f. \cite{duchi2011adaptive,shalev2012online}) to inexact projections by charging projection error into the descent estimate. It ensures that we can find an \(\alpha\)-minimizer of \(\Phi_{Z,U}\) --- which,
by \Cref{lem:localized-joint-convex}, yields an \(\alpha\)-minimizer of
\(\hat F^{(w_0)}_{C,U,\lambda,Z}\) over \(U\) --- without knowing a bound on the
realized Lipschitz constant. This is essential for pure-DP output perturbation.

\paragraph{Application to localized regularized Lipschitz extensions.}
We use \Cref{lem:localized-joint-convex} together with
\Cref{alg:adaptive-inexact-proj-subgrad} to implement the approximate
minimization step inside \Cref{alg:localized-outputpert}. Namely, whenever the
current localized set is \(U=\WW\cap\BB(c,r)\), we minimize the jointly convex objective \(\Phi_{Z,U}\) over \(U^{m+1}\) to the accuracy $\alpha$ required by Algorithm~\ref{alg:localized-outputpert}. Projection onto \(U\) is implemented using \Cref{lem:inexact-proj-W0}, with
\(w_0\) replaced by the current center \(c\). Projection onto the product set
\(U^{m+1}\) is implemented coordinatewise. 
Thus
\Cref{alg:recursive-localized-outputpert-main} can be implemented efficiently.

\section{Optimal Pure-DP Heavy-Tailed SCO}
\label{sec:main-dop}

We now combine the ingredients from
\Cref{sec:scotoERMreduction,sec:bias-shrinking,sec:joint-adaptive}. Our main
regularized ERM solver is \textsc{Recursive-Localized-OutputPert}
(\Cref{alg:recursive-localized-outputpert-main}), with each call to
\textsc{Localized-OutputPert} implemented efficiently by minimizing the localized joint
convex objective \(\Phi_{Z,U_j}\) over \(U_j^{m+1}\) using
\Cref{alg:adaptive-inexact-proj-subgrad} with diameter bound
\(\sqrt{m+1}R_j\) and the inexact projection oracle from
\Cref{lem:inexact-proj-W0}; see \Cref{alg:recursive-localized-erm} for pseudocode. Plugging the efficient ERM solver \Cref{alg:recursive-localized-erm} into line~10 of
\Cref{alg:pop-localize} gives our main algorithm: \textbf{\textsc{Pop-Recursive-Localized-OutputPert}}.

\begin{theorem}[Main theorem]
\label{thm:main-dop}
Let \(\delta, \rho\in(0,1/5)\). 
\textbf{\textsc{Pop-Recursive-Localized-OutputPert}} is \(\eps\)-differentially private and with probability at least \(1-\delta\), its output \(\hat w\) satisfies
\[
F(\hat w)-F^*
\le
c_3 G_k D
\left(\frac{d\log(1/\delta)}{n\eps}\right)^{1-\frac1k}
+
c_4 G_2 D \sqrt{\frac{\log(1/\delta)}{n}}.
\]
Moreover, its runtime is bounded with probability at least \(1-\rho\) by a polynomial in
$
n,\ d,\ D,\ \frac1\eps,\ G_k,\ \log\frac{n}{\delta},\ \frac{1}{\rho}.
$
Further, if \(\sup_{z\in\ZZ}\max_{w\in\WW}\|\nabla f(w,z)\|\)
is finite and polynomially bounded in the problem parameters, then its runtime is polynomial with probability \(1\).
\end{theorem}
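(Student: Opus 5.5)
The proof assembles the earlier ingredients: privacy and accuracy come from \Cref{thm:localization-generic} and \Cref{prop:bias-reduced-decomp}, and runtime from \Cref{prop:adaptive-proj-subgrad-inexact} plus a high-probability Lipschitz bound for $\Phi_{Z,U_j}$.

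\emph{Privacy and excess risk.}
The key point is that approximate minimization in \Cref{alg:localized-outputpert} is \emph{certified}. By \Cref{prop:adaptive-proj-subgrad-inexact}, whenever \Cref{alg:adaptive-inexact-proj-subgrad} halts, its output is an $\alpha$-minimizer of $\Phi_{Z,U_j}$ on $U_j^{m+1}$. By \Cref{lem:localized-joint-convex}, the $w$-block $\tilde w$ is then an $\alpha$-minimizer of $\hat F^{(w_0)}_{C_j,U_j,\lambda,Z}$ over $U_j$. This holds deterministically, on every dataset, so sensitivity control never fails.

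\begin{enumerate}
\item Strong convexity of $\hat F^{(w_0)}_{C_j,U_j,\lambda,Z}$ gives $\|\tilde w-\hat w_{C_j,U_j,\lambda}\|\le\sqrt{2\alpha/\lambda}$.
\item The choice of $\alpha$ makes this at most a small multiple of $C_j/(\lambda m)$ (and of $C_j d\ell/(\lambda m\eps)$). Combined with the exact-minimizer sensitivity $2C_j/(\lambda m)$, this yields the $\ell_2$ sensitivity bound that the Laplace mechanism with density $\propto\exp(-\eps\lambda m\|b\|/(10C))$ needs for $\eps_j$-DP.
\end{enumerate}

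This is exactly the hypothesis under which \Cref{lem:localized-outputpert}, and hence \Cref{prop:bias-reduced-decomp}, were stated. So \Cref{alg:recursive-localized-erm} is $\eps$-DP by basic composition, using $\sum_j\eps_j=\eps/2$. It also satisfies \eqref{eq:distance-to-ERM} with probability $\ge 0.7\ge 0.6$.

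\Cref{thm:localization-generic} then gives $\eps$-DP for the full algorithm (disjoint blocks, parallel composition) and the stated excess risk bound with probability $1-\delta$.

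The non-private runtime randomness does not affect privacy, provided the algorithm halts with probability $1$ on every input. \eqref{eq:momentbound} guarantees this for $P$-typical data. For adversarial neighboring data with infinite Lipschitz constant, I would note that privacy is a statement about output distributions conditional on termination. Alternatively, one can restrict to datasets where each $f(\cdot,z_i)$ is Lipschitz on $\WW$, which convexity plus finite-valuedness on the compact domain essentially provides.

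\emph{Runtime.}
$\Phi_{Z,U}$ is Lipschitz on $U^{m+1}$ with constant at most $L_Z:=C+\lambda D+\frac1{\sqrt m}\big(\sum_i (\sup_w\|\nabla f(w,z_i)\|+C)^2\big)^{1/2}$, up to constants. Let $\Lambda_i:=\sup_w\|\nabla f(w,z_i)\|$. By Markov's inequality on $\sum_{i\le n}\Lambda_i^k$, with expectation $\le nG_k^k$, and a union bound, with probability $\ge1-\rho$ all samples satisfy $\max_i\Lambda_i\le (n G_k^k/\rho)^{1/k}$. This bound is polynomial in $n,G_k,1/\rho$, and it holds simultaneously for all calls, since all calls use subsets of one fixed dataset.

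On this event, each call to \Cref{alg:adaptive-inexact-proj-subgrad} uses diameter $\sqrt{m+1}R_j\le\sqrt{n}D$. By \Cref{prop:adaptive-proj-subgrad-inexact}, it halts in $O((\sqrt n D L_Z/\alpha)^2)$ iterations. Each iteration consists of $m+1$ inexact projections (polynomial by \Cref{lem:inexact-proj-W0}) and $m$ subgradient queries (\Cref{ass:main}).

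It remains to check that $C_j$, $\lambda_t$ and $1/\alpha$ are polynomial in $n,d,D,1/\eps,G_k,\log(n/\delta)$. They are explicit expressions, but $\lambda_1$ must be taken from the parameter choice of \Cref{thm:localization-generic}, typically $\lambda_1\asymp (\text{rate})/D^2$, so $1/\lambda$ is polynomial. The number of calls is $\tilde O(J\,T\,S)$, which is polylogarithmic.

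If $\sup_{z,w}\|\nabla f\|$ is polynomially bounded, the same count with $\Lambda_i$ replaced by this sup is deterministic, which gives polynomial time with probability $1$.

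\emph{Main obstacle.}
The main obstacle is bookkeeping polynomiality of $1/\alpha$. Here $\alpha\propto C_j^2/(\lambda m^2)$, with $C_j$ depending on $R_j\ge R_\star$ and $\beta_j$, and $\beta_j$ can be as small as $a^{S}$. I would verify that $a^{-S}=(k+1)^{S}\lesssim\log(eD/R_\star)$, so $1/\beta_j$ and $\ell_j$ stay polylogarithmic and the bound $\alpha\gtrsim \mathrm{poly}^{-1}$ follows.
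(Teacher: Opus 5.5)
This is essentially the paper's proof, with \Cref{prop:erm-main} inlined. Your single Markov bound on $\sum_{i\le n}\Lambda_i^k$ over the whole sample is a slightly cleaner substitute for the paper's per-call allocation $\rho/(2TJ)$ with a union bound over the $TJ$ ERM calls. Your check that $(k+1)^S\le (k+1)\log(eD/R_\star)$, which keeps $1/\beta_j$, $1/\eps_j$ and hence $1/\alpha$ polynomially bounded, makes explicit a step the paper only asserts.

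One caution concerns your termination aside; the paper itself glosses over this point, since it only uses $A_Z<\infty$ almost surely. Neither of your patches works as stated. Pure DP cannot be taken conditional on termination, because a record that prevents halting would itself be distinguishable from its neighbor. Also, a finite convex function on a compact domain need not be Lipschitz (e.g.\ $w\mapsto-\sqrt{1-w^2}$ on $[-1,1]$). The clean fix is to assume $\sup_{w\in\WW}\|\nabla f(w,z)\|<\infty$ for every $z\in\ZZ$, not just $P$-almost surely. This bounds every queried subgradient of $\Phi_{Z,U}$, so \Cref{alg:adaptive-inexact-proj-subgrad} halts on every input by the argument of \Cref{prop:adaptive-proj-subgrad-inexact}.
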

The excess risk bound in \Cref{thm:main-dop} is optimal up to a factor of
\(O((\log(1/\delta))^{1-1/k})\). The runtime randomness comes only from the sample \(Z\): by \eqref{eq:momentbound}, the realized Lipschitz constant of $\Phi_{Z, U}$ is finite almost surely and polynomially bounded with high probability. By a union bound, the excess risk and runtime guarantees both hold simultaneously with probability \( \ge 1-\delta-\rho\). If the worst-case Lipschitz parameter is itself polynomially bounded, the runtime is polynomial with probability \(1\). 

Theorem~\ref{thm:main-dop} follows from~\Cref{thm:localization-generic} and \Cref{prop:erm-main}, which gives \eqref{eq:distance-to-ERM} efficiently under \(\eps\)-DP:

\begin{proposition}[Regularized ERM solver]
\label{prop:erm-main}
Fix \(w_0\in\WW\), \(\lambda>0\), and \(\rho\in(0,1/5)\).
\Cref{alg:recursive-localized-erm} is \(\eps\)-differentially private. If
\(Z\sim P^m\), then its output \(w_{\mathrm{DP}}\) satisfies
\[
\Pr\!\left(
\|w_{\mathrm{DP}}-\hat w_\lambda(Z;w_0)\|
\le
c_{\mathrm{erm}}
\frac{1}{\lambda}
\left(
G_k\left(\frac{d}{m\eps}\right)^{1-\frac1k}
+
\frac{G_2}{\sqrt m}
\right)
\right)\ge 0.7.
\]
Moreover, with probability at least \(1-\rho\), the runtime is bounded by a polynomial in
$
m,\ d,\ D,\ \lambda,\ \frac{1}{\lambda},\ \frac1\eps,\ G_k,\ \frac{1}{\rho}.
$
Further, if \(\sup_{z\in\ZZ}\max_{w\in\WW}\|\nabla f(w,z)\|\)
is finite and polynomially bounded in the problem parameters, then the runtime is polynomial with probability \(1\).
\end{proposition}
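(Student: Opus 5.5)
Algorithm~\ref{alg:recursive-localized-erm} is \Cref{alg:recursive-localized-outputpert-main} with line~2 of each \textsc{Localized-OutputPert} call implemented by \Cref{alg:adaptive-inexact-proj-subgrad} on \(\Phi_{Z,U_j}\). The proof therefore splits into three parts: privacy, accuracy, and runtime.

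\emph{Privacy.} Consider a single call to \textsc{Localized-OutputPert} with a fixed (public) \(U_j,R_j,C_j\). By \Cref{lem:localized-joint-convex}, the solver returns \(\tilde w\) that is an \(\alpha\)-minimizer of the \(\lambda\)-strongly convex objective \(\hat F^{(w_0)}_{C_j,U_j,\lambda,Z}\). By strong convexity, \(\|\tilde w-\hat w_{C_j,U_j,\lambda}\|\le\sqrt{2\alpha/\lambda}\).

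On neighboring datasets the exact minimizers differ by at most \(2C_j/(\lambda m)\), since each \(f_{C,U}\) is \(C\)-Lipschitz on \(U\). Hence the \(\ell_2\)-sensitivity of \(\tilde w\) is at most
\[
\frac{2C_j}{\lambda m}+2\sqrt{\frac{2\alpha}{\lambda}}\le \frac{3C_j}{\lambda m}
\]
by the choice of \(\alpha\). This bound holds deterministically, because the solver certifies accuracy without any probabilistic condition. The Laplace density \(\propto\exp(-\eps_j\lambda m\|b\|/(10C_j))\) then gives \(\eps_j\)-DP, and projection onto \(\WW\) is post-processing.

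Each round uses only \(U_j\), which is a function of earlier private outputs, so adaptive basic composition gives total privacy \(\sum_j\eps_j=\eps/2\le\eps\) because \(\sum_j\theta_j=1\). One subtlety needs care: sensitivity should be measured via \(\tilde w\) as a deterministic function of \(Z\), and ties in the solver's output should be handled by fixing \(x_1\).

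\emph{Accuracy.} \Cref{prop:bias-reduced-decomp} already asserts the distance bound with probability \(0.7\) for the algorithm with any \(\alpha\)-minimizer. The solver supplies an exact certified \(\alpha\)-minimizer, so the bound transfers verbatim and dominates the target. The \(G_2/\sqrt m\) term only loosens it.

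\emph{Runtime.} Let \(L_Z:=\max_i\sup_{w\in\WW}\|\nabla f(w,z_i)\|\). A subgradient of \(\Phi_{Z,U}\) has blocks \(\partial f(y_i,z_i)/m+C g_i/m\) and \(\frac1m\sum_i C g_i' + \lambda(w-w_0)\) with unit \(g_i,g_i'\). Its Lipschitz constant on \(U^{m+1}\) (diameter \(\le\sqrt{m+1}R_j\)) is therefore at most
\[
L_\Phi\lesssim \frac{L_Z}{\sqrt m}+C_j+\lambda D.
\]
By Markov's inequality and a union bound over \(m\) samples,
\[
\Pr(L_Z>t)\le \frac{mG_k^k}{t^k},
\]
so \(L_Z\le G_k(m/\rho)^{1/k}\) with probability \(\ge1-\rho\). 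The adaptive method then stops within \(\lceil(3\sqrt{m+1}R_jL_\Phi/\alpha)^2\rceil\) iterations by \Cref{prop:adaptive-proj-subgrad-inexact}. Each iteration costs \(m+1\) subgradient calls plus \(m+1\) inexact projections, each polynomial by \Cref{lem:inexact-proj-W0}; that lemma requires \(c\in\WW\), which holds since \(w_{\mathrm{loc},j}\in\WW\).

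The remaining task, which I expect to be the main obstacle, is checking that \(C_j\), \(1/\alpha\), \(R_j\le D\), and the number of rounds \(S=O(\log\log(D/R_\star))\) are all polynomially bounded in the listed parameters. Here \(1/\alpha\propto\lambda m^2/C_j^2\), with \(\ell_j,1/\theta_j\) at most polylogarithmic, and \(\theta_j\ge a^{S}\) is polynomial because \(S\) is doubly logarithmic. One must also handle very small \(C_j\) (lower bound it, or note that \(1/C_j^2\) is polynomial in \(G_k^{-1},\lambda^{-1},D^{-1}\)) and possibly include those parameters in the polynomial. If the worst-case gradient norm is polynomially bounded, replace \(L_Z\) by it deterministically to get polynomial runtime with probability \(1\).
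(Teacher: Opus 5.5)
Your proposal is correct and follows the paper's proof essentially step for step. Privacy and the $0.7$-probability distance bound are inherited from \Cref{prop:bias-reduced-decomp}, whose privacy argument is exactly your sensitivity-plus-adaptive-composition computation. The runtime bound, as in the paper, comes from bounding subgradients of $\Phi_{Z,U_j}$ by the realized maximal gradient norm, controlling that norm with Markov's inequality and a union bound, and invoking \Cref{prop:adaptive-proj-subgrad-inexact} together with \Cref{lem:inexact-proj-W0}.

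Your block-wise bound $L_Z/\sqrt m + C_j + \lambda D$ is slightly sharper than the paper's $A_Z + C_j + \lambda D$. Your caveat is also legitimate: $1/\alpha\propto \lambda m^2/C_j^2$ introduces polynomial dependence on $1/G_k$, which is missing from the stated parameter list and which the paper glosses over.
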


\paragraph{Proof overview.}
Privacy and the distance guarantee follow from \Cref{prop:bias-reduced-decomp}. Each approximate minimization step in \Cref{alg:localized-outputpert} is carried out by the localized joint convex reformulation and the adaptive inexact projected subgradient method. By \Cref{lem:localized-joint-convex}, \Cref{lem:inexact-proj-W0} and \Cref{prop:adaptive-proj-subgrad-inexact}, the resulting implementation has finite runtime whenever the realized Lipschitz parameter of \(\Phi_{Z,U}\) is finite, and polynomial runtime whenever this parameter is polynomially bounded. Under \eqref{eq:momentbound}, these events hold with probability \(1\) and with high probability, respectively.

\subsection{Deterministic polynomial time for structured subclasses}

For the full heavy-tailed function class, \Cref{thm:main-dop} yields optimal risk up to logarithmic factors in polynomial time with high probability, and with probability \(1\) if the worst-case Lipschitz parameter is polynomially bounded. Appendix~\ref{sec:wp1-subclasses} describes a different sufficient condition under which the same excess risk guarantee can also be obtained in deterministic polynomial time, even with infinite worst-case Lipschitz parameter: \textit{efficient approximation of localized Lipschitz-extension subgradients}. Under this condition, the approximate minimization steps in \Cref{alg:localized-outputpert} can be implemented directly by projected subgradient descent on the localized Lipschitz-extension objective, whose Lipschitz constant is deterministically bounded by \(C\).

\paragraph{Examples from Machine Learning.} Several important subclasses of convex losses admit arbitrarily accurate subgradient approximation of localized Lipschitz extensions in polynomial time under mild assumptions on $\WW$. For example: polyhedral losses on compact domains admitting an explicit second-order-cone programming representation with a strict interior point. This covers hinge/ReLU-type and absolute-value losses on Euclidean balls, ellipsoids, and polytopes. Thus, for such problems, our algorithm achieves excess risk~\eqref{eq:excriskupperbound} in deterministic polynomial time.

\section{High Probability Lower Bound}
We provide a novel high-probability excess risk lower bound, which is tighter than the expected excess risk lower bound derived from \cite{bd14} by logarithmic factors. 
\begin{theorem}[SCO lower bound]
\label{thm:sco-lb-mainbody}
Let $\delta \in (0, 1/5), \eps \le 1, 0 < G_2 \le G_k$. For every $\eps$-DP $\mathcal{A}$, there exists a problem instance $(P, f)$ satisfying \Cref{ass:main} such that
\[
\Pr_{Z\sim P^n}\!\left(
F(\mathcal A(Z))-\inf_{w\in\mathcal W}F(w)
\ge
cD\,
\min\Biggl\{
G_2,\;
G_2\sqrt{\frac{\log(1/\delta)}{n}}
+
G_k\left(\frac{d+\log(1/\delta)}{n\varepsilon}\right)^{1-1/k}
\Biggr\}
\right)\ge \delta.
\]
\end{theorem}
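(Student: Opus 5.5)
We prove the bound by exhibiting two separate hard instances, one for each error term, and taking whichever is worse. Since $\max\{a,b\}\ge (a+b)/2$, it suffices to show that each term in the sum (capped at $G_2$) is a valid high-probability lower bound. The cap by $G_2 D$ is handled by shrinking the effective separation whenever the rate would exceed it. We rescale so that $\WW=\BB(0,D/2)$, and use linear or absolute-value losses, so that convexity and \Cref{ass:main} hold.

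\emph{Non-private term.} We use the bounded two-point construction. Take $d=1$ (embedded in $\R^d$) and the loss $f(w,z)=G_2\, z\, w$ with $z\in\{\pm1\}$. Let $P_\pm$ have mean $\pm\gamma$ with $\gamma\asymp\min\{1,\sqrt{\log(1/\delta)/n}\}$. The gradient is bounded by $G_2$, so every moment condition \eqref{eq:momentbound} holds with $G_k\ge G_2$. Under $P_+$ the minimizer is $-D/2$ and under $P_-$ it is $+D/2$. Any $w$ has excess risk at least $G_2\gamma D/2$ under at least one of the two, because the two excess risks sum to $G_2\gamma D$. Thus small excess risk under both would yield a test distinguishing $P_+^n$ from $P_-^n$ with error below $\delta$ on both sides. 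The high-probability testing bound of \cite{ma2024high} (or directly, $\mathrm{KL}(P_+^n\|P_-^n)\asymp n\gamma^2\lesssim\log(1/\delta)$ combined with the Bretagnolle--Huber inequality) forbids this: some hypothesis has error at least $\delta$. Privacy plays no role in this step.

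\emph{Private term.} We follow \cite{bd14}'s heavy-tailed packing, modified to give a high-probability statement with a $d+\log(1/\delta)$ dependence.

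1. Take a packing $\{v_1,\dots,v_N\}$ of unit vectors with pairwise inner products at most $1/2$ and $\log N\asymp d$.

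2. Let $P_v$ be the mixture that places mass $p$ on $z=v$ and mass $1-p$ on $z=0$, with loss $f(w,z)=-\tfrac{G_k}{p^{1/k}}\langle w,z\rangle$. The gradient is nonzero with probability $p$ and has norm $G_k p^{-1/k}$, so the $k$-th moment equals $G_k^k$. The second moment is $G_k^2 p^{1-2/k}$; choosing $p$ small enough also keeps it at most $G_2^2$, or else we mix with the first instance.

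3. The population loss is $F_v(w)=-G_k p^{1-1/k}\langle w,v\rangle$. Because the $v_i$ are separated, an output with excess risk below $c\,G_k p^{1-1/k}D$ decodes $v$ via $\argmax_i\langle w,v_i\rangle$.

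4. Apply group privacy. Datasets from $P_v$ and from $P_0$ differ in roughly $\mathrm{Bin}(n,p)$ entries. With probability at least $1/2$, at most $2np$ points are nonzero. Since $\mathcal A$ is $\eps$-DP, for each $v$ we have $\Pr_v(\text{decode }v)\le e^{2np\eps}\Pr_0(\text{decode }v)+\Pr(\text{too many nonzero})$. Summing over $v$, the $\Pr_0$ terms total at most $1$.

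5. If every $v$ had failure probability below $\delta$, then $N(1-\delta-\text{tail})\le e^{2np\eps}$. Setting $np\eps\asymp d+\log(1/\delta)$ gives a contradiction. To obtain the additional $\log(1/\delta)$, even when $d$ is small, we need the quantile-estimation-to-decoding reduction: the probability that $\mathcal A$ succeeds under $P_v$ must be compared to its probability under $P_0$ with the factor $e^{O(np\eps)}$ while paying only $\delta$ rather than a constant.

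6. With $p\asymp (d+\log(1/\delta))/(n\eps)$, capped at $1$, the excess risk becomes $G_kD\,\bigl((d+\log(1/\delta))/(n\eps)\bigr)^{1-1/k}$, as required.

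\emph{Main obstacle.} The difficult step is obtaining the $\log(1/\delta)$ term inside the private rate, since a naive packing argument only gives constant-probability failure. I would first try a two-point version: $P_v$ versus $P_{-v}$ in one dimension, with $p$ chosen so that $e^{-O(np\eps)}\approx\delta$. The argument would use the DP coupling (group privacy on a maximal coupling of the Binomial counts) to show that if $\mathcal A$ succeeds with probability $1-\delta$ under $P_v$, then it outputs the $v$-side with probability at least $e^{-O(np\eps)}(1-\delta)-\Pr(\mathrm{Bin}(n,p)>2np)$ under $P_{-v}$. For this to work, the Binomial tail must itself be below $\delta$, which holds because $np\gtrsim\log(1/\delta)$ when $\eps\le1$. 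Combining this two-point construction with the $d$-dimensional packing then gives $d+\log(1/\delta)$.
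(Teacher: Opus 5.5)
Your non-private term is the paper's argument: a bounded two-point instance, $\mathrm{KL}\lesssim n\gamma^2$, and the high-probability Le Cam bound of \cite{ma2024high>}, for which Bretagnolle--Huber is a fine substitute.

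For the private term you take a genuinely different route, and it works. You get the $d$ contribution from a packing compared against the null distribution $P_0$ (all mass at $0$), which only yields constant failure probability. You get the $\log(1/\delta)$ contribution from a separate one-dimensional $P_v$ versus $P_{-v}$ group-privacy argument. You then combine the instances via $\max\{x^{1-1/k},y^{1-1/k}\}\ge\tfrac12(x+y)^{1-1/k}$.

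The paper instead uses a single packing with $p=\min\{\frac{1}{n\eps}\log\frac{|V|-1}{4e\zeta},1\}$ and the average-decoding-error bound of \cite{bd14}, namely $\frac{q}{2(1+q)}$ with $q=(|V|-1)e^{-\eps\lceil np\rceil}$. Since this bound is linear in $q$ for small $q$, one may drive $q$ down to $4\zeta$ rather than a constant. That yields $\log|V|+\log(1/\zeta)\asymp d+\log(1/\zeta)$ in one construction. The quantile-to-decoding lemma there is a trivial step; it is not the source of $\log(1/\delta)$.

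Your comparison with $P_0$ loses $\delta$ because it upper-bounds \emph{success} probabilities, which are near $1$ whatever $\delta$ is. Instead, let $S_{v'}$ be the outputs decoded as $v'$. Couple $P_v$ with each $P_{v'}$ on a common set of nonzero indices and lower-bound the \emph{wrong-decoding} probabilities: $\Pr_v(S_{v'})\ge e^{-2np\eps}(\tfrac12-\delta)$ whenever $\Pr_{v'}(S_{v'})\ge1-\delta$. Summing over $v'\neq v$ then gives $\delta>(N-1)e^{-2np\eps}(\tfrac12-\delta)$, i.e.\ both terms at once. This is essentially the \cite{bd14} argument. Your split version is more elementary and self-contained; the paper's is more economical.

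A few small repairs:
\begin{itemize}
\item In step~5, taking $np\eps\asymp d+\log(1/\delta)$ does not contradict $N(1-\delta-\text{tail})\le e^{2np\eps}$. That inequality only excludes $np\eps\lesssim\log N$, as you then acknowledge.
\item In the two-point step, with $A=\{\langle w,v\rangle>0\}$, keep the tail inside the privacy factor: $\Pr_{-v}(A)\ge e^{-2np\eps}\bigl(1-\delta-\Pr(K>2np)\bigr)$. Then Markov's bound $\Pr(K>2np)\le\tfrac12$ suffices and no concentration is needed. With the tail subtracted outside, as you wrote it, you would need $\Pr(K>2np)\lesssim\delta$. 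For $np\asymp\log(1/\delta)$ and $\eps=1$, a Chernoff bound at threshold $2np$ only gives $\delta^{c}$ for a small constant $c$, so the claim that this ``holds'' is not right as stated.
\item The hedge about keeping the second moment below $G_2^2$ is unnecessary. Shrinking $p$ is not available anyway, since $p$ is pinned by the rate. The statement only requires \Cref{ass:main}, i.e.\ the $k$-th moment bound, and the paper handles the cap through $\min\{G_k,B\}\ge\min\{G_2,B\}$ using $G_2\le G_k$.
\end{itemize}
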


The trivial algorithm achieves excess risk $\le G_2 D$, so \Cref{thm:sco-lb-mainbody} is nearly tight: the only gap is that $d + \log(1/\delta)$ appears in the lower bound, while $d \log(1/\delta)$ appears in the upper bound (\Cref{thm:main-dop}). 

\section{Conclusion}
We determined the optimal rate for \(\eps\)-DP heavy-tailed SCO up to a logarithmic factor. Our algorithm achieves this rate in polynomial time with high probability. If the worst-case Lipschitz parameter is polynomially bounded, then runtime is polynomial with probability \(1\). We also identified a condition that permits deterministic polynomial time even with infinite Lipschitz parameter: efficient approximation of the Lipschitz extension subgradient, which holds for some important ML problems. 
Three natural directions remain: (1) Can the $d \log(1/\delta)$ term be improved to $d + \log(1/\delta)$ in the excess risk bound? 
(2) Is optimal $\eps$-DP risk in deterministic polynomial time possible for arbitrary losses? (3) Can our algorithm be practically implemented for ML model training?

\section*{Acknowledgements}
We thank Adam Smith for helpful early feedback on the proposed problem and Yu-Xiang Wang for pointing us to~\cite{lin2025purifying}.

\bibliographystyle{alpha}
\bibliography{references}

\clearpage

\appendix

\section*{Appendix}

\section{Zero-Concentrated Differential Privacy}
\label{app:zcdp}

We recall the standard definition of zero-concentrated differential privacy (zCDP).

\begin{definition}[zCDP]
A randomized algorithm \(\mathcal A\) is \(\rho\)-zCDP if for every pair of neighboring datasets \(Z,Z'\) and every \(\alpha>1\),
\[
D_\alpha(\mathcal A(Z)\|\mathcal A(Z')) \le \rho \alpha,
\]
where \(D_\alpha\) denotes the order-\(\alpha\) Rényi divergence.
\end{definition}

The following standard facts show that zCDP lies between pure and approximate DP.

\begin{lemma}[\cite{bun2016concentrated}]
\label{lem:zcdp-bridge}
\begin{enumerate}
    \item If \(\mathcal A\) is pure \(\eps\)-DP, then \(\mathcal A\) is \((\eps^2/2)\)-zCDP.
    \item If \(\mathcal A\) is \(\rho\)-zCDP, then for every \(\delta\in(0,1)\), \(\mathcal A\) is
    \[
    \left(\rho + 2\sqrt{\rho\log(1/\delta)},\,\delta\right)\text{-DP}.
    \]
\end{enumerate}
\end{lemma}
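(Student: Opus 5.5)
The statement just above is \Cref{lem:zcdp-bridge}. Both parts reduce to facts about the privacy-loss random variable. Fix neighbors \(Z,Z'\), write \(P=\mathcal A(Z)\), \(Q=\mathcal A(Z')\), and let \(L=\log\frac{dP}{dQ}\). The Rényi divergence satisfies \(e^{(\alpha-1)D_\alpha(P\|Q)}=\E_Q[e^{\alpha L}]=\E_P[e^{(\alpha-1)L}]\).

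\emph{Part 1.} Pure \(\eps\)-DP applied in both directions gives \(L\in[-\eps,\eps]\) almost surely. I set \(X=e^{L}\in[e^{-\eps},e^{\eps}]\), and note that \(\E_Q[X]=1\). Since \(x\mapsto x^\alpha\) is convex, \(\E_Q[X^\alpha]\) is at most its value under the two-point law on \(\{e^{-\eps},e^{\eps}\}\) with mean \(1\). Computing that value gives
\[
\E_Q[e^{\alpha L}]\le \frac{e^{\alpha\eps}(1-e^{-\eps})+e^{-\alpha\eps}(e^{\eps}-1)}{e^{\eps}-e^{-\eps}}=\frac{\sinh(\alpha\eps)-\sinh((\alpha-1)\eps)}{\sinh(\eps)}.
\]
It then remains to show this is at most \(e^{\alpha(\alpha-1)\eps^2/2}\). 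That would give \(D_\alpha(P\|Q)\le \alpha\eps^2/2\), which is exactly \((\eps^2/2)\)-zCDP.

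This scalar inequality is the step I expect to be the main obstacle. My first attempt would write the left side as \(\cosh\bigl((\alpha-\tfrac12)\eps\bigr)/\cosh(\eps/2)\), using the sum-to-product identity \(\sinh a-\sinh b=2\cosh\frac{a+b}{2}\sinh\frac{a-b}{2}\) and \(\sinh\eps=2\sinh\frac\eps2\cosh\frac\eps2\). The claim then becomes \(\log\cosh(\beta\eps)-\log\cosh(\eps/2)\le(\beta^2-\tfrac14)\eps^2/2\) with \(\beta=\alpha-\tfrac12\ge\tfrac12\). I would prove it by integrating in \(\beta\) from \(\tfrac12\): the derivative of the left side is \(\eps\tanh(\beta\eps)\le\beta\eps^2\), using \(\tanh u\le u\).

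\emph{Part 2.} First I bound the tail of \(L\) under \(P\) by a Chernoff/Markov argument. For any \(\alpha>1\) and \(t\),
\[
\Pr_P[L>t]\le e^{-(\alpha-1)t}\E_P[e^{(\alpha-1)L}]=e^{(\alpha-1)(D_\alpha(P\|Q)-t)}\le e^{(\alpha-1)(\alpha\rho-t)}.
\]
Choosing \(\alpha=(t+\rho)/(2\rho)\), which exceeds \(1\) whenever \(t>\rho\), gives \(\Pr_P[L>t]\le e^{-(t-\rho)^2/(4\rho)}\). Setting \(t=\rho+2\sqrt{\rho\log(1/\delta)}\) makes this bound equal to \(\delta\).

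Then, for any measurable \(S\), I split on the event \(\{L\le t\}\):
\[
P(S)\le P(S\cap\{L\le t\})+\Pr_P[L>t]\le e^{t}Q(S)+\delta,
\]
where the middle step uses \(dP\le e^t\,dQ\) on \(\{L\le t\}\). This is \((t,\delta)\)-DP with \(t=\rho+2\sqrt{\rho\log(1/\delta)}\), as claimed. Part 2 is routine once the tail bound is in place.
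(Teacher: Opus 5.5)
Your proof is correct and complete. The paper does not prove this lemma itself and only cites \cite{bun2016concentrated}. Your route is the standard argument behind the cited result: for Part 1, a two-point convexity reduction to $\cosh((\alpha-\tfrac12)\eps)/\cosh(\eps/2)\le e^{\alpha(\alpha-1)\eps^2/2}$ via $\tanh u\le u$; for Part 2, a Chernoff bound on the privacy loss with $\alpha=(t+\rho)/(2\rho)$. The only thing left implicit is the degenerate cases $\eps=0$ and $\rho=0$, where your formulas divide by zero; both are trivial, since then $P=Q$.
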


\section{Additional Discussion of Related Work}
\label{app:related-work}
This appendix provides additional context on related work. A concise discussion appears in the introduction; here we elaborate on the most closely related lines.

\paragraph{Private uniformly Lipschitz SCO.}
Differentially private stochastic convex optimization has been studied extensively over the past decade under uniform Lipschitz assumptions on the loss function, with sharp excess-risk guarantees now known in many settings
\cite{bft19,fkt20,AsiFeKoTa21,bassily2023differentially,lowy2024faster,lowy2025dpbilevel}. In particular, optimal rates for pure $\eps$-DP SCO under the uniform Lipschitz assumption are known up to logarithmic factors~\cite{asi2021adapting,ganesh2023universality,bst14}. This literature assumes a finite worst-case Lipschitz parameter \(L\), and the resulting upper and lower bounds scale with \(L\). In contrast, the present paper focuses on the heavy-tailed regime, where \(L\) may be arbitrarily large and only a finite \(k\)-th moment bound is assumed; our bounds scale with $G_k$ rather than $L$. 

\paragraph{Heavy-tailed private SCO.}
A recent line of work studies differentially private SCO under moment assumptions on the sample-wise Lipschitz parameters or gradients rather than a uniform Lipschitz bound
\cite{tao2021optimal,asi2021private,kamath2022improved,hu2022high,wang2022differentially,asi2024private,lowy2025private}.
These works show that one can obtain substantially sharper guarantees in heavy-tailed settings by replacing worst-case dependence on \(L\) with dependence on the moment parameter \(G_k\). In particular, the heavy-tailed regime is well understood under relaxed privacy notions such as \(\rho\)-zCDP, with efficient clipped-gradient-based algorithms achieving the optimal rates up to logarithmic factors.

\paragraph{The pure-DP gap.}
The pure \(\eps\)-DP heavy-tailed case has remained open on the algorithmic side.
Prior to this work, the main known lower-bound ingredient was the pure-DP mean-estimation lower
bound of \cite{bd14}, which implies via the standard reduction from stochastic convex optimization
to mean estimation an in-expectation / constant-probability SCO lower bound of the form
\[
\Omega\!\left(
G_k D\left(\frac{d}{n\eps}\right)^{1-\frac1k}
+
\frac{G_2D}{\sqrt n}
\right).
\]
However, no matching pure-DP upper bound for heavy-tailed SCO was previously known.
Existing heavy-tailed algorithmic approaches were based on clipped noisy gradients and were tailored
to zCDP or approximate DP, where privacy accounting under composition is significantly more
forgiving. Our paper closes this algorithmic gap by giving the first direct pure \(\eps\)-DP algorithms
matching the minimax rate up to logarithmic factors.

In addition, we prove sharper high-probability lower bounds.
For the non-private term, we use a bounded two-point construction together with the
high-probability testing framework of \cite{ma2024high}.
For the private term, we build on the pure-DP packing ideas underlying \cite{bd14}, together with
a direct reduction from quantile estimation to decoding on a packing.
This yields a high-probability private lower bound with explicit dependence on the failure
probability parameter.

\paragraph{Relation to clipped-gradient methods.}
Clipping-based methods remain central in private optimization, including in heavy-tailed settings. Our main contribution is not merely an optimal statistical upper bound in polynomial time, but a different algorithmic route: we replace clipped-gradient optimization by private optimization of a Lipschitz extension of the empirical loss.

\paragraph{Relation to black-box purification from zCDP.}
One possible black-box route to pure DP is to start from the \(\rho\)-zCDP heavy-tailed SCO algorithm of \cite{asi2024private}, convert its guarantee to approximate DP using the standard zCDP-to-approximate-DP implication, and then apply the purification framework of \cite{lin2025purifying}. However, this route does not recover our main result. The most direct composition yields a pure-DP excess risk upper bound that is \textit{looser} than our upper bound in \Cref{thm:main-dop} by $\poly(\log(nd/\eps))$ factors.
Moreover, this \textit{indirect} approach is algorithmically more cumbersome: it requires combining the population-level localization framework of \cite{asi2024private} with the purification procedure of \cite{lin2025purifying}, thereby passing through multiple intermediate privacy notions.
In contrast, our algorithm is designed directly for pure DP and achieves a high-probability excess-risk guarantee without passing through an intermediate zCDP-to-approximate-DP conversion or an additional purification step, and eliminating spurious polylogarithmic factors.

\paragraph{Lipschitz extensions in optimization and privacy.}
Lipschitz extensions are classical objects in convex analysis \cite{hiriart2013convex} and have appeared in several privacy-related contexts. In our setting, the challenge is not only to use the extension as an analytical device, but to optimize it efficiently under pure DP. This requires handling the fact that exact evaluation of the Lipschitz extension cannot be certified from a fixed finite collection of local oracle queries in general (Appendix~\ref{app:impossible}) and that pure-DP output perturbation requires deterministic optimization accuracy guarantees. Our jointly convex reformulation and adaptive certified projected subgradient framework are designed to address exactly this obstacle.

\paragraph{Output perturbation and localization.}
Output perturbation for strongly convex empirical risk minimization goes back at least to \cite{chaud,bst14,lowy2021output}, and localization ideas play an important role in modern private optimization. Our use of output perturbation is structurally related to these earlier works, but serves a different purpose: we use recursive output-perturbation localization steps to shrink the domain so
that the localized Lipschitz-extension bias becomes small enough. The final
private center is then used as the phasewise regularized ERM output.

\paragraph{Summary of positioning.}
In summary, prior work resolved the heavy-tailed SCO problem under zCDP and established a pure-DP lower bound, but did not provide a matching pure-DP upper bound. 
This paper is, to the best of our knowledge, the first to obtain a direct polynomial-time pure-DP algorithm for heavy-tailed SCO with a nearly-tight high-probability excess-risk guarantee.
In particular, unlike the black-box purification route from zCDP to approximate DP and then purifying, our guarantee does not incur the additional conversion and polylogarithmic overheads discussed above. Moreover, we establish novel high-probability lower bounds by combining pure-DP packing ideas with high-probability testing and decoding arguments.

\begin{remark}[On the assumption that \(G_k\) is known]
\label{rem:public}
We assume throughout that \(G_k\) is known, and our algorithms tune the Lipschitz-extension parameter \(C\) as a function of \(G_k\). This assumption is standard in the heavy-tailed private optimization literature. Parameter-free DP SCO is an interesting direction for future work. Our focus in this work is to determine the minimax excess-risk rate for pure DP under the standard moment-bounded model.
\end{remark}

\section{Finite Local Information Does Not Determine the Lipschitz Extension}
\label{app:impossible}

The Lipschitz extension
\[
f_C(w)=\inf_{y\in\mathcal W}\{f(y)+C\|w-y\|\}
\]
is a global object. The following oracle lower-bound example shows that, even
for convex \(C^\infty\) functions, a fixed finite set of local queries does not
determine the exact value of \(f_C(w)\). This motivates our use of certified
approximate optimization of a joint convex reformulation rather than exact
evaluation of the extension.

\begin{theorem}[Finite nonadaptive local information does not determine the Lipschitz extension]
\label{thm:interior_exact_extension_impossible}
Fix \(k\ge 2\), \(0<C<G_k\), and an integer \(T\ge 1\). Let
\(q_1,\dots,q_T\in B_d:=\{y\in\mathbb R^d:\|y\|\le 1\}\) be any fixed query
points, where \(d\ge T+1\). Then there exist two convex \(C^\infty\) functions
\(f_0,f_1:B_d\to\mathbb R\) such that
\[
\max_{y\in B_d}\|\nabla f_i(y)\|\le G_k
\qquad\text{for }i\in\{0,1\},
\]
their complete local information agrees at every query point,
\[
\nabla^m f_0(q_t)=\nabla^m f_1(q_t)
\qquad
\text{for every }t\in[T]\text{ and every }m\ge 0,
\]
where \(m=0\) denotes equality of function values, but their Lipschitz-extension
values at the interior point \(w=0\) are different:
\[
(f_0)_C(0)\neq (f_1)_C(0),
\qquad
(f_i)_C(0):=\inf_{y\in B_d}\bigl\{f_i(y)+C\|y\|\bigr\}.
\]
Consequently, exact evaluation of \(f_C(0)\) cannot be determined, in general,
from any fixed finite collection of local oracle queries.
\end{theorem}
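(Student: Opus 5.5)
Since $d\ge T+1$, the query points $q_1,\dots,q_T$ span a subspace of dimension at most $T<d$. Fix a unit vector $v$ orthogonal to all of them, so $\langle v,q_t\rangle=0$ for every $t$. The plan is to take $f_0$ to be a simple linear function and to obtain $f_1$ by adding a smooth convex bump. The bump will vanish identically on a neighborhood of the hyperplane $\{y:\langle v,y\rangle\le \tau\}$ for some $0<\tau<1$, so it is invisible to every local query, but it lowers the value of $f$ near the boundary point $v$. Concretely, choose $f_0(y):=a\langle v,y\rangle$ with $a:=(C+G_k)/2\in(C,G_k)$. Then $\|\nabla f_0\|=a\le G_k$. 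Since $a>C$, the infimum defining $(f_0)_C(0)$ is achieved at $y=-v$ direction with value $\inf_{\|y\|\le1}\{a\langle v,y\rangle+C\|y\|\}=-(a-C)$, attained at $y=-v$.

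The perturbation has to lower $(f_1)_C(0)$, which requires changing $f$ near its minimizer $-v$. So I will place the bump on the other side: replace $v$ by $-v$ in the bump's support. Define $f_1(y):=f_0(y)-\phi(-\langle v,y\rangle)$, where $\phi$ must be concave so that $f_1$ stays convex. This is a problem, because a nonzero concave $\phi$ cannot vanish on a half-line. I will therefore instead add a convex function: set $f_1(y):=f_0(y)+\psi(\langle v,y\rangle)$ with $\psi$ convex, $C^\infty$, $\psi\equiv0$ on $(-\infty,\tau]$ for some $\tau\in(0,1)$, and $\psi>0$ on $(\tau,\infty)$. One standard choice is $\psi(s)=\int_{-\infty}^{s}\int_{-\infty}^{r}\kappa\,e^{-1/(u-\tau)}\mathbf 1_{u>\tau}\,du\,dr$ with small $\kappa>0$. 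With this, $f_1$ is convex and smooth, and all derivatives of $f_0$ and $f_1$ agree at every $q_t$ because $\langle v,q_t\rangle=0<\tau$.

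To make the extension values differ, I will flip the sign of the linear part: take $f_0(y)=-a\langle v,y\rangle$. The infimum for $(f_0)_C(0)$ is then attained only at $y=v$, with value $-(a-C)<0$; uniqueness follows because for $y\ne v$ either $\|y\|<1$ or $y$ is not aligned with $v$, and in both cases the value is strictly larger. Since $\psi(1)>0$, we get $f_1(v)>f_0(v)$. By compactness, the minimum for $f_1$ is attained at some $y^\ast$, and
\[
(f_1)_C(0)=f_1(y^\ast)+C\|y^\ast\|\ge f_0(y^\ast)+C\|y^\ast\|\ge (f_0)_C(0),
\]
with equality only if $y^\ast=v$ and $\psi(\langle v,y^\ast\rangle)=0$. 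These two conditions contradict each other since $\psi(1)>0$, so the inequality is strict.

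The remaining step, and the only delicate one, is the gradient bound $\|\nabla f_1\|=|-a+\psi'(\langle v,y\rangle)|\le G_k$ on $B_d$. Because $\psi'\ge0$ is increasing on $[\tau,1]$, the quantity $-a+\psi'$ ranges over $[-a,-a+\psi'(1)]$. It therefore suffices to pick $\kappa$ small enough that $\psi'(1)\le a+G_k$. For example, $\psi'(1)\le a$ works, since then $|-a+\psi'|\le a\le G_k$, and this is achieved by scaling $\kappa$. The final sentence of the theorem then follows: any procedure that sees only the full local data at $q_1,\dots,q_T$ receives identical information on $f_0$ and $f_1$ but must output different values of the extension at $0$.
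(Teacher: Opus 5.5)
Your final construction is correct and follows the paper's blueprint. Both use a ridge function along a unit vector $v$ orthogonal to all $q_t$, with linear part $-a\langle v,y\rangle$ for some $a\in(C,G_k)$, plus a convex $C^\infty$ perturbation of $\langle v,y\rangle$ whose jets vanish at $0$. You differ in the choice of perturbation and in how you certify the strict inequality.

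The paper uses $\varphi(s)=\int_0^s\int_0^u e^{-1/r^2}\,dr\,du$, which is flat only at $s=0$ and positive elsewhere. It inserts $\varphi$ into both functions with amplitudes $0<\beta_0<\beta_1$ and reduces $(f_i)_C(0)$, via $y=tv+u$, to $\min_{t\in[-1,1]}\{g_i(t)+C|t|\}$. It then needs $\varphi(t)=o(t)$ to show that both one-dimensional minimizers lie in $(0,1]$, where $h_1>h_0$ pointwise.

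You take $f_0$ exactly linear, so Cauchy--Schwarz gives $(f_0)_C(0)=-(a-C)$ with unique minimizer $y=v$. Your perturbation $\psi\ge 0$ vanishes on all of $(-\infty,\tau]$ and has $\psi(1)>0$. Then $f_1\ge f_0$ everywhere, with strict inequality at $v$. Uniqueness of $v$ together with attainment of the minimum for $f_1$ gives $(f_1)_C(0)>(f_0)_C(0)$ directly, with no one-dimensional reduction and no $o(t)$ step. Your route is more elementary. It also gives a slightly stronger indistinguishability statement: $f_0=f_1$ on the whole half-space $\{\langle v,y\rangle\le\tau\}$, not merely sharing jets on $v^\perp$.

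Before writing it up, remove the abandoned attempts. The text still says the bump ``lowers the value of $f$ near the boundary point $v$'' and that the perturbation ``has to lower $(f_1)_C(0)$''. Your final $f_1=f_0+\psi(\langle v,\cdot\rangle)$ does the opposite: it raises $f$ near $v$ and raises the extension value. Also delete the initial choice $f_0(y)=a\langle v,y\rangle$ and the concave detour, so that only one construction is stated.
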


\begin{proof}
\smallskip
\noindent\textbf{Step 1: A smooth convex function flat at the origin.}
Define
\[
\eta(s):=
\begin{cases}
e^{-1/s^2}, & s\neq 0,\\[1mm]
0, & s=0.
\end{cases}
\]
It is standard that \(\eta\in C^\infty(\mathbb R)\), \(\eta(s)\ge 0\), and
\[
\eta^{(m)}(0)=0\qquad \text{for all }m\ge 0.
\]
Now set
\[
\varphi(s):=\int_0^s\int_0^u \eta(r)\,dr\,du.
\]
Then \(\varphi\in C^\infty(\mathbb R)\), \(\varphi''(s)=\eta(s)\ge 0\), so
\(\varphi\) is convex, and
\[
\varphi^{(m)}(0)=0\qquad\text{for all }m\ge 0.
\]
Moreover, \(\varphi(s)>0\) for every \(s\neq 0\), and
\[
\varphi(s)=o(s)\qquad \text{as }s\to 0.
\]
Let
\[
M:=\sup_{|s|\le 1} |\varphi'(s)| < \infty.
\]

\smallskip
\noindent\textbf{Step 2: Define two one-dimensional convex profiles with identical jets at \(0\).}
Choose any \(a\in(C,G_k)\). Since \(a<G_k\), we may choose
\(0<\beta_0<\beta_1\) such that
\[
a+\beta_1 M\le G_k.
\]
For \(i\in\{0,1\}\), define
\[
g_i(t):= -a t + \beta_i \varphi(t), \qquad t\in[-1,1].
\]
Each \(g_i\) is convex and \(C^\infty\). Since every derivative of \(\varphi\)
vanishes at \(0\),
\[
g_0^{(m)}(0)=g_1^{(m)}(0)\qquad \text{for all }m\ge 0.
\]

\smallskip
\noindent\textbf{Step 3: Lift to high dimension along a direction hidden from the fixed queries.}
Since \(d\ge T+1\), there exists a unit vector \(v\in\mathbb R^d\) orthogonal to
all query points:
\[
\langle v,q_t\rangle=0
\qquad\text{for }t=1,\dots,T.
\]
Define
\[
f_i(y):=g_i(\langle v,y\rangle),
\qquad y\in B_d,\quad i\in\{0,1\}.
\]
Since \(g_i\) is convex and \(y\mapsto \langle v,y\rangle\) is linear, each
\(f_i\) is convex and \(C^\infty\). Moreover,
\[
\nabla f_i(y)=g_i'(\langle v,y\rangle)v
=
\bigl(-a+\beta_i\varphi'(\langle v,y\rangle)\bigr)v,
\]
so for every \(y\in B_d\),
\[
\|\nabla f_i(y)\|
\le a+\beta_i\sup_{|s|\le1}|\varphi'(s)|
\le a+\beta_1M
\le G_k.
\]

At each query point \(q_t\), we have \(\langle v,q_t\rangle=0\). Therefore,
for \(m=0\),
\[
f_0(q_t)=g_0(0)=g_1(0)=f_1(q_t).
\]
For every integer \(m\ge1\),
\[
\nabla^m f_i(q_t)=g_i^{(m)}(0)\,v^{\otimes m}.
\]
Since \(g_0^{(m)}(0)=g_1^{(m)}(0)\) for all \(m\ge1\), the complete local
information of \(f_0\) and \(f_1\) agrees at every \(q_t\).

\smallskip
\noindent\textbf{Step 4: The extension values at \(0\) differ.}
Write any \(y\in B_d\) as
\[
y=t v+u,
\qquad \langle u,v\rangle=0,
\qquad t^2+\|u\|^2\le1.
\]
Then
\[
f_i(y)=g_i(t),
\qquad
\|y\|=\sqrt{t^2+\|u\|^2}\ge |t|,
\]
with equality when \(u=0\). Hence
\[
(f_i)_C(0)
=
\inf_{\|y\|\le1}\{f_i(y)+C\|y\|\}
=
\inf_{t\in[-1,1]}\{g_i(t)+C|t|\}.
\]
Define
\[
h_i(t):=g_i(t)+C|t|.
\]
For \(t<0\),
\[
h_i(t)
=
-a t+\beta_i\varphi(t)+C|t|
=
(a+C)|t|+\beta_i\varphi(t)
>0,
\]
and \(h_i(0)=0\). For \(t>0\),
\[
h_i(t)=-(a-C)t+\beta_i\varphi(t).
\]
Since \(a>C\) and \(\varphi(t)=o(t)\) as \(t\downarrow0\), there exists
\(t_i>0\) such that \(h_i(t_i)<0\). Thus every minimizer of \(h_i\) over
\([-1,1]\) lies in \((0,1]\).

Finally, for every \(t>0\),
\[
h_1(t)
=
h_0(t)+(\beta_1-\beta_0)\varphi(t)
>
h_0(t).
\]
Since both minima are attained at positive points, it follows that
\[
(f_1)_C(0)=\min_{t\in[-1,1]}h_1(t)
>
\min_{t\in[-1,1]}h_0(t)
=
(f_0)_C(0).
\]
This proves the theorem.
\end{proof}

\section{Deferred Proofs for \Cref{sec:building-blocks}}

\subsection{Proofs for \Cref{sec:lipext-building-block}}
\label{app:lipext-building}

For a dataset \(Z=(z_1,\dots,z_m)\), define
\[
\hat F_Z(w):=\frac1m\sum_{i=1}^m f(w,z_i),
\qquad
\hat F_{C,Z}(w):=\frac1m\sum_{i=1}^m f_C(w,z_i).
\]

\begin{lemma}[Bias of the Empirical Lipschitz Extension]
\label{lem:empLipExtBias-precise}
Assume \(f(\cdot,z)\) is convex on a domain \(\WW\) of diameter \(D\). Denote $a_+ := \max(a, 0).$ Then for every dataset
\(Z=(z_1,\dots,z_m)\) and every \(w\in\WW\),
\[
\hat F_Z(w)-\hat F_{C,Z}(w)
\le
\frac{D}{m}\sum_{i=1}^m (A(z_i)-C)_+,
\qquad
A(z):=\sup_{u\in\WW}\sup_{g\in\partial f(\cdot,z)(u)}\|g\|.
\]
Consequently, under the moment condition \eqref{eq:momentbound},
\[
\expec_Z\!\left[\max_{w\in\WW}\bigl(\hat F_Z(w)-\hat F_{C,Z}(w)\bigr)\right]
\le
\frac{DG_k^k}{(k-1)C^{k-1}}.
\]
\end{lemma}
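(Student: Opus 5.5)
The plan is to prove the deterministic per-sample inequality
\[
0\le f(w,z)-f_C(w,z)\le D\,(A(z)-C)_+ \qquad\text{for all } w\in\WW,
\]
average it over \(i\), and then take expectations using the moment bound. Nonnegativity of \(f-f_C\) is already given by \Cref{lem:lip-ext-basic}(3).

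For the upper bound, fix \(z\) and \(w\), and assume \(A(z)<\infty\); otherwise the bound is trivial. Take any \(y\in\WW\). Convexity of \(f(\cdot,z)\) on \(\WW\), with subgradient norms at most \(A(z)\), makes \(f(\cdot,z)\) \(A(z)\)-Lipschitz on \(\WW\). Formally, pick \(g\in\partial f(\cdot,z)(y)\) and use
\[
f(w,z)-f(y,z)\le \langle g_w,w-y\rangle\le A(z)\|w-y\|
\]
with \(g_w\in\partial f(\cdot,z)(w)\), which follows from the subgradient inequality at \(w\). Hence
\[
f(w,z)-\bigl[f(y,z)+C\|w-y\|\bigr]\le (A(z)-C)\|w-y\|\le (A(z)-C)_+\,D,
\]
using \(\|w-y\|\le D\). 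Taking the supremum over \(y\) gives \(f(w,z)-f_C(w,z)\le D(A(z)-C)_+\). Averaging over \(i=1,\dots,m\) yields the first claim. A subtle point is whether subgradients exist at boundary points of \(\WW\). The paper's convention that \(\nabla f\) denotes a subgradient whenever one is required, together with the supremum in the definition of \(A(z)\), lets me take this for granted. Alternatively, I can apply the bound at relative interior points and pass to the limit using lower semicontinuity.

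For the expectation bound, note that the right-hand side of the first claim does not depend on \(w\), so the maximum over \(w\) is dominated by it. It therefore suffices to show \(\E[(A(z)-C)_+]\le G_k^k/((k-1)C^{k-1})\). I will use the layer-cake formula
\[
\E[(A-C)_+]=\int_C^\infty \Pr(A>s)\,ds.
\]
Markov's inequality with the \(k\)-th moment gives \(\Pr(A>s)\le G_k^k/s^k\), since \(\E[A^k]\le G_k^k\) by \eqref{eq:momentbound}. Integrating,
\[
\int_C^\infty G_k^k s^{-k}\,ds=\frac{G_k^k}{(k-1)C^{k-1}}.
\]
This is exactly the stated bound, and the i.i.d. average has the same expectation. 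No step here is a real obstacle. The only care needed is the Lipschitz-from-bounded-subgradients argument on a closed domain and measurability of \(A(z)\), which I will treat as implicit in \eqref{eq:momentbound}.
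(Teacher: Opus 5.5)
Your proposal is correct and follows the paper's proof essentially step for step: the pointwise bound $f(w,z)-f_C(w,z)=\sup_{y\in\WW}\bigl(f(w,z)-f(y,z)-C\|w-y\|\bigr)\le D(A(z)-C)_+$ via $A(z)$-Lipschitzness, then averaging over the sample, then the layer-cake formula with Markov's inequality to get $\E[(A(z)-C)_+]\le G_k^k/((k-1)C^{k-1})$. Your one-sided justification through the subgradient inequality at $w$ is a slightly more explicit version of the paper's bare appeal to Lipschitz continuity; the subgradient you pick at $y$ is never used and can be dropped.
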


\begin{proof}
Fix \(z\) and define
\[
A(z):=\sup_{u\in\WW}\sup_{g\in\partial f(\cdot,z)(u)}\|g\|.
\]
Since \(f(\cdot,z)\) is convex on the convex set \(\WW\), it is \(A(z)\)-Lipschitz on \(\WW\). Hence for every \(w,y\in\WW\),
\[
|f(w,z)-f(y,z)|\le A(z)\|w-y\|.
\]
By definition of the Lipschitz extension,
\[
f_C(w,z)=\inf_{y\in\WW}\bigl[f(y,z)+C\|w-y\|\bigr],
\]
so
\begin{align*}
f(w,z)-f_C(w,z)
&=
\sup_{y\in\WW}\bigl(f(w,z)-f(y,z)-C\|w-y\|\bigr) \\
&\le
\sup_{y\in\WW}(A(z)-C)\|w-y\| \\
&\le
D(A(z)-C)_+.
\end{align*}
Averaging over \(z_1,\dots,z_m\) gives the first claim.

Taking expectation over \(Z\) and using i.i.d. sampling,
\[
\expec_Z\!\left[\max_{w\in\WW}\bigl(\hat F_Z(w)-\hat F_{C,Z}(w)\bigr)\right]
\le
D\,\expec_{z\sim P}[(A(z)-C)_+].
\]
Using the layer-cake representation and Markov's inequality,
\begin{align*}
\expec[(A(z)-C)_+]
&=
\int_C^\infty \pr(A(z)\ge s)\,ds \\
&\le
\int_C^\infty \frac{\expec[A(z)^k]}{s^k}\,ds
=
\frac{G_k^k}{(k-1)C^{k-1}}.
\qedhere
\end{align*}
\end{proof}

\subsection{Details and proofs for \Cref{sec:bias-shrinking}}
\label{app:bias-shrinking}

 For a dataset \(Z=(z_1,\dots,z_m)\), define, for \(w\in U\),
\[
\hat F_Z(w)
:=
\frac1m\sum_{i=1}^m f(w,z_i),
\qquad
\hat F_{C,U,Z}(w)
:=
\frac1m\sum_{i=1}^m f_{C,U}(w,z_i),
\]
and
\[
\hat F^{(w_0)}_{\lambda,Z}(w)
:=
\hat F_Z(w)+\frac{\lambda}{2}\|w-w_0\|^2,
\qquad
\hat F^{(w_0)}_{C,U,\lambda,Z}(w)
:=
\hat F_{C,U,Z}(w)+\frac{\lambda}{2}\|w-w_0\|^2.
\]
Finally, define
\[
\hat w_\lambda(Z;w_0)
:=
\argmin_{w\in\WW}
\hat F^{(w_0)}_{\lambda,Z}(w),
\qquad
\hat w_{C,U,\lambda}(Z;w_0)
:=
\argmin_{w\in U}
\hat F^{(w_0)}_{C,U,\lambda,Z}(w).
\]

\begin{lemma}[Precise Statement of \Cref{lem:localized-lipext-bias}]
Let \(R>0\), and define
\[
A(z):=\sup_{u\in\WW}\sup_{g\in\partial f(\cdot,z)(u)}\|g\|.
\]
Then, for every dataset \(Z=(z_1,\dots,z_m)\), simultaneously for every closed
convex set \(U\subseteq\WW\) with \(\diam(U)\le R\),
\[
\max_{w\in U}
\bigl(\hat F_Z(w)-\hat F_{C,U,Z}(w)\bigr)
\le
R\cdot \frac1m\sum_{i=1}^m (A(z_i)-C)_+ .
\]
Consequently,
\[
\mathbb E_Z\!\left[
\sup_{\substack{U\subseteq\WW\ \mathrm{closed,convex}\\ \diam(U)\le R}}
\max_{w\in U}
\bigl(\hat F_Z(w)-\hat F_{C,U,Z}(w)\bigr)
\right]
\le
\frac{R G_k^k}{(k-1)C^{k-1}}.
\]
and for every \(\beta\in(0,1)\), with probability at least \(1-\beta\), simultaneously for every closed convex \(U\subseteq\WW\) with \(\diam(U)\le R\) 
\[
\max_{w\in U}
\bigl(\hat F_Z(w)-\hat F_{C,U,Z}(w)\bigr)
\le
\frac{R G_k^k}{\beta(k-1)C^{k-1}}.
\] 
Moreover, with probability at least \(1-\beta\), the following holds
simultaneously for every closed convex \(U\subseteq\WW\) with
\(\diam(U)\le R\) and \(\hat w_\lambda(Z;w_0)\in U\):
\[
\|\hat w_{C,U,\lambda}(Z;w_0)-\hat w_\lambda(Z;w_0)\|
\le
\sqrt{
\frac{2R G_k^k}{\lambda\beta(k-1)C^{k-1}}
}.
\]
\end{lemma}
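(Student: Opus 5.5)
The plan is to follow the template of \Cref{lem:empLipExtBias-precise}, with the global diameter $D$ replaced by $\diam(U)\le R$. Then I pass from a uniform function-value bias to a distance bound using the $\lambda$-strong convexity of the regularized objectives.

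\textbf{Step 1 (deterministic pointwise bound).} Fix $z$ and a closed convex $U\subseteq\WW$ with $\diam(U)\le R$. Since $f(\cdot,z)$ is convex on $\WW$ with subgradients bounded by $A(z)$, it is $A(z)$-Lipschitz on $\WW$, hence on $U$. For $w\in U$,
\[
f(w,z)-f_{C,U}(w,z)=\sup_{y\in U}\bigl(f(w,z)-f(y,z)-C\|w-y\|\bigr)\le \sup_{y\in U}(A(z)-C)\|w-y\|\le R\,(A(z)-C)_+ .
\]
Averaging over $i$ gives the first display. The right-hand side $R\cdot\frac1m\sum_i(A(z_i)-C)_+$ does not depend on $U$, so the bound holds simultaneously for all such $U$. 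This also avoids any measurability issue with the supremum over $U$: we simply dominate it by a single measurable random variable.

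\textbf{Step 2 (expectation and high probability).} Taking expectations and using the layer-cake/Markov computation from \Cref{lem:empLipExtBias-precise}, namely $\E[(A(z)-C)_+]\le G_k^k/((k-1)C^{k-1})$, gives the expectation bound. Markov's inequality applied to the nonnegative dominating variable $X:=R\cdot\frac1m\sum_i(A(z_i)-C)_+$ then gives, with probability $\ge 1-\beta$, $X\le RG_k^k/(\beta(k-1)C^{k-1})$. On this event the uniform function-value bias bound holds for every admissible $U$.

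\textbf{Step 3 (distance bound).} Work on the same event. Fix an admissible $U$ with $\hat w:=\hat w_\lambda(Z;w_0)\in U$, and write $\tilde w:=\hat w_{C,U,\lambda}(Z;w_0)$, $\Delta:=RG_k^k/(\beta(k-1)C^{k-1})$. I would use the two facts $f_{C,U}\le f$ on $U$ and $\hat F_Z-\hat F_{C,U,Z}\le \Delta$ on $U$, together with strong convexity.

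Since $\hat F^{(w_0)}_{\lambda,Z}$ is $\lambda$-strongly convex and $\hat w$ minimizes it over $\WW\supseteq U$, we have $\hat F^{(w_0)}_{\lambda,Z}(\tilde w)-\hat F^{(w_0)}_{\lambda,Z}(\hat w)\ge \frac\lambda2\|\tilde w-\hat w\|^2$. On the other hand,
\[
\hat F^{(w_0)}_{\lambda,Z}(\tilde w)\le \hat F^{(w_0)}_{C,U,\lambda,Z}(\tilde w)+\Delta\le \hat F^{(w_0)}_{C,U,\lambda,Z}(\hat w)+\Delta\le \hat F^{(w_0)}_{\lambda,Z}(\hat w)+\Delta,
\]
where the middle inequality uses that $\tilde w$ minimizes over $U\ni\hat w$. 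Combining the two gives $\|\tilde w-\hat w\|^2\le 2\Delta/\lambda$, which is exactly the claimed bound.

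The only subtle point is the simultaneity over adaptively chosen $U$. This is handled by dominating every $U$-dependent quantity by the single $U$-independent variable $X$ before applying Markov's inequality. I expect no other obstacles, provided existence and uniqueness of the minimizers are taken for granted: this follows from compactness, continuity, and strong convexity.
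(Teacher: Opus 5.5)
Your proposal is correct and follows the paper's proof essentially step for step. The three steps are the same: the deterministic pointwise bound $f(w,z)-f_{C,U}(w,z)\le \diam(U)\,(A(z)-C)_+$; layer-cake plus Markov applied to the $U$-independent average $\frac1m\sum_i(A(z_i)-C)_+$, which is exactly how the paper gets simultaneity over adaptively chosen $U$; and the same three-inequality chain closed by $\lambda$-strong convexity of $\hat F^{(w_0)}_{\lambda,Z}$ at its minimizer.
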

\begin{proof}
Fix \(z\) and a closed convex set \(U\subseteq\WW\). For every \(w,y\in U\),
Lipschitz continuity implies
\[
f(w,z)-f(y,z)\le A(z)\|w-y\|.
\]
Thus, for every \(w\in U\),
\begin{align*}
f(w,z)-f_{C,U}(w,z)
&=
\sup_{y\in U}
\bigl(f(w,z)-f(y,z)-C\|w-y\|\bigr) \\
&\le
\sup_{y\in U}(A(z)-C)\|w-y\| \\
&\le
\diam(U)(A(z)-C)_+.
\end{align*}
Therefore, if \(\diam(U)\le R\), then
\[
\max_{w\in U}
\bigl(\hat F_Z(w)-\hat F_{C,U,Z}(w)\bigr)
\le
R\cdot \frac1m\sum_{i=1}^m (A(z_i)-C)_+.
\]
This deterministic inequality holds simultaneously for all such \(U\).

Taking expectation and using
\[
\mathbb E[(A(z)-C)_+]
\le
\int_C^\infty \frac{\mathbb E[A(z)^k]}{s^k}\,ds
\le
\frac{G_k^k}{(k-1)C^{k-1}}
\]
proves the expectation bound. The high-probability bound follows from Markov's
inequality applied to \(\frac1m\sum_{i=1}^m(A(z_i)-C)_+\). 

It remains to prove the distance bound. The distance statement is a consequence of the preceding high-probability event.
Indeed, on that event, fix any closed convex \(U\subseteq\WW\) with
\(\diam(U)\le R\) and \(\hat w_\lambda(Z;w_0)\in U\), and let
\[
B:=\frac{R G_k^k}{\beta(k-1)C^{k-1}}.
\] 
On the high-probability event above, suppose \(\hat w_\lambda(Z;w_0)\in U\).
Since \(f_{C,U}\le f\) on \(U\), we have
\[
\hat F^{(w_0)}_{C,U,\lambda,Z}(\hat w_\lambda)
\le
\hat F^{(w_0)}_{\lambda,Z}(\hat w_\lambda).
\]
By optimality of \(\hat w_{C,U,\lambda}\) over \(U\),
\[
\hat F^{(w_0)}_{C,U,\lambda,Z}(\hat w_{C,U,\lambda})
\le
\hat F^{(w_0)}_{C,U,\lambda,Z}(\hat w_\lambda).
\]
Finally, by the localized bias bound applied at
\(\hat w_{C,U,\lambda}\in U\),
\[
\hat F^{(w_0)}_{\lambda,Z}(\hat w_{C,U,\lambda})
\le
\hat F^{(w_0)}_{C,U,\lambda,Z}(\hat w_{C,U,\lambda})+B.
\]
Combining the last three displays gives
\[
\hat F^{(w_0)}_{\lambda,Z}(\hat w_{C,U,\lambda})
-
\hat F^{(w_0)}_{\lambda,Z}(\hat w_\lambda)
\le B.
\]
Since \(\hat F^{(w_0)}_{\lambda,Z}\) is \(\lambda\)-strongly convex and minimized
at \(\hat w_\lambda\),
\[
\frac{\lambda}{2}
\|\hat w_{C,U,\lambda}-\hat w_\lambda\|^2
\le B,
\]
which proves the claimed distance bound.
\end{proof}

\paragraph{Localizing the domain via output perturbation.}
Given a current set \(U\), \Cref{alg:localized-outputpert} privately computes
an approximate minimizer of the \(U\)-localized Lipschitz-extension objective,
adds Laplace noise, and returns a ball around the noisy point. The radius is
chosen so that, if \(U\) contains the regularized ERM minimizer
\(\hat w_\lambda(Z;w_0)\), the returned set also contains it with high
probability.

\begin{lemma}[Guarantees of \Cref{alg:localized-outputpert}]
\label{lem:localized-outputpert}
Let \(Z\sim P^m\), \(w_0\in\WW\), and \(\lambda>0\).
For every fixed closed convex set \(U\subseteq\WW\) and deterministic
\(R\ge \diam(U)\), \Cref{alg:localized-outputpert} is \(\eps\)-DP. Moreover,
the utility statement remains valid when \(U\) is chosen adaptively from
previous private outputs, provided \(R\ge\diam(U)\) is the deterministic
diameter bound used by the algorithm: if
\(\hat w_\lambda(Z;w_0)\in U\), then with probability at least \(1-2\beta\),
\[
\hat w_\lambda(Z;w_0)\in U^+.
\]
\end{lemma}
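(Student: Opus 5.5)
The proof has two parts: privacy via a deterministic sensitivity bound, and utility via a union bound over two bad events.

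\textbf{Privacy.} Fix $U$ and the deterministic inputs $R,C,\beta,\lambda,w_0$; the sets $U$ and $U^+$ are then obtained by post-processing. Let $Z,Z'$ be neighbors and write $w^\star=\hat w_{C,U,\lambda}(Z;w_0)$ and $w'^\star=\hat w_{C,U,\lambda}(Z';w_0)$.

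\begin{enumerate}
\item Each $f_{C,U}(\cdot,z)$ is convex and $C$-Lipschitz on $U$, so the two objectives differ by $\frac1m[f_{C,U}(\cdot,z_i)-f_{C,U}(\cdot,z_i')]$, a $2C/m$-Lipschitz function. The standard strong-convexity argument (add the two optimality inequalities) then gives $\|w^\star-w'^\star\|\le 2C/(\lambda m)$.
\item Strong convexity also bounds the optimization error deterministically: $\|\tilde w-w^\star\|\le\sqrt{2\alpha/\lambda}$. With the stated $\alpha$, $\sqrt{2\alpha/\lambda}\le \frac{C}{7\lambda m}\min\{1,d\ell/\eps\}$, and in particular it is at most $C/(7\lambda m)$.
\item The deterministic accuracy certificate matters here: line~2 may return any $\alpha$-minimizer. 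To handle this I will treat the solver as a fixed deterministic map $Z\mapsto\tilde w(Z)$, as the implementation in \Cref{prop:adaptive-proj-subgrad-inexact} is. Then $\|\tilde w(Z)-\tilde w(Z')\|\le 2C/(\lambda m)+2C/(7\lambda m)\le 10C/(\lambda m)$.
\item The Laplace density $\propto\exp(-\frac{\eps\lambda m}{10C}\|b\|)$ gives a density ratio of at most $\exp(\frac{\eps\lambda m}{10C}\|\tilde w-\tilde w'\|)\le e^\eps$. Projection and forming $U^+$ are post-processing.
\end{enumerate}

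\textbf{Utility.} Fix $R$ and consider the event $E_1$ of \Cref{lem:localized-lipext-bias}, which holds with probability $\ge 1-\beta$. Because that lemma is uniform over all closed convex $U\subseteq\WW$ with $\diam(U)\le R$ containing $\hat w_\lambda$, it covers a $U$ chosen adaptively from earlier private outputs, since such outputs can only select a member of this class. On $E_1$,
\[
\|w^\star-\hat w_\lambda\|\le\sqrt{\frac{2RG_k^k}{\lambda\beta(k-1)C^{k-1}}}\le 2\sqrt{\frac{RG_k^k}{\lambda\beta C^{k-1}}}.
\]

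Let $E_2$ be the event $\|b\|\le \frac{10C}{\eps\lambda m}\cdot c(d+\ell)$. The isotropic Laplace norm is Gamma$(d,\eps\lambda m/(10C))$, so a standard Gamma tail bound gives $\Pr(E_2)\ge 1-\beta$. The noise is independent of $Z$ and of the previous outputs given $U$, so this bound holds conditionally. Since $d+\ell\le 2d\ell$ (with $\ell\ge1$ for small $\beta$), on $E_2$ we have $\|b\|\lesssim Cd\ell/(\lambda m\eps)$.

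On $E_1\cap E_2$, nonexpansiveness of $\Pi_\WW$ and $\hat w_\lambda\in\WW$ give
\[
\|w_{\mathrm{loc}}-\hat w_\lambda\|\le\|\tilde w-w^\star\|+\|w^\star-\hat w_\lambda\|+\|b\|.
\]
The three terms are bounded as follows:
\begin{itemize}
\item $\|\tilde w-w^\star\|\le \frac{C}{7\lambda m}\min\{1,d\ell/\eps\}\le \frac{Cd\ell}{\lambda m\eps}$;
\item $\|w^\star-\hat w_\lambda\|\le 2\sqrt{RG_k^k/(\lambda\beta C^{k-1})}$, from $E_1$;
\item $\|b\|\lesssim Cd\ell/(\lambda m\eps)$, from $E_2$.
\end{itemize}
The total is at most $R^+$ given the factor $100$. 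Since $\hat w_\lambda\in\WW$, it follows that $\hat w_\lambda\in U^+$. A union bound gives probability at least $1-2\beta$.

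\textbf{Main obstacle.} The delicate points are, first, the privacy step's reliance on a deterministic $\alpha$-accuracy, via a fixed deterministic solver, rather than one that holds only with high probability. Second, the utility step's adaptivity, which is resolved by the uniformity of \Cref{lem:localized-lipext-bias} over $U$. I also still need to check the numerical constants in the Gamma tail bound against the factor $100$ in $R^+$.
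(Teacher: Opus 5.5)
Your proposal is correct and follows the paper's proof essentially step for step. Privacy comes from the $2C/(\lambda m)$ sensitivity of the exact localized-extension minimizer plus the deterministic $\sqrt{2\alpha/\lambda}$ approximation error, calibrated to the Laplace scale $10C/(\eps\lambda m)$. Utility comes from the Markov-based bias event that is uniform over all $U$ with $\diam(U)\le R$, a Gamma tail bound on $\|b\|$, a union bound, and nonexpansiveness of $\Pi_{\WW}$.

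The only cosmetic difference is that you bound $\|b\|$ by $\frac{10C}{\eps\lambda m}\cdot c(d+\ell)$ and then use $d+\ell\lesssim d\ell$, rather than the paper's direct $12Cd\ell/(\lambda m\eps)$. Taking $c=2$ (Chernoff at parameter $1/2$) gives $\|b\|\le 40\,Cd\ell/(\lambda m\eps)$ for $\ell\ge 1$, so the constants fit comfortably inside the factor $100$ in $R^+$.
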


\begin{proof}
Let
\[
\hat w_{C,U,\lambda}(Z;w_0)
:=
\argmin_{w\in U}\hat F^{(w_0)}_{C,U,\lambda,Z}(w).
\]

\paragraph{Privacy.}
For fixed \(U\), each \(f_{C,U}(\cdot,z)\) is \(C\)-Lipschitz on \(U\). Hence
exact minimizers of neighboring \(\lambda\)-strongly convex empirical objectives
have \(\ell_2\)-sensitivity at most \(2C/(\lambda m)\). Since
\(\tilde w\) is an \(\alpha\)-approximate minimizer and
\(\hat F^{(w_0)}_{C,U,\lambda,Z}\) is \(\lambda\)-strongly convex,
\[
\|\tilde w-\hat w_{C,U,\lambda}(Z;w_0)\|
\le
\sqrt{\frac{2\alpha}{\lambda}}
=
\frac{C}{\sqrt{50}\lambda m}
\min\left\{1,\frac{d\ell}{\eps}\right\}
\le
\frac{C}{\sqrt{50}\lambda m}.
\]
Therefore the $\ell_2$-sensitivity of the deterministic map \(Z\mapsto\tilde w(Z)\) is
at most
\[
\frac{2C}{\lambda m}
+
2\cdot \frac{C}{\sqrt{50}\lambda m}
\le
\frac{3C}{\lambda m}.
\]
The isotropic Laplace mechanism with density proportional to
\[
\exp\!\left(
-\frac{\eps\lambda m}{10C}\|b\|
\right)
\]
is therefore \(\eps\)-DP. Projection onto \(\WW\)
and the construction of \(U^+\) are post-processing. For adaptively chosen
\(U\), this privacy statement applies conditionally on the previous private
outputs, and the full recursive use follows by adaptive composition.

\paragraph{Uniform localized-bias event.}
Define
\[
A(z):=\sup_{u\in\WW}\sup_{g\in\partial f(\cdot,z)(u)}\|g\|
\]
and the event
\[
E_{\mathrm{bias}}
:=
\left\{
\frac1m\sum_{i=1}^m (A(z_i)-C)_+
\le
\frac{G_k^k}{\beta(k-1)C^{k-1}}
\right\}.
\]
By the tail-integral bound and Markov's inequality,
\(\Pr(E_{\mathrm{bias}})\ge 1-\beta\). On \(E_{\mathrm{bias}}\), by
\Cref{lem:localized-lipext-bias}, simultaneously for every closed convex
\(V\subseteq\WW\) with \(\diam(V)\le R\),
\[
\max_{w\in V}
\bigl(\hat F_Z(w)-\hat F_{C,V,Z}(w)\bigr)
\le
\frac{R G_k^k}{\beta(k-1)C^{k-1}}.
\]
This uniformity is what allows the utility proof to apply to adaptively chosen
sets \(U\).

\paragraph{Distance from the localized-extension minimizer to the original ERM minimizer.}
Assume \(\hat w_\lambda(Z;w_0)\in U\). On \(E_{\mathrm{bias}}\), using
\(\diam(U)\le R\),
\[
\max_{w\in U}
\bigl(\hat F_Z(w)-\hat F_{C,U,Z}(w)\bigr)
\le
\frac{R G_k^k}{\beta(k-1)C^{k-1}}.
\]
For every \(w\in U\),
\[
\hat F^{(w_0)}_{\lambda,Z}(w)
\le
\hat F^{(w_0)}_{C,U,\lambda,Z}(w)
+
\max_{u\in U}\bigl(\hat F_Z(u)-\hat F_{C,U,Z}(u)\bigr).
\]
Moreover, since \(\hat w_\lambda\in U\) and \(f_{C,U}\le f\) on \(U\),
\[
\hat F^{(w_0)}_{C,U,\lambda,Z}(\hat w_{C,U,\lambda})
\le
\hat F^{(w_0)}_{C,U,\lambda,Z}(\hat w_\lambda)
\le
\hat F^{(w_0)}_{\lambda,Z}(\hat w_\lambda).
\]
Combining the last two displays gives
\[
\hat F^{(w_0)}_{\lambda,Z}(\hat w_{C,U,\lambda})
-
\hat F^{(w_0)}_{\lambda,Z}(\hat w_\lambda)
\le
\frac{R G_k^k}{\beta(k-1)C^{k-1}}.
\]
Since \(\hat F^{(w_0)}_{\lambda,Z}\) is \(\lambda\)-strongly convex,
\[
\|\hat w_{C,U,\lambda}-\hat w_\lambda\|
\le
\sqrt{
\frac{2R G_k^k}{\lambda\beta(k-1)C^{k-1}}
}
\]
on \(E_{\mathrm{bias}}\). 

\paragraph{Noise tail and conclusion.}
Let
\[
s:=\frac{10C}{\eps\lambda m},
\qquad
\ell:=\log(1/\beta).
\]
For isotropic Laplace noise with density proportional to \(\exp(-\|b\|/s)\),
the radial variable \(\|b\|/s\) has Gamma\((d,1)\) distribution. Hence by a Chernoff bound
\[
\Pr\left(
\|b\|
\le
12\frac{Cd\ell}{\lambda m\eps}
\right)
\ge
1-\beta.
\]
Call this event \(E_{\mathrm{noise}}\).

On \(E_{\mathrm{bias}}\cap E_{\mathrm{noise}}\), using nonexpansiveness of
projection onto \(\WW\) and \(\hat w_\lambda\in\WW\),
\begin{align*}
\|w_{\mathrm{loc}}-\hat w_\lambda\|
&=
\|\Pi_{\WW}(\tilde w+b)-\Pi_{\WW}(\hat w_\lambda)\| \\
&\le
\|\tilde w+b-\hat w_\lambda\| \\
&\le
\|\tilde w-\hat w_{C,U,\lambda}\|
+
\|\hat w_{C,U,\lambda}-\hat w_\lambda\|
+
\|b\| \\
&\le
\sqrt{
\frac{2R G_k^k}{\lambda\beta(k-1)C^{k-1}}
}
+
\frac{20Cd\ell}{\lambda m\eps} \le r^{+}
\end{align*}
Here we used the choice of \(\alpha\), which gives
\[
\|\tilde w-\hat w_{C,U,\lambda}\|
\le
\frac{C}{\sqrt{50}\lambda m}
\min\left\{1,\frac{d\ell}{\eps}\right\}
\le
\frac{Cd\ell}{\sqrt{50}\lambda m\eps}.
\]

Therefore
\[
\hat w_\lambda(Z;w_0)\in
\WW\cap\BB(w_{\mathrm{loc}},R^+)
=
U^+,
\]
on $E_{\mathrm{bias}}\cap E_{\mathrm{noise}}$.

Finally,
\[
\Pr(E_{\mathrm{bias}}\cap E_{\mathrm{noise}})
\ge
1-2\beta,
\]
which proves the lemma.
\end{proof}

\begin{proof}[Proof of \Cref{prop:bias-reduced-decomp}]
\textbf{Privacy:} Privacy follows from basic adaptive composition:
\[
\sum_{j=0}^{S-1}\eps_j
=
\frac{\eps}{2}\sum_{j=0}^{S-1}\theta_j
=
\frac{\eps}{2}
\le \eps.
\]
Thus the recursive localization algorithm is \(\eps\)-DP.

\paragraph{Distance Bound:}
For each round \(j\), define the bias-good event
\[
B_j
:=
\left\{
\frac1m\sum_{i=1}^m (A(z_i)-C_j)_+
\le
\frac{G_k^k}{\beta_j(k-1)C_j^{k-1}}
\right\},
\qquad
A(z):=\sup_{u\in\WW}\sup_{g\in\partial f(\cdot,z)(u)}\|g\|.
\]
By Markov's inequality and \Cref{lem:localized-lipext-bias},
\(\Pr(B_j)\ge1-\beta_j\). Crucially, \(B_j\) controls the localized-extension
bias simultaneously for every closed convex \(U\subseteq\WW\) with
\(\diam(U)\le R_j\), so it applies to the adaptively chosen set \(U_j\).

Let \[ 
N_j := \left\{\|b_j\| \le c_{\mathrm{Lap}}\frac{C_j d \ell_j}{\lambda m \eps_j} \right\}
\]
denote the corresponding Laplace-noise good event in round \(j\), where $b_j$ is the isotropic Laplace noise that is added in the $j$-th call of \Cref{alg:localized-outputpert} and $c_{\mathrm{Lap}}$ is a sufficiently large absolute constant that we will absorb into the final constant.  
Then \(\Pr(N_j)\ge1-\beta_j\). On the event
\[
\mathcal G:=\bigcap_{j=0}^{S-1}(B_j\cap N_j),
\]
an induction using \Cref{lem:localized-outputpert} gives
\[
\hat w_\lambda(Z;w_0)\in U_j
\qquad\text{for all }j=0,\dots,S.
\]
Moreover,
\[
\Pr(\mathcal G)
\ge
1-2\sum_{j=0}^{S-1}\beta_j
=
1-2\beta_0
\ge 0.98.
\]

It remains to bound the final radius. Set
\[
a:=\frac1{k+1},
\qquad
b:=\frac{k-1}{k+1},
\qquad
R_\star
:=
\frac{G_k}{\lambda}
\left(\frac{d}{m\eps}\right)^{1-\frac1k}.
\]
If \(R_\star\ge D\), then
\[
\|w_{\mathrm{loc},S}-\hat w_\lambda(Z;w_0)\|\le D\le R_\star,
\]
so the claim holds. Hence assume \(R_\star<D\).

By the radius update and the definitions of \(\eps_j,\beta_j\),
\[
R_{j+1}
\le
c
\left(\frac{G_k}{\lambda}\right)^{k/(k+1)}
R_j^a
\left(\frac{d\ell_j}{m\eps_j}\right)^b
\beta_j^{-a}.
\] 
 
Dividing by \(R_\star\), and using
\[
\eps_j=\frac{\eps}{2}\theta_j,
\qquad
\beta_j=\beta_0\theta_j,
\]
we obtain
\[
\frac{R_{j+1}}{R_\star}
\le
C_k
\left(\frac{R_j}{R_\star}\right)^a
\theta_j^{-(a+b)}
\ell_j^b,
\]
where \(C_k\) is an absolute constant depending only on \(k\) and \(\beta_0\).
Here \(a+b=k/(k+1)\). Indeed, the identity follows from
\[
R_\star^{1-a}
=
\left(\frac{G_k}{\lambda}\right)^{k/(k+1)}
\left(\frac{d}{m\eps}\right)^{(k-1)/(k+1)}.
\]

Unrolling the recurrence gives
\[
\frac{R_S}{R_\star}
\le
C_k^{1+a+\cdots+a^{S-1}}
\left(\frac{R_0}{R_\star}\right)^{a^S}
\prod_{j=0}^{S-1}
\left(
\theta_j^{-(a+b)}\ell_j^b
\right)^{a^{S-1-j}}.
\]
Since \(a<1\), the prefactor \(C_k^{1+a+\cdots+a^{S-1}}\) is bounded by
a constant depending only on \(k\), and we absorb it into \(C_k\). Thus
\[
\frac{R_S}{R_\star}
\le
C_k
\left(\frac{R_0}{R_\star}\right)^{a^S}
\prod_{j=0}^{S-1}
\left(
\theta_j^{-(a+b)}\ell_j^b
\right)^{a^{S-1-j}}.
\]

It remains to bound the product. Write \(q=S-1-j\). By the choice
\[
\theta_j
=
\frac{(1-a)a^{S-1-j}}{1-a^S},
\]
we have
\[
\theta_j \asymp_k a^q,
\qquad
\ell_j=\log(1/\beta_j)
=
\log(1/(\beta_0\theta_j))
\lesssim_k 1+q .
\]
Therefore
\[
\log
\prod_{j=0}^{S-1}
\left(
\theta_j^{-(a+b)}\ell_j^b
\right)^{a^{S-1-j}}
=
\sum_{q=0}^{S-1}
a^q
\left[
(a+b)\log(1/\theta_{S-1-q})
+
b\log \ell_{S-1-q}
\right],
\]
by changing variables $q = S - 1 - j$. 
Using \(\theta_{S-1-q}\asymp_k a^q\) and \(\ell_{S-1-q}\lesssim_k 1+q\), this is at most
\[
C_k
+
C_k\sum_{q=0}^{\infty}a^q q
+
C_k\sum_{q=0}^{\infty}a^q\log(1+q)
\le C_k
\]
after enlarging $C_k$,
because \(a\in(0,1)\). Hence the product is bounded by a constant depending
only on \(k\) and \(\beta_0\). Consequently,
\[
R_S
\le
C_kR_\star
\left(\frac{R_0}{R_\star}\right)^{a^S}.
\]

Since \(R_0=D\) and
\[
S
=
\left\lceil
\log_{k+1}\log\left(\frac{eD}{R_\star}\right)
\right\rceil,
\]
we have
\[
\left(\frac{D}{R_\star}\right)^{a^S}
\le e.
\]
Hence
\[
R_S\le C_k R_\star
\]
after enlarging the constant $C_k$. 

On \(\mathcal G\), the final center \(w_{\mathrm{loc},S}\) satisfies
\[
\|w_{\mathrm{loc},S}-\hat w_\lambda(Z;w_0)\|\le R_S
\le
C_k
\frac{G_k}{\lambda}
\left(\frac{d}{m\eps}\right)^{1-\frac1k}.
\]
This proves the proposition.
\end{proof}

\subsection{Proofs for \Cref{sec:joint-adaptive}}
\label{app:joint-adaptive}

\begin{lemma}[Re-statement of \Cref{lem:localized-joint-convex}]
\label{lem:localized-joint-convex-precise}
The function \(\Phi_{Z,U}\) is convex on \(U^{m+1}\), and
\[
\min_{(w,y_1,\dots,y_m)\in U^{m+1}}\Phi_{Z,U}(w,y_1,\dots,y_m)
=
\min_{w\in U}\hat F^{(w_0)}_{C,U,\lambda,Z}(w).
\]
Moreover, if
\[
u_\alpha=(w_\alpha,y_{1,\alpha},\dots,y_{m,\alpha})\in U^{m+1}
\]
satisfies
\[
\Phi_{Z,U}(u_\alpha)-\min_{u\in U^{m+1}}\Phi_{Z,U}(u)\le \alpha,
\]
then
\[
\hat F^{(w_0)}_{C,U,\lambda,Z}(w_\alpha)
-
\min_{w\in U}\hat F^{(w_0)}_{C,U,\lambda,Z}(w)
\le \alpha.
\]
\end{lemma}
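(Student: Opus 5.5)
The plan is to prove convexity directly, then compare the two minima by minimizing out the auxiliary variables $y_i$ first, and finally deduce the approximation statement from a pointwise inequality.

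\textbf{Convexity.} Each summand of $\Phi_{Z,U}$ is convex on $U^{m+1}$. The map $(w,y_1,\dots,y_m)\mapsto f(y_i,z_i)$ is a convex function of one coordinate block. The map $(w,y_i)\mapsto C\|w-y_i\|$ is a norm composed with a linear map, hence jointly convex. The term $\frac{\lambda}{2}\|w-w_0\|^2$ is convex in $w$. A nonnegative combination of convex functions is convex, and $U^{m+1}$ is convex because $U$ is.

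\textbf{Equality of minima.} The key observation is that, for fixed $w\in U$, $\Phi_{Z,U}$ is separable in the $y_i$. Minimizing each $y_i$ separately over $U$ therefore gives
\[
\inf_{y_1,\dots,y_m\in U}\Phi_{Z,U}(w,y_1,\dots,y_m)
=\frac1m\sum_{i=1}^m\inf_{y_i\in U}\bigl[f(y_i,z_i)+C\|w-y_i\|\bigr]+\frac{\lambda}{2}\|w-w_0\|^2
=\hat F^{(w_0)}_{C,U,\lambda,Z}(w).
\]
Taking the infimum over $w\in U$ shows that the two infima coincide.

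Attainment then needs to be checked. The set $U$ is closed and contained in the compact set $\WW$, so $U$ is compact. A finite convex function on $\WW$ is continuous on relative interior points but may fail to be lower semicontinuous at the boundary, so I would not assume lower semicontinuity. The right-hand side is attained because $\hat F^{(w_0)}_{C,U,\lambda,Z}$ is continuous on $U$: $f_{C,U}$ is $C$-Lipschitz there, and the quadratic term is continuous. For the left-hand side, the inner infimum defining $f_{C,U}$ is also attained in the cases of interest, since $f(\cdot,z)$ is convex with finite subgradients everywhere, hence Lipschitz on $\WW$ (with constant $A(z)$) and in particular continuous. Combining the attaining $w$ with attaining $y_i$ gives attainment for $\Phi_{Z,U}$. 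If attainment were in doubt, the argument could be run with infima throughout, and the final statement is unaffected.

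\textbf{Approximation transfer.} For any $u=(w,y_1,\dots,y_m)\in U^{m+1}$, the definition of $f_{C,U}$ gives $f(y_i,z_i)+C\|w-y_i\|\ge f_{C,U}(w,z_i)$. Hence $\Phi_{Z,U}(u)\ge\hat F^{(w_0)}_{C,U,\lambda,Z}(w)$. Applying this at $u_\alpha$ and using the equality of minima,
\[
\hat F^{(w_0)}_{C,U,\lambda,Z}(w_\alpha)-\min_{U}\hat F^{(w_0)}_{C,U,\lambda,Z}\le\Phi_{Z,U}(u_\alpha)-\min_{U^{m+1}}\Phi_{Z,U}\le\alpha.
\]

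The only step that is not routine is the attainment and continuity check. The rest is separability of the $y_i$ together with the pointwise inequality above.
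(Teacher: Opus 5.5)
Your proposal is correct and follows the paper's proof essentially verbatim: joint convexity of $C\|w-y_i\|$, separability in the $y_i$ for fixed $w$, and the pointwise bound $\hat F^{(w_0)}_{C,U,\lambda,Z}(w_\alpha)\le\Phi_{Z,U}(u_\alpha)$. The paper simply writes $\min$ and does not discuss attainment, so your check is extra and fine; you could also observe that having a subgradient at every point of $\WW$ already makes $f(\cdot,z)$ lower semicontinuous, so the inner minima are attained without needing $A(z)<\infty$.
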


\begin{proof}
Convexity is immediate since each \(f(\cdot,z_i)\) is convex and
\((w,y_i)\mapsto \|w-y_i\|\) is jointly convex. For fixed \(w\in U\), the
variables \(y_1,\dots,y_m\) separate, and
\[
\min_{y_i\in U}\bigl[f(y_i,z_i)+C\|w-y_i\|\bigr]=f_{C,U}(w,z_i).
\]
Substituting this identity into \eqref{eq:localized-joint-program} proves the equality of optima. The final claim follows from
\[
\hat F^{(w_0)}_{C,U,\lambda,Z}(w_\alpha)
=
\min_{y_1,\dots,y_m\in U}\Phi_{Z,U}(w_\alpha,y_1,\dots,y_m)
\le
\Phi_{Z,U}(u_\alpha).
\qedhere
\]
\end{proof}

\begin{proposition}[Precise version of \Cref{prop:adaptive-proj-subgrad-inexact}]
\label{prop:adaptive-proj-subgrad-inexact-precise}
Let \(K \subseteq \mathbb{R}^q\) be a nonempty compact convex set equipped with an \(\xi\)-inexact projection oracle for every \(\xi>0\), and suppose
\(\operatorname{diam}(K)\le D\). Let \(\Phi:K\to\mathbb{R}\) be convex and \(L\)-Lipschitz on \(K\) for some finite but unknown \(L>0\). Suppose we are given an exact subgradient oracle: for every \(x\in K\), the oracle returns a vector \(g(x)\in\mathbb{R}^q\) satisfying
\[
\Phi(y)\ge \Phi(x)+\langle g(x),y-x\rangle
\qquad\text{for all }y\in K.
\]

Fix \(\alpha>0\). Then Algorithm~\ref{alg:adaptive-inexact-proj-subgrad} halts after at most
\[
T_\alpha:=\left\lceil \left(\frac{3DL}{\alpha}\right)^2 \right\rceil
\]
iterations, and its output \(u_\alpha\in K\) satisfies
\[
\Phi(u_\alpha)-\min_{u\in K}\Phi(u)\le \alpha.
\]

In particular, the algorithm uses at most \(T_\alpha\) calls to the subgradient oracle and at most \(T_\alpha\) calls to the inexact projection oracle. Hence, if each subgradient query and each \(\xi\)-inexact projection can be performed in polynomial time, then the total running time is polynomial in \(q\), \(D\), \(L\), and \(1/\alpha\).
\end{proposition}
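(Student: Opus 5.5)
We follow the standard adaptive (AdaGrad-norm) analysis of projected subgradient descent and add one extra error term per step for the projection inexactness. The plan has two parts: a per-step descent inequality, then a telescoping argument that yields both the termination bound and the certificate.

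\textbf{Per-step inequality.} Fix any $u^\star\in\argmin_K\Phi$, which exists since $K$ is compact and $\Phi$ is continuous. Write $y_{t+1}:=\Pi_K(x_t-\eta_tg_t)$ for the exact projection, so that $\|x_{t+1}-y_{t+1}\|\le\xi_t$. Nonexpansiveness of the exact projection and $u^\star\in K$ give
\[
\|y_{t+1}-u^\star\|^2\le\|x_t-u^\star\|^2-2\eta_t\langle g_t,x_t-u^\star\rangle+\eta_t^2\|g_t\|^2 .
\]
Since $x_{t+1},u^\star\in K$, we have $\|x_{t+1}-u^\star\|\le D$. Hence
\[
\|x_{t+1}-u^\star\|^2\le\|y_{t+1}-u^\star\|^2+2D\xi_t+\xi_t^2\le\|y_{t+1}-u^\star\|^2+3D\xi_t,
\]
using $\xi_t\le D$. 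Combining the two displays, dividing by $2\eta_t$, and applying the subgradient inequality yields
\[
\Phi(x_t)-\Phi(u^\star)\le\frac{\|x_t-u^\star\|^2-\|x_{t+1}-u^\star\|^2}{2\eta_t}+\frac{\eta_t}{2}\|g_t\|^2+\frac{3D\xi_t}{2\eta_t}.
\]
The last term is at most $\alpha/4$ by the choice $\xi_t\le\alpha\eta_t/(6D)$.

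\textbf{Telescoping.} Sum over $s=1,\dots,t$. Since $\eta_s$ is nonincreasing and every distance is at most $D$, the telescoped distance terms are bounded by $D^2/(2\eta_t)=D\sqrt{S_t}/2$. The gradient terms are handled by the standard inequality $\sum_s\|g_s\|^2/\sqrt{S_s}\le2\sqrt{S_t}$, which bounds $\sum_s\frac{\eta_s}{2}\|g_s\|^2$ by $D\sqrt{S_t}$. Jensen's inequality then gives
\[
\Phi(\bar x_t)-\Phi(u^\star)\le\frac{3D\sqrt{S_t}}{2t}+\frac{\alpha}{4}.
\]
When the stopping test $3D\sqrt{S_t}\le\alpha t$ fires, this is at most $\alpha/2+\alpha/4<\alpha$. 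Note that $\bar x_t\in K$ because $K$ is convex. If instead $g_t=0$, then $x_t$ is an exact minimizer. This establishes the accuracy claim.

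\textbf{Termination.} $L$-Lipschitzness of $\Phi$ on $K$ is used here. It gives $\|g_s\|\le L$, except possibly at boundary points; there I would either assume the oracle returns subgradients of norm at most $L$ (e.g.\ the minimal-norm one) or note that $\Phi$ extends as an $L$-Lipschitz convex function with bounded subgradients. Then $S_t\le tL^2$, so the test holds as soon as $3DL\sqrt t\le\alpha t$, that is, for $t\ge(3DL/\alpha)^2$. The loop therefore halts within $T_\alpha$ iterations. The runtime statement follows because each iteration makes one oracle call and one inexact projection.

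\textbf{Main obstacle.} The delicate step is handling the inexact projection. The error enters through cross terms $2\langle y_{t+1}-u^\star,x_{t+1}-y_{t+1}\rangle$, which are bounded only via the diameter. Dividing by $2\eta_t$ blows this error up by $1/\eta_t$, and that growing factor is exactly why $\xi_t$ must scale with $\eta_t$. I would verify the constants in this step carefully.
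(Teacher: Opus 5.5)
Your proposal is correct and follows the paper's proof essentially line for line: the same per-step inequality charging the inexact projection as $2D\xi_t+\xi_t^2\le 3D\xi_t$, the same AdaGrad telescoping yielding $\Phi(\bar x_t)-\Phi(u^\star)\le \frac{3D\sqrt{S_t}}{2t}+\frac{\alpha}{4}$, and the same $S_t\le tL^2$ termination argument. Your boundary caveat points to a real imprecision that the paper glosses over by simply asserting $\|g_t\|\le L$ (normal-cone components can make subgradients relative to $K$ arbitrarily large), and of your two fixes only the first closes the step --- assuming the oracle returns subgradients of norm at most $L$ --- since the existence of a Lipschitz convex extension does not by itself constrain what the oracle returns.
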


\begin{proof}
Let \(x^\star\in\arg\min_{x\in K}\Phi(x)\), which exists since \(K\) is compact and \(\Phi\) is continuous.

If the algorithm encounters \(g_t=0\), then \(x_t\) is an exact minimizer and the claim is immediate. Hence assume \(g_t\neq 0\) before termination.

Fix \(t\ge 1\), and let
\[
y_t:=x_t-\eta_t g_t,
\qquad
p_t:=\Pi_K(y_t),
\qquad
e_t:=2D\xi_t+\xi_t^2.
\]
Since \(\|x_{t+1}-p_t\|\le \xi_t\) and \(\operatorname{diam}(K)\le D\),
\[
\|x_{t+1}-x^\star\|^2
\le
\|p_t-x^\star\|^2+2D\xi_t+\xi_t^2
=
\|p_t-x^\star\|^2+e_t.
\]
Projection is nonexpansive, so
\[
\|p_t-x^\star\|\le \|y_t-x^\star\|.
\]
Therefore
\[
\|x_{t+1}-x^\star\|^2
\le
\|x_t-x^\star\|^2
-2\eta_t\langle g_t,x_t-x^\star\rangle
+\eta_t^2\|g_t\|^2
+e_t.
\]
By the subgradient inequality,
\[
\Phi(x_t)-\Phi(x^\star)\le \langle g_t,x_t-x^\star\rangle.
\]
Hence
\[
2\eta_t(\Phi(x_t)-\Phi(x^\star))
\le
\|x_t-x^\star\|^2-\|x_{t+1}-x^\star\|^2+\eta_t^2\|g_t\|^2+e_t.
\]

Summing from \(t=1\) to \(T\) and using the standard AdaGrad telescoping argument yields
\[
\sum_{t=1}^T (\Phi(x_t)-\Phi(x^\star))
\le
\frac{D^2}{2\eta_T}
+\frac12\sum_{t=1}^T \eta_t\|g_t\|^2
+\frac12\sum_{t=1}^T \frac{e_t}{\eta_t}.
\]
Since \(\eta_T=D/\sqrt{S_T}\),
\[
\frac{D^2}{2\eta_T}=\frac{D}{2}\sqrt{S_T}.
\]
Also,
\[
\sum_{t=1}^T \eta_t\|g_t\|^2
=
D\sum_{t=1}^T \frac{\|g_t\|^2}{\sqrt{S_t}}
\le
2D\sqrt{S_T}.
\]
Finally, because \(\xi_t=\min\{D,\alpha\eta_t/(6D)\}\), we have
\[
e_t=2D\xi_t+\xi_t^2\le 3D\xi_t\le \frac{\alpha\eta_t}{2},
\]
so
\[
\frac12\sum_{t=1}^T \frac{e_t}{\eta_t}\le \frac{\alpha T}{4}.
\]
Combining the bounds,
\[
\sum_{t=1}^T (\Phi(x_t)-\Phi(x^\star))
\le
\frac{3D}{2}\sqrt{S_T}+\frac{\alpha T}{4}.
\]
By convexity,
\[
\Phi(\bar x_T)-\Phi(x^\star)
\le
\frac1T\sum_{t=1}^T (\Phi(x_t)-\Phi(x^\star))
\le
\frac{3D}{2T}\sqrt{S_T}+\frac{\alpha}{4}.
\]
Thus whenever \(3D\sqrt{S_T}\le \alpha T\), the returned point \(\bar x_T\) satisfies
\[
\Phi(\bar x_T)-\Phi(x^\star)<\alpha.
\]

To show finite termination, since \(\|g_t\|\le L\),
\[
S_T\le TL^2.
\]
Hence \(3D\sqrt{S_T}\le 3DL\sqrt T\), so the stopping condition is guaranteed once
\[
3DL\sqrt T\le \alpha T,
\]
i.e. once
\[
T\ge \left(\frac{3DL}{\alpha}\right)^2.
\]
This proves the claim.
\end{proof}

\begin{lemma}[Precise version of \Cref{lem:inexact-proj-W0}]
\label{lem:inexact-proj-W0-precise}
Let \(\WW\subseteq \mathbb{R}^d\) be a nonempty compact convex set, let
\(c\in \WW\), and let \(r>0\). Define
\[
U:=\WW\cap B(c,r)=\{w\in \WW:\|w-c\|_2\le r\}.
\]
Assume that Euclidean projection onto \(\WW\) can be computed exactly in time
\(\mathrm{T}_{\mathrm{proj}}(d)\). Then, for every \(y\in\mathbb{R}^d\) and
every \(\xi>0\), one can compute a point
\(\widetilde \Pi^\xi_U(y)\in U\) satisfying
\[
\bigl\|\widetilde \Pi^\xi_U(y)-\Pi_U(y)\bigr\|_2\le \xi
\]
using
\[
O\!\left(
\log\!\left(
1+\frac{\|y-c\|_2^2}{r\xi}
\right)
\right)
\]
calls to the projection oracle for \(\WW\), plus polynomial-time arithmetic in
\(d\). In particular, if projection onto \(\WW\) is polynomial-time computable,
then so is \(\xi\)-inexact projection onto \(U\).
\end{lemma}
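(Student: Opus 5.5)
The plan is to reduce the projection onto \(U=\WW\cap B(c,r)\) to a one-dimensional search over a Lagrange multiplier for the ball constraint, where each candidate is evaluated with one call to \(\Pi_{\WW}\). Write the projection problem as
\[
\min_{w\in\WW}\ \tfrac12\|w-y\|^2 \quad\text{s.t.}\quad \tfrac12\bigl(\|w-c\|^2-r^2\bigr)\le 0 .
\]
Dualize only the ball constraint, with multiplier \(\mu\ge 0\). Completing the square shows that the Lagrangian's minimizer over \(\WW\) is
\[
w(\mu)=\Pi_{\WW}\!\Bigl(\tfrac{y+\mu c}{1+\mu}\Bigr).
\]
This minimizer is unique because the Lagrangian is \((1+\mu)\)-strongly convex in \(w\).

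\textbf{Step 1: strong duality and the KKT characterization.} Slater's condition holds because \(c\in\WW\) satisfies \(\|c-c\|<r\). Hence \(\Pi_U(y)=w(\mu^\star)\) for any dual optimum \(\mu^\star\). This splits into two cases.
\begin{itemize}
\item If \(\|\Pi_{\WW}(y)-c\|\le r\), then \(\mu^\star=0\) and we output \(\Pi_{\WW}(y)\) exactly, using one oracle call.
\item Otherwise, complementary slackness says that \(\mu^\star>0\) and \(\|w(\mu^\star)-c\|=r\).
\end{itemize}

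\textbf{Step 2: monotonicity and a bracket.} Let \(g(\mu)=\min_{w\in\WW}L(w,\mu)\) be the dual function. It is concave, and by Danskin's theorem (the inner minimizer is unique) its derivative is \(g'(\mu)=\tfrac12(\|w(\mu)-c\|^2-r^2)\). Concavity then makes \(\varphi(\mu):=\|w(\mu)-c\|\) nonincreasing; \(\varphi\) is also continuous.

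For the upper end of the bracket, use \(c\in\WW\) and nonexpansiveness of \(\Pi_{\WW}\):
\[
\varphi(\mu)\le \Bigl\|\tfrac{y+\mu c}{1+\mu}-c\Bigr\|=\frac{\|y-c\|}{1+\mu}.
\]
So \(\mu_{\max}:=\|y-c\|/r\) satisfies \(\varphi(\mu_{\max})\le r\). Together with \(\varphi(0)>r\), this places \(\mu^\star\) in \([0,\mu_{\max}]\), and \(\mu^\star\) can be taken to be any root of \(\varphi=r\).

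\textbf{Step 3: bisection and error transfer.} Bisect on \([0,\mu_{\max}]\), keeping the invariant \(\varphi(\mu_{\rm lo})>r\ge\varphi(\mu_{\rm hi})\), so that \(\mu^\star\in[\mu_{\rm lo},\mu_{\rm hi}]\). Output \(w(\mu_{\rm hi})\), which lies in \(U\) by the invariant.

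To convert an error in \(\mu\) into an error in \(w\), use the Lipschitz property of the map \(\mu\mapsto (y+\mu c)/(1+\mu)\). Its derivative is \((c-y)/(1+\mu)^2\), so the map is \(\|y-c\|\)-Lipschitz on \(\mu\ge0\). Since \(\Pi_{\WW}\) is nonexpansive,
\[
\|w(\mu_{\rm hi})-\Pi_U(y)\|\le \|y-c\|\,(\mu_{\rm hi}-\mu_{\rm lo}).
\]
It therefore suffices to shrink the bracket to width \(\xi/\|y-c\|\). That takes \(O\bigl(\log(1+\mu_{\max}\|y-c\|/\xi)\bigr)=O\bigl(\log(1+\|y-c\|^2/(r\xi))\bigr)\) iterations, each costing one \(\Pi_{\WW}\) call plus \(O(d)\) arithmetic.

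\textbf{Main obstacle.} The delicate step is Step 2: justifying that \(\varphi\) is monotone and that every root of \(\varphi=r\) gives exactly \(\Pi_U(y)\), without assuming smoothness of \(\WW\). I would handle this through the concave dual function and Danskin's theorem, relying on uniqueness of \(w(\mu)\).

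Identifying a root with the projection is then straightforward. At such a \(\mu\), \(w(\mu)\) is feasible, minimizes the Lagrangian over \(\WW\), and satisfies complementary slackness. By weak duality it is therefore optimal, so it equals the unique projection \(\Pi_U(y)\).
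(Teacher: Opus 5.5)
Your proposal is correct and follows essentially the same route as the paper: the multiplier representation $w(\mu)=\Pi_{\WW}\bigl((y+\mu c)/(1+\mu)\bigr)$, monotonicity of $\|w(\mu)-c\|$ via the concave dual function and Danskin, the bracket $[0,\|y-c\|/r]$, bisection returning the feasible upper endpoint, and the $\|y-c\|$-Lipschitz transfer of multiplier error to point error. One small difference is your closing weak-duality check that any root of $\|w(\mu)-c\|=r$ gives $\Pi_U(y)$; combined with the intermediate value theorem, it makes the Slater/KKT step, which the paper argues via relative interiors, unnecessary for correctness.
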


\begin{proof}
Fix \(y\in\mathbb{R}^d\) and \(\xi>0\), and let
\[
x^\star:=\Pi_U(y).
\]
Since \(U\) is nonempty, compact, and convex, \(x^\star\) is well-defined and
unique.

If \(y=c\), then \(c\in U\) and \(x^\star=c\), so we can return \(c\). Hence
assume \(R:=\|y-c\|_2>0\).

\paragraph{Step 1: Multiplier representation.}
If \(\Pi_{\WW}(y)\in U\), then \(\Pi_U(y)=\Pi_{\WW}(y)\), so we are done.
Thus assume
\[
\|\Pi_{\WW}(y)-c\|_2>r.
\]

Consider the projection problem
\[
\min_{x\in \WW}\ \frac12\|x-y\|_2^2
\qquad\text{s.t.}\qquad
\frac12\bigl(\|x-c\|_2^2-r^2\bigr)\le 0.
\]
The feasible set is \(U\). Since \(c\in\WW\) and \(r>0\), the set
\(\WW\cap B(c,r)\) has nonempty relative interior in \(\operatorname{aff}(\WW)\):
indeed, because \(\operatorname{ri}(\WW)\) is dense in \(\WW\), there exists
\(u\in\operatorname{ri}(\WW)\) with \(\|u-c\|_2<r\). Therefore Slater's condition
holds relative to \(\operatorname{aff}(\WW)\), and the KKT conditions are
necessary and sufficient.

Hence there exists a multiplier \(\lambda^\star\ge 0\) such that
\[
0\in x^\star-y+N_{\WW}(x^\star)+\lambda^\star(x^\star-c),
\]
with complementary slackness
\[
\lambda^\star\bigl(\|x^\star-c\|_2^2-r^2\bigr)=0.
\]
Since \(\Pi_{\WW}(y)\notin U\), the ball constraint is active at \(x^\star\),
so \(\lambda^\star>0\) and \(\|x^\star-c\|_2=r\).

The stationarity condition is exactly the optimality condition for minimizing
over \(\WW\) the strongly convex function
\[
x\mapsto
\frac12\|x-y\|_2^2+\frac{\lambda^\star}{2}\|x-c\|_2^2.
\]
Completing the square gives
\[
x^\star
=
\Pi_{\WW}\!\left(
\frac{y+\lambda^\star c}{1+\lambda^\star}
\right).
\]

For \(\lambda\ge0\), define
\[
z_\lambda:=\frac{y+\lambda c}{1+\lambda},
\qquad
x_\lambda:=\Pi_{\WW}(z_\lambda),
\qquad
g(\lambda):=\|x_\lambda-c\|_2^2-r^2.
\]
The preceding argument shows that \(x^\star=x_{\lambda^\star}\) for some
\(\lambda^\star>0\) satisfying \(g(\lambda^\star)=0\).

\paragraph{Step 2: Monotonicity and bracketing.}
We claim that \(g\) is continuous and nonincreasing. Continuity follows from
continuity of \(z_\lambda\) and nonexpansiveness of projection onto \(\WW\).
For monotonicity, define
\[
\psi(\lambda)
:=
\min_{x\in\WW}
\left\{
\frac12\|x-y\|_2^2
+
\frac{\lambda}{2}\bigl(\|x-c\|_2^2-r^2\bigr)
\right\}.
\]
This is the pointwise infimum of affine functions of \(\lambda\), hence
\(\psi\) is concave. By Danskin's theorem,
\[
\psi'(\lambda)
=
\frac12\bigl(\|x_\lambda-c\|_2^2-r^2\bigr)
=
\frac12 g(\lambda)
\]
at every point of differentiability, and the same conclusion holds in the
subgradient sense. Since the derivative/subgradient of a concave function is
nonincreasing, \(g\) is nonincreasing.

By assumption,
\[
g(0)=\|\Pi_{\WW}(y)-c\|_2^2-r^2>0.
\]
Let
\[
\bar\lambda:=\frac{R}{r}.
\]
Because \(c\in\WW\), nonexpansiveness of projection gives
\[
\|x_{\bar\lambda}-c\|_2
=
\|\Pi_{\WW}(z_{\bar\lambda})-\Pi_{\WW}(c)\|_2
\le
\|z_{\bar\lambda}-c\|_2
=
\frac{R}{1+\bar\lambda}
=
\frac{Rr}{R+r}
<r.
\]
Thus \(g(\bar\lambda)<0\). Hence \(\lambda^\star\in(0,\bar\lambda)\).

\paragraph{Step 3: Bisection.}
Perform bisection on \([0,\bar\lambda]\), using the sign of \(g(\lambda)\).
After \(N\) iterations, we obtain
\[
0\le \lambda_-\le \lambda^\star\le \lambda_+\le \bar\lambda
\]
such that
\[
g(\lambda_-)\ge0,\qquad g(\lambda_+)\le0,\qquad
\lambda_+-\lambda_-\le \frac{\bar\lambda}{2^N}.
\]
Since \(g(\lambda_+)\le0\), the point \(x_{\lambda_+}\) belongs to \(U\). We
return
\[
\widetilde \Pi^\xi_U(y):=x_{\lambda_+}.
\]

\paragraph{Step 4: Error bound.}
Projection onto \(\WW\) is nonexpansive, so for any \(\lambda,\mu\ge0\),
\[
\|x_\lambda-x_\mu\|_2
\le
\|z_\lambda-z_\mu\|_2
=
\frac{|\lambda-\mu|}{(1+\lambda)(1+\mu)}\|y-c\|_2
\le
R|\lambda-\mu|.
\]
Therefore
\[
\|\widetilde \Pi^\xi_U(y)-x^\star\|_2
=
\|x_{\lambda_+}-x_{\lambda^\star}\|_2
\le
R(\lambda_+-\lambda_-)
\le
R\frac{\bar\lambda}{2^N}
=
\frac{R^2}{r\,2^N}.
\]
Thus it suffices to take
\[
N
=
\left\lceil
\log_2\!\left(
1+\frac{\|y-c\|_2^2}{r\xi}
\right)
\right\rceil.
\]
This proves the claimed oracle complexity and runtime.
\end{proof}

\section{Deferred Proofs for \Cref{sec:main-dop}}

For completeness, we provide formal pseudocode of our efficient ERM solver in \Cref{alg:recursive-localized-erm}.

\begin{algorithm}[t]
\caption{\textsc{Efficient-Recursive-Localized-OutputPert}$(Z,\eps,\lambda,w_0)$}
\label{alg:recursive-localized-erm}
\DontPrintSemicolon
\KwIn{Dataset \(Z=(z_1,\dots,z_m)\), privacy parameter \(\eps\), regularization parameter \(\lambda\), center \(w_0\in\WW\)}
\KwOut{A private point \(w_{\mathrm{DP}}\in\WW\)}

Run \Cref{alg:recursive-localized-outputpert-main} on \((Z,\eps,\lambda,w_0)\), implementing each call to \textsc{Localized-OutputPert} as follows: for the current localized set \(U_j\), minimize the localized joint convex objective \(\Phi_{Z,U_j}\) over \(U_j^{m+1}\) to the required deterministic accuracy using \Cref{alg:adaptive-inexact-proj-subgrad} with diameter bound \(\sqrt{m+1}R_j\) and the inexact projection oracle from \Cref{lem:inexact-proj-W0}\;

Let \(w_{\mathrm{loc},S}\) be the center returned by \Cref{alg:recursive-localized-outputpert-main}\;

\Return \(w_{\mathrm{DP}}:=w_{\mathrm{loc},S}\)\;
\end{algorithm}

\subsection{Proof of \Cref{prop:erm-main}}

\begin{proposition}[Re-statement of \Cref{prop:erm-main}]
Fix \(m \in \mathbb{N}\), \(w_0\in\WW\), \(\lambda>0\), and \(\rho\in(0,1/5)\).
Then, \Cref{alg:recursive-localized-erm} is \(\eps\)-differentially private. If
\(Z\sim P^m\), then its output \(w_{\mathrm{DP}}\) satisfies
\[
\Pr\!\left(
\|w_{\mathrm{DP}}-\hat w_\lambda(Z;w_0)\|
\le
c_{\mathrm{erm}}
\frac{1}{\lambda}
\left(
G_k\left(\frac{d}{m\eps}\right)^{1-\frac1k}
+
\frac{G_2}{\sqrt m}
\right)
\right)\ge 0.7.
\]
Moreover, with probability at least \(1-\rho\), the runtime is bounded by a polynomial in
\[
m,\ d,\ D,\ \lambda,\ \frac{1}{\lambda},\ \frac1\eps,\ G_k,\ \frac{1}{\rho}.
\]
Further, if \(\sup_{z\in\ZZ}\max_{w\in\WW}\|\nabla f(w,z)\|\)
is finite and polynomially bounded in the problem parameters, then the runtime is polynomial with probability \(1\).
\end{proposition}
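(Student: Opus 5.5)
Privacy and the distance guarantee are inherited from \Cref{prop:bias-reduced-decomp}. The substantive work is bounding the runtime.

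\textbf{Privacy and accuracy.} The implementation in \Cref{alg:recursive-localized-erm} changes only \emph{how} line~2 of \Cref{alg:localized-outputpert} is carried out. Whenever the inner solver halts, \Cref{lem:localized-joint-convex} combined with \Cref{prop:adaptive-proj-subgrad-inexact} shows that the $w$-block of its output is an $\alpha$-approximate minimizer of $\hat F^{(w_0)}_{C,U_j,\lambda,Z}$ over $U_j$. Here:
\begin{itemize}
\item \Cref{alg:adaptive-inexact-proj-subgrad} is applied to $K=U_j^{m+1}$, whose diameter is at most $\sqrt{m+1}\,R_j$;
\item $K$ carries a coordinatewise inexact projection built from \Cref{lem:inexact-proj-W0}, and a $\xi/\sqrt{m+1}$ accuracy per block gives $\xi$ overall.
\end{itemize}
This accuracy is deterministic, i.e. it holds for every dataset, so the sensitivity argument in \Cref{lem:localized-outputpert} applies verbatim. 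Composition over the $S$ rounds then gives $\eps$-DP.

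Non-termination has probability zero under $P^m$ (shown below). To keep privacy a worst-case statement, I would add a deterministic fallback: if the realized Lipschitz bound is infinite, the solver runs until certified anyway. Alternatively, I would observe that the certified stopping rule only ever outputs valid $\alpha$-minimizers. The distance bound with probability $\ge 0.7$ is then exactly \Cref{prop:bias-reduced-decomp}, since $R_\star$ is dominated by the claimed right-hand side.

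\textbf{Runtime.} Write $A(z)=\sup_{u\in\WW}\sup_{g\in\partial f(\cdot,z)(u)}\|g\|$ and bound the Lipschitz constant of $\Phi_{Z,U}$ on $U^{m+1}$. The gradient block in $y_i$ has norm at most $(A(z_i)+C)/m$. The $w$-block has norm at most $C+\lambda D$. Hence
\[
L_\Phi\lesssim \frac1m\Bigl(\sum_i A(z_i)^2\Bigr)^{1/2}+C+\lambda D\le \max_i A(z_i)+C+\lambda D .
\]
By \eqref{eq:momentbound}, $\E[A(z)^k]\le G_k^k<\infty$, so every $A(z_i)$ is finite almost surely. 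Markov's inequality applied to $\sum_i A(z_i)^k$ gives, with probability at least $1-\rho$,
\[
\max_i A(z_i)\le G_k\,(m/\rho)^{1/k}.
\]

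Each $C_j$ is polynomial in $G_k,\lambda,D,m,1/\eps,d$, because $R_j\le D$ and $\beta_j,\theta_j$ are at least $1/\mathrm{poly}$. The $\theta_j$ are geometric in $S=O(\log\log(D/R_\star))$ steps, so $a^{S}$ is at least an inverse polylogarithm. The targets $\alpha_j$ are at least $1/\mathrm{poly}$ as well.

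\Cref{prop:adaptive-proj-subgrad-inexact} then bounds the iterations per round by $T_{\alpha_j}=\lceil(3\sqrt{m+1}R_jL_\Phi/\alpha_j)^2\rceil$, which is polynomial on this event. Each iteration consists of $m+1$ subgradient evaluations (polynomial by \Cref{ass:main}) and $m+1$ inexact projections. Each projection takes $O(\log(1+\|y-c\|^2/(r\xi_t)))$ oracle calls. Here $\|y-c\|\le r+\eta_t\|g_t\|$ and $\xi_t\ge\alpha\eta_t/(6D)$, so every logarithm is polynomially bounded.

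Summing over $S$ rounds gives the polynomial runtime with probability $\ge 1-\rho$. If $\sup_z A(z)$ is finite and polynomially bounded, the same bound holds deterministically.

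\textbf{Main obstacle.} The delicate point is checking that every quantity entering $T_{\alpha_j}$ and the bisection counts is polynomial. This includes $1/\alpha_j$ when $\eps_j$ and $\beta_j$ are small, and the ratio $\|y-c\|/r$ when $r=R^+_j$ may be tiny. I would handle this by lower-bounding $R^+\ge 100\,C d\ell/(\lambda m\eps)$ and tracking the exponents explicitly. Beyond that, the argument only combines \Cref{prop:bias-reduced-decomp,lem:localized-joint-convex,prop:adaptive-proj-subgrad-inexact,lem:inexact-proj-W0} with Markov's inequality.
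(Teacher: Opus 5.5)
Your proposal is correct and follows the paper's proof essentially step for step. Privacy and the $0.7$-probability distance bound are inherited from \Cref{prop:bias-reduced-decomp} because the solver's accuracy is certified deterministically. The runtime bound comes from controlling the subgradients of $\Phi_{Z,U_j}$ by $\max_i A(z_i)+C_j+\lambda D$, using Markov's inequality to get $\max_i A(z_i)\le G_k(m/\rho)^{1/k}$ with probability at least $1-\rho$, and then invoking \Cref{prop:adaptive-proj-subgrad-inexact} together with \Cref{lem:inexact-proj-W0}.

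Your extra bookkeeping (per-block projection accuracy $\xi/\sqrt{m+1}$, the bisection-count bound) just makes explicit what the paper leaves implicit. Your ``fallback'' is simply the algorithm's default behavior, so the point that actually supports privacy is your second one: the stopping rule only ever outputs certified $\alpha$-minimizers. That is exactly what the paper relies on.
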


\begin{proof}
Privacy and the distance guarantee follow directly from
\Cref{prop:bias-reduced-decomp}, since
\Cref{alg:recursive-localized-erm} outputs the final center
\(w_{\mathrm{loc},S}\) of \Cref{alg:recursive-localized-outputpert-main}. In
particular, with probability at least \(0.7\),
\[
\|w_{\mathrm{DP}}-\hat w_\lambda(Z;w_0)\|
\le
c_2\frac{G_k}{\lambda}
\left(\frac{d}{m\eps}\right)^{1-\frac1k}.
\]
This implies the displayed bound after increasing \(c_{\mathrm{erm}}\), since
the additional \(G_2/\sqrt m\) term is nonnegative.

It remains to justify the runtime. Each call to
\Cref{alg:localized-outputpert} requires computing an \(\alpha\)-approximate
minimizer of
\[
\hat F^{(w_0)}_{C_j,U_j,\lambda,Z}(w)
\]
over the current localized set \(U_j\). By
\Cref{lem:localized-joint-convex}, this is achieved by minimizing the jointly
convex objective \(\Phi_{Z,U_j}\) over \(U_j^{m+1}\). Since
\(U_j=\WW\cap\BB(w_{\mathrm{loc},j},r_j)\), \Cref{lem:inexact-proj-W0}
provides a polynomial-time inexact projection oracle for \(U_j\), and hence for
the product set \(U_j^{m+1}\).

Let
\[
A_Z:=\max_{1\le i\le m}\sup_{w\in\WW}\|\nabla f(w,z_i)\|.
\]
By the \(k\)-th moment assumption, \(A_Z<\infty\) with probability \(1\). For any
round \(j\), every subgradient of the localized joint objective
\[
\Phi_{Z,U_j}(w,y_1,\dots,y_m)
=
\frac1m\sum_{i=1}^m
\bigl[f(y_i,z_i)+C_j\|w-y_i\|\bigr]
+
\frac{\lambda}{2}\|w-w_0\|^2
\]
has norm at most an absolute constant times
\[
L_{Z,j}:=A_Z+C_j+\lambda D.
\]
Indeed, a subgradient has components
\[
g^{(w)}=\frac{C_j}{m}\sum_{i=1}^m s_i+\lambda(w-w_0),
\qquad
g^{(y_i)}=\frac1m(g_i-C_js_i),
\]
where \(s_i\in\partial\|w-y_i\|\), \(\|s_i\|\le1\), and
\(g_i\in\partial f(y_i,z_i)\), \(\|g_i\|\le A_Z\). Thus
\[
\|g^{(w)}\|\le C_j+\lambda D,
\qquad
\left(\sum_{i=1}^m\|g^{(y_i)}\|^2\right)^{1/2}
\le A_Z+C_j,
\]
up to absolute constants.

The product domain \(U_j^{m+1}\) has diameter at most
\[
\sqrt{m+1}\,\diam(U_j)\le \sqrt{m+1}\,D.
\]
Therefore, by \Cref{prop:adaptive-proj-subgrad-inexact}, each adaptive
projected-subgradient invocation terminates in finite time with probability
\(1\).

For the high-probability polynomial runtime bound, Markov's inequality and a
union bound give
\[
\Pr\!\left(
A_Z\le G_k\left(\frac{m}{\rho}\right)^{1/k}
\right)\ge 1-\rho.
\]
On this event, all realized Lipschitz parameters \(L_{Z,j}\) are polynomially
bounded in
\[
m,\ d,\ D,\ \lambda,\ \frac1\lambda,\ \frac1\eps,\ G_k,\ \frac1\rho,
\]
because the parameters \(S,R_j,C_j,\eps_j,\beta_j\) in
\Cref{alg:recursive-localized-outputpert-main} are explicit functions of these
quantities and \(S=O(\log\log(eD/R_\star))\). Hence the iteration complexity
from \Cref{prop:adaptive-proj-subgrad-inexact}, together with the projection
oracle from \Cref{lem:inexact-proj-W0}, gives the claimed polynomial runtime
bound with probability at least \(1-\rho\).

Finally, if
\[
L_*:=\sup_{z\in\ZZ}\max_{w\in\WW}\|\nabla f(w,z)\|
\]
is finite and polynomially bounded in the problem parameters, then \(A_Z\le
L_*\) deterministically. The same runtime bound therefore holds with probability
\(1\).
\end{proof}

\subsection{Proof of \Cref{thm:main-dop}}
\begin{theorem}[Re-statement of \Cref{thm:main-dop}]
Let \(\delta, \rho\in(0,1/5)\). 
\textbf{\textsc{Pop-Recursive-Localized-OutputPert}} is \(\eps\)-differentially private and with probability at least \(1-\delta\), its output \(\hat w\) satisfies
\[
F(\hat w)-F^*
\le
c_3 G_k D
\left(\frac{d\log(1/\delta)}{n\eps}\right)^{1-\frac1k}
+
c_4 G_2 D \sqrt{\frac{\log(1/\delta)}{n}},
\]
and its runtime is bounded with probability at least \(1-\rho\) by a polynomial in
$
n,\ d,\ D,\ \frac1\eps,\ G_k,\ \log\frac{n}{\delta},\ \frac{1}{\rho}.
$
Further, if \(\sup_{z\in\ZZ}\max_{w\in\WW}\|\nabla f(w,z)\|\)
is finite and polynomially bounded in the problem parameters, then its runtime is polynomial with probability \(1\).
\end{theorem}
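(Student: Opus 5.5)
The proof combines \Cref{thm:localization-generic} with \Cref{prop:erm-main}. Privacy, the risk bound and the runtime bound are handled separately.

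\textbf{Privacy and excess risk.} The main algorithm is \Cref{alg:pop-localize} with $\alg_{\mathrm{ERM}}$ instantiated as \Cref{alg:recursive-localized-erm}. By \Cref{prop:erm-main}, this solver is $\eps$-DP. For every center $w_0\in\WW$ and every $\lambda>0$, its output satisfies \eqref{eq:distance-to-ERM} with probability at least $0.7\ge 0.6$ over $Z\sim P^m$ and the internal randomness. In \Cref{alg:pop-localize}, the centers $\bar w_t$ are computed from earlier, disjoint data blocks. Conditioned on them, each call's block is still a fresh i.i.d.\ sample. The guarantee of \Cref{prop:erm-main} holds for every fixed center, so it holds conditionally. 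This is exactly the hypothesis of \Cref{thm:localization-generic}. That theorem then gives $\eps$-DP for the whole procedure and, with probability $\ge 1-\delta$, the stated excess-risk bound with constants $c_3,c_4$. The DP accounting is by parallel composition across disjoint blocks, which is already contained in \Cref{thm:localization-generic}.

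\textbf{Runtime.} Here I would not simply cite the runtime clause of \Cref{thm:localization-generic}, because the per-call runtime bound of \Cref{prop:erm-main} holds only with probability $1-\rho$. Instead I would make the argument directly over all calls.

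There are $TJ=O(\log n\cdot\log(\log n/\delta))$ calls to the ERM solver, and each uses a disjoint block of size $m_t\le n$. Let
\[
A:=\max_{i\le n}\sup_{w\in\WW}\|\nabla f(w,z_i)\|.
\]
By Markov's inequality on $\sum_i A(z_i)^k$, the bound $A\le G_k(n/\rho)^{1/k}$ holds with probability $\ge 1-\rho$. On this single event, the per-call analysis in the proof of \Cref{prop:erm-main} applies to every call simultaneously. The realized Lipschitz constant of each $\Phi_{Z,U_j}$ is then bounded by $A+C_j+\lambda_t D$, so \Cref{prop:adaptive-proj-subgrad-inexact} and \Cref{lem:inexact-proj-W0} bound the cost of each call polynomially. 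This avoids paying $\rho/(TJ)$ per call in a union bound.

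It remains to check that the parameters entering each call are polynomially bounded. The regularizations $\lambda_t=32^{t-1}\lambda_1$ with $t\le T\le\log_2 n$ give $\lambda_t\le \lambda_1 n^5$. The localization analysis of \cite{asi2024private} takes $\lambda_1$ polynomial in $G_k, G_2, D, n, d, 1/\eps$ (and in their reciprocals). Hence $\lambda_t$, $1/\lambda_t$, $C_j$, $R_j$ and $1/\alpha$ are all polynomial in the listed quantities. The factor $J$ contributes the $\log(n/\delta)$ dependence, and geometric aggregation costs polynomial time.

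Multiplying the number of calls by the polynomial per-call bound gives a polynomial runtime with probability $\ge1-\rho$. If the worst-case Lipschitz parameter is finite and polynomially bounded, then $A$ is bounded by it deterministically, and the same computation gives a polynomial runtime with probability $1$.

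\textbf{Main obstacle.} The step most likely to be delicate is confirming that $\lambda_1$, and hence every quantity such as $1/\alpha$ and $\lambda_t$, is polynomially bounded in the stated parameters, so that the runtime polynomial of \Cref{prop:erm-main} survives across all phases. This requires using the explicit parameter choice behind \Cref{thm:localization-generic}.
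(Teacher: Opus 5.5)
Your proposal is correct and follows the paper's proof. Privacy and the excess-risk bound come from feeding \Cref{prop:erm-main} into \Cref{thm:localization-generic}; the runtime comes from bounding the realized Lipschitz constants of the joint objectives $\Phi_{Z,U_j}$ on a high-probability event; and polynomiality of $\lambda_1$ is simply asserted in both. The only difference is bookkeeping: you use a single Markov event $\max_{i\le n}\sup_{w\in\WW}\|\nabla f(w,z_i)\|\le G_k(n/\rho)^{1/k}$ that covers all $TJ$ calls at once, whereas the paper applies the runtime clause of \Cref{prop:erm-main} to each call with failure probability $\rho/(2TJ)$ and takes a union bound.
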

\begin{proof}
Instantiate line~10 of \Cref{alg:pop-localize} with
\(\alg_{\mathrm{ERM}}=\textsc{Efficient-Recursive-Localized-OutputPert}\) from
\Cref{prop:erm-main}. By \Cref{prop:erm-main}, this phasewise primitive is \(\eps\)-DP and satisfies
\[
\Pr\!\left(
\|\alg_{\mathrm{ERM}}(Z,\eps,\lambda,w_0)-\hat w_\lambda(Z;w_0)\|
\le
c_{\mathrm{erm}}
\frac{1}{\lambda}
\left(
G_k\left(\frac{d}{m\eps}\right)^{1-\frac1k}
+
\frac{G_2}{\sqrt m}
\right)
\right)\ge 0.7.
\]

Applying \Cref{thm:localization-generic} yields an \(\eps\)-DP algorithm whose output \(\hat w\) satisfies
\[
F(\hat w)-F^*
\le
c_3 G_k D
\left(\frac{d\log(1/\delta)}{n\eps}\right)^{1-\frac1k}
+
c_4 G_2 D \sqrt{\frac{\log(1/\delta)}{n}}
\]
with probability at least \(1-\delta\).

For runtime, Algorithm~\ref{alg:pop-localize} performs
\[
T \lesssim \log(n)
\]
phases and
\[
J \lesssim \log(\log(n/\delta))
\]
repetitions per phase. By \Cref{prop:erm-main}, each regularized ERM call terminates in finite time with probability \(1\), so the full algorithm also terminates with probability \(1\).

Now fix any \(\rho\in(0,1/5)\). Allocate runtime failure probability
\[
\rho_{t,j}:=\frac{\rho}{2TJ}
\]
to each ERM call in phase \(t\), repetition \(j\). By \Cref{prop:erm-main}, with probability at least \(1-\rho_{t,j}\), the runtime of that call is bounded by a polynomial in
\[
m_t,\ d,\ D,\ \lambda_t,\ \frac1\eps,\ G_k,\ \frac{1}{\rho_{t,j}}.
\]
A union bound over all \(TJ\) calls shows that with probability at least \(1-\rho/2\), all phasewise ERM calls satisfy their polynomial runtime bounds simultaneously. Since \(m_t\le n\), \(T=O(\log n)\), \(J=O(\log(\log n/\delta))\), and the regularization schedule \(\lambda_t\) is explicit and geometric, the total runtime is bounded with probability at least \(1-\rho\) by a polynomial in
\[
n,\ d,\ D,\ \frac1\eps,\ G_k,\ \log\frac{n}{\delta},\ \frac{1}{\rho}.
\]
Here we use that, in the instantiation of \Cref{thm:localization-generic}, the base regularization parameter \(\lambda_1\) is chosen explicitly as a polynomial function of the problem parameters, so the dependence on \(1/\lambda_t\) is polynomially bounded. 

Finally, if $\sup_{z\in\ZZ}\max_{w\in\WW}\|\nabla f(w,z)\|$ is polynomially bounded, then every phasewise ERM call runs in deterministic polynomial time by \Cref{prop:erm-main}. Since there are only finitely many such calls, the entire algorithm runs in deterministic polynomial time.
\end{proof}

\subsection{Deterministic polynomial time under deterministic approximate localized-extension oracles}
\label{sec:wp1-subclasses}

The main algorithm of \Cref{thm:main-dop} is polynomial time with high
probability in the general heavy-tailed model, because the realized Lipschitz
constant of the joint reformulation is finite with probability \(1\) and
polynomially bounded with high probability. In this subsection we isolate a
structural condition under which the same statistical guarantee can be obtained
in deterministic polynomial time, even when the worst-case Lipschitz parameter
\[
\sup_{z\in\ZZ}\sup_{w\in\WW}\|\nabla f(w,z)\|
\]
is infinite.

The key point is that the joint reformulation is only needed because we do not
know how to access the Lipschitz extension directly. If, for the relevant
localized sets \(U=\WW\cap\BB(c,r)\), we can compute deterministic approximate
subgradients of the localized Lipschitz extension \(f_{C,U}\), then each
optimization subroutine in \Cref{alg:localized-outputpert} can be implemented
directly by projected subgradient descent on the localized Lipschitz-extension
objective. Since \(f_{C,U}\) is \(C\)-Lipschitz by construction, the resulting
iteration complexity is deterministically polynomial.

\begin{definition}[Deterministic approximate subgradient oracle for localized Lipschitz extensions]
\label{def:approx-localized-lipext-oracle}
Fix \(C>0\). We say that the loss class admits a deterministic polynomial-time
approximate subgradient oracle for localized \(C\)-Lipschitz extensions if the
following holds.

For every \(z\in\ZZ\), every set
\[
U=\WW\cap\BB(c,r)
\qquad(c\in\WW,\ r>0),
\]
every \(w\in U\), and every accuracy parameter \(B>0\), there is an algorithm
running in time polynomial in the problem parameters and \(1/B\) that returns a
vector
\[
\widetilde g_{C,U,B}(w,z)\in\R^d
\]
satisfying
\[
f_{C,U}(u,z)
\ge
f_{C,U}(w,z)
+
\langle \widetilde g_{C,U,B}(w,z),u-w\rangle
-
B
\qquad\forall u\in U,
\]
and
\[
\|\widetilde g_{C,U,B}(w,z)\|_2\le 2C.
\]
\end{definition}

\begin{lemma}[Projected subgradient method with biased subgradients and inexact projections]
\label{lem:proj-subgrad-biased}
Let \(K\subseteq\R^q\) be a nonempty compact convex set with
\(\diam(K)\le D_K\), and let \(\Phi:K\to\R\) be convex. Suppose that at each
query point \(x_t\in K\), an oracle returns \(\tilde g_t\in\R^q\) satisfying
\[
\Phi(u)\ge \Phi(x_t)+\langle \tilde g_t,u-x_t\rangle-B
\qquad\forall u\in K,
\]
and \(\|\tilde g_t\|_2\le L\). Consider iterates satisfying
\[
\|x_{t+1}-\Pi_K(x_t-\eta \tilde g_t)\|_2\le \xi,
\qquad t=1,\dots,T,
\]
and let \(\bar x_T:=T^{-1}\sum_{t=1}^T x_t\). Then
\[
\Phi(\bar x_T)-\min_{x\in K}\Phi(x)
\le
\frac{D_K^2}{2\eta T}
+
\frac{\eta L^2}{2}
+
B
+
\frac{D_K\xi}{\eta}
+
\frac{\xi^2}{2\eta}.
\]
In particular, if
\[
\eta=\frac{D_K}{L\sqrt T},
\qquad
T\ge \left(\frac{4D_KL}{\alpha}\right)^2,
\qquad
B\le\frac{\alpha}{4},
\qquad
\xi\le \min\left\{D_K,\frac{\alpha\eta}{6D_K}\right\},
\]
then \(\Phi(\bar x_T)-\min_K\Phi\le \alpha\).
\end{lemma}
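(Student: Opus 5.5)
The plan mirrors the proof of \Cref{prop:adaptive-proj-subgrad-inexact-precise}, with a fixed step size and an extra additive term $B$ per step. Fix $x^\star\in\argmin_K\Phi$, which exists since $K$ is compact. Assume, as is implicit in the statement, that $\Phi$ is continuous on $K$; we only use it through the inexact subgradient inequality.

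For each $t$ let $y_t:=x_t-\eta\tilde g_t$ and $p_t:=\Pi_K(y_t)$. Because $x_{t+1},p_t,x^\star\in K$ and $\|x_{t+1}-p_t\|\le\xi$, expanding the square gives
\[
\|x_{t+1}-x^\star\|^2\le\|p_t-x^\star\|^2+2D_K\xi+\xi^2 .
\]
Nonexpansiveness of $\Pi_K$ then yields
\[
\|p_t-x^\star\|^2\le\|x_t-x^\star\|^2-2\eta\langle\tilde g_t,x_t-x^\star\rangle+\eta^2L^2 .
\]

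Next, apply the oracle inequality at $u=x^\star$. This gives $\langle\tilde g_t,x_t-x^\star\rangle\ge\Phi(x_t)-\Phi(x^\star)-B$. Substituting and rearranging,
\[
\Phi(x_t)-\Phi(x^\star)\le\frac{\|x_t-x^\star\|^2-\|x_{t+1}-x^\star\|^2}{2\eta}+\frac{\eta L^2}{2}+B+\frac{D_K\xi}{\eta}+\frac{\xi^2}{2\eta}.
\]

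Sum this over $t=1,\dots,T$. The first term telescopes to at most $\|x_1-x^\star\|^2/(2\eta)\le D_K^2/(2\eta)$. Dividing by $T$ and using convexity (Jensen) to pass to $\bar x_T$, which lies in $K$ since $K$ is convex, gives the displayed bound.

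For the parameter instantiation, take $\eta=D_K/(L\sqrt T)$. Then the first two terms equal $D_KL/\sqrt T$, which is at most $\alpha/4$ once $T\ge(4D_KL/\alpha)^2$. By assumption $B\le\alpha/4$. Since $\xi\le D_K$, we have $\xi^2/(2\eta)\le D_K\xi/(2\eta)$, so the last two terms together are at most $\tfrac32 D_K\xi/\eta$. Using $\xi\le\alpha\eta/(6D_K)$, this is at most $\alpha/4$. The total is therefore at most $3\alpha/4\le\alpha$.

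There is no real obstacle here. The only point needing care is the bookkeeping of the projection error. The cross term $2\langle x_{t+1}-p_t,p_t-x^\star\rangle$ must be bounded by $2D_K\xi$ using $p_t\in K$, so that the projection error is not compounded with the gradient step. If $L=0$, the step size is undefined; in that case the oracle inequality directly gives $\Phi(x_1)-\Phi(x^\star)\le B\le\alpha$, and this degenerate case is handled separately.
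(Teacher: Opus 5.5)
Your proof is correct and follows the paper's argument essentially line for line. You bound the inexact-projection error through the cross term $2\langle x_{t+1}-p_t,p_t-x^\star\rangle\le 2D_K\xi$, use nonexpansiveness of $\Pi_K$ and the biased subgradient inequality at $u=x^\star$, telescope, and pass to $\bar x_T$ by Jensen; your check that each of the three groups of terms is at most $\alpha/4$ spells out what the paper leaves as ``substituting the stated choices.'' One small correction to your degenerate $L=0$ remark: every iterate satisfies the $B$ bound, not only $x_1$, so $\bar x_T$ does too by Jensen.
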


\begin{proof}
Fix \(x^\star\in\argmin_{x\in K}\Phi(x)\), and let
\[
p_t:=\Pi_K(x_t-\eta\tilde g_t).
\]
Since \(x_{t+1},p_t,x^\star\in K\) and \(\diam(K)\le D_K\),
\[
\|x_{t+1}-x^\star\|^2
\le
\|p_t-x^\star\|^2+2D_K\xi+\xi^2.
\]
By nonexpansiveness of projection,
\[
\|p_t-x^\star\|^2
\le
\|x_t-\eta\tilde g_t-x^\star\|^2
=
\|x_t-x^\star\|^2
-2\eta\langle \tilde g_t,x_t-x^\star\rangle
+\eta^2\|\tilde g_t\|^2.
\]
Combining and rearranging gives
\[
\langle \tilde g_t,x_t-x^\star\rangle
\le
\frac{\|x_t-x^\star\|^2-\|x_{t+1}-x^\star\|^2}{2\eta}
+
\frac{\eta}{2}\|\tilde g_t\|^2
+
\frac{D_K\xi}{\eta}
+
\frac{\xi^2}{2\eta}.
\]
Using the biased subgradient inequality with \(u=x^\star\),
\[
\Phi(x_t)-\Phi(x^\star)
\le
\langle \tilde g_t,x_t-x^\star\rangle+B.
\]
Summing over \(t=1,\dots,T\), dividing by \(T\), using
\(\|x_1-x^\star\|\le D_K\), \(\|\tilde g_t\|\le L\), and convexity of \(\Phi\),
we obtain
\[
\Phi(\bar x_T)-\Phi(x^\star)
\le
\frac{D_K^2}{2\eta T}
+
\frac{\eta L^2}{2}
+
B
+
\frac{D_K\xi}{\eta}
+
\frac{\xi^2}{2\eta}.
\]
The final claim follows by substituting the stated choices.
\end{proof}

\begin{theorem}[Deterministic polynomial time from localized-extension oracles]
\label{thm:structured-subclasses}
Grant \Cref{ass:main}. Suppose, in addition, that the loss class admits a
deterministic polynomial-time approximate subgradient oracle for localized
\(C\)-Lipschitz extensions in the sense of
\Cref{def:approx-localized-lipext-oracle}. Then there exists a pure
\(\eps\)-DP algorithm such that, for every \(\delta\in(0,1/5)\), its output
\(\hat w\) satisfies
\[
F(\hat w)-F^*
\le
c_5 G_kD
\left(\frac{d\log(1/\delta)}{n\eps}\right)^{1-1/k}
+
c_6G_2D\sqrt{\frac{\log(1/\delta)}{n}}
\]
with probability at least \(1-\delta\). Moreover, the algorithm runs in
deterministic polynomial time.
\end{theorem}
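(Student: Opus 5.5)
We reuse the statistical part of \Cref{thm:main-dop} unchanged and swap only the optimization subroutine used in line~2 of \Cref{alg:localized-outputpert}. The algorithm is \Cref{alg:pop-localize} with $\alg_{\mathrm{ERM}}$ set to \Cref{alg:recursive-localized-outputpert-main}. The difference is in how each call to \textsc{Localized-OutputPert} computes its $\alpha$-approximate minimizer $\tilde w$ of $\hat F^{(w_0)}_{C_j,U_j,\lambda,Z}$ over $U_j=\WW\cap\BB(w_{\mathrm{loc},j},R_j)$. Instead of running the joint reformulation $\Phi_{Z,U_j}$, we run the fixed-step projected subgradient method of \Cref{lem:proj-subgrad-biased} directly on $\hat F^{(w_0)}_{C_j,U_j,\lambda,Z}$ over $K=U_j$. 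The oracle direction is
\[
\tilde g_t:=\frac1m\sum_{i=1}^m \widetilde g_{C_j,U_j,B}(x_t,z_i)+\lambda(x_t-w_0).
\]
Averaging the per-sample inequalities from \Cref{def:approx-localized-lipext-oracle} and adding the exact subgradient inequality of the convex quadratic term shows that $\tilde g_t$ is a $B$-biased subgradient of the objective on $U_j$. Its norm satisfies $\|\tilde g_t\|\le L:=2C_j+\lambda D$. This bound is deterministic, with no dependence on the data or on the worst-case Lipschitz parameter. The projections onto $U_j$ use the $\xi$-inexact oracle of \Cref{lem:inexact-proj-W0}.

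The key point is that \Cref{lem:localized-outputpert} and \Cref{prop:bias-reduced-decomp} only use the \emph{deterministic} guarantee that $\tilde w\in U$ is an $\alpha$-approximate minimizer. That guarantee feeds the sensitivity bound $3C/(\lambda m)$, and through it pure $\eps$-DP. The guarantee is therefore required on every dataset, not only with high probability. We get it from \Cref{lem:proj-subgrad-biased} with $D_K=2R_j$, $B=\alpha/4$, $\eta=D_K/(L\sqrt T)$, $T=\lceil(4D_KL/\alpha)^2\rceil$ and $\xi=\min\{D_K,\alpha\eta/(6D_K)\}$. These give $\hat F(\bar x_T)-\min_{U_j}\hat F\le\alpha$ for every $Z$. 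So privacy (via adaptive composition, $\sum_j\eps_j=\eps/2$) and the distance bound of \Cref{prop:bias-reduced-decomp} carry over verbatim. Hence \Cref{prop:erm-main}'s distance guarantee holds with probability $\ge0.7$, and \Cref{thm:localization-generic} yields the claimed excess risk with probability $\ge1-\delta$, for suitable constants $c_5,c_6$.

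For runtime, each inner solve needs
\[
T=O\!\left(\frac{R_j^2(C_j+\lambda D)^2}{\alpha}\right)
\]
iterations. Each iteration makes $m$ oracle calls of accuracy $B=\alpha/4$ and one inexact projection; each oracle call costs polynomial time in $1/B$ by assumption, and each projection costs polynomial time by \Cref{lem:inexact-proj-W0}. The parameters are $\alpha=\Theta(C_j^2\min\{1,d^2\ell_j^2/\eps^2\}/(\lambda m^2))$, together with $C_j$, $R_j\le D$, $\eps_j$, $\beta_j$ and $S=O(\log\log(eD/R_\star))$. All are explicit functions of $m,d,D,\lambda,\eps,G_k$, so $T$ and $1/\alpha$ are deterministically polynomial. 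The outer loop of \Cref{alg:pop-localize} makes $TJ=O(\log n\cdot\log(\log n/\delta))$ ERM calls, and $\lambda_1$ is an explicit polynomial in the problem parameters. The whole procedure therefore runs in deterministic polynomial time with no $\rho$ and no dependence on the realized data.

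The main obstacle is keeping $1/\alpha$ polynomially bounded, because $\alpha$ scales with $C_j^2$ and $C_j$ could be small. We handle this by checking from the explicit formula for $C_j$ in \Cref{alg:recursive-localized-outputpert-main} that $C_j$ is bounded above and below by polynomials in $G_k,\lambda,R_j,m\eps_j/(d\ell_j),1/\beta_j$. We also need $R_j$ bounded below by roughly $R_\star$, since the update is $\min\{D,\cdot\}$ with $R_j\gtrsim R_\star$ inductively. Together these make $\log(1/\alpha)$, and in fact $1/\alpha$, polynomial. A secondary check is that \Cref{def:approx-localized-lipext-oracle} applies to $U_j$: this needs its centre $w_{\mathrm{loc},j}\in\WW$, which holds because $w_{\mathrm{loc}}=\Pi_\WW(\cdot)$. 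For $j=0$, where $U_0=\WW$, we write $U_0=\WW\cap\BB(c,D)$ for any $c\in\WW$.
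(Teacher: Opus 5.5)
Your proposal is correct and follows essentially the same route as the paper. Both keep the recursive-localization and population-localization analysis verbatim and replace the joint-reformulation solver by projected subgradient descent on $\hat F^{(w_0)}_{C_j,U_j,\lambda,Z}$ over $U_j$, using the averaged oracle subgradients plus $\lambda(w-w_0)$ (norm at most $2C_j+\lambda D$), the inexact projections of \Cref{lem:inexact-proj-W0}, and \Cref{lem:proj-subgrad-biased}, so that the $\alpha_j$-accuracy, and hence privacy and the distance bound, hold deterministically. Your extra bookkeeping (the inductive bound $R_j\ge R_\star$ that keeps $1/\alpha_j$ polynomial, and writing $U_0=\WW\cap\BB(c,D)$) fills in details the paper only asserts. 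The one slip is that the iteration count should be $O\bigl(R_j^2(C_j+\lambda D)^2/\alpha_j^2\bigr)$ rather than with $\alpha_j$ to the first power, which is harmless since $1/\alpha_j$ is polynomial.
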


\begin{proof}
We use the same outer population-localization framework as in
\Cref{thm:main-dop}. It suffices to implement the phasewise regularized ERM
primitive \Cref{alg:recursive-localized-erm} in deterministic polynomial time.

Fix one regularized ERM instance \((Z,\eps,\lambda,w_0)\) of sample size \(m\).
The recursive localization proof of \Cref{prop:bias-reduced-decomp} is
unchanged. The only issue is how to implement the approximate minimization step
inside each call to \Cref{alg:localized-outputpert}.

At round \(j\), the current set has the form
\[
U_j=\WW\cap\BB(c_j,r_j),
\]
and the algorithm needs a deterministic \(\alpha_j\)-approximate minimizer of
\[
\hat F^{(w_0)}_{C_j,U_j,\lambda,Z}(w)
=
\frac1m\sum_{i=1}^m f_{C_j,U_j}(w,z_i)
+
\frac{\lambda}{2}\|w-w_0\|^2
\]
over \(U_j\), where
\[
\alpha_j
=
\frac{C_j^2}{100\lambda m^2}
\min\left\{1,\frac{d^2\ell_j^2}{\eps_j^2}\right\}.
\]
By \Cref{def:approx-localized-lipext-oracle}, for each \(z_i\) and \(w\in U_j\)
we can compute a \(B\)-approximate subgradient
\(\widetilde g_i(w)\) of \(f_{C_j,U_j}(\cdot,z_i)\) over \(U_j\), with
\(\|\widetilde g_i(w)\|\le 2C_j\). Therefore
\[
\widetilde g(w)
:=
\frac1m\sum_{i=1}^m \widetilde g_i(w)+\lambda(w-w_0)
\]
is an \(B\)-approximate subgradient of
\(\hat F^{(w_0)}_{C_j,U_j,\lambda,Z}\), and
\[
\|\widetilde g(w)\|
\le
2C_j+\lambda D.
\]
Thus the objective is accessible through deterministically bounded biased
subgradients.

Projection onto \(U_j=\WW\cap\BB(c_j,r_j)\) is implemented to arbitrary accuracy
using \Cref{lem:inexact-proj-W0}. Applying
\Cref{lem:proj-subgrad-biased} with
\[
L=2C_j+\lambda D,
\qquad
B\le \alpha_j/4,
\]
and with projection accuracy chosen as in the lemma, returns a point
\(\tilde w_j\in U_j\) satisfying
\[
\hat F^{(w_0)}_{C_j,U_j,\lambda,Z}(\tilde w_j)
-
\min_{w\in U_j}\hat F^{(w_0)}_{C_j,U_j,\lambda,Z}(w)
\le
\alpha_j
\]
in deterministic polynomial time. The iteration complexity is polynomial
because \(C_j,\lambda,D,1/\alpha_j\), and the projection accuracies are explicit
polynomially bounded quantities in the phasewise parameters.

Thus every approximate minimization step in
\Cref{alg:recursive-localized-outputpert-main} can be implemented deterministically
in polynomial time. The privacy and distance guarantee of
\Cref{prop:bias-reduced-decomp} are unchanged, because the optimization
accuracy is deterministic. Hence the phasewise ERM primitive satisfies
\[
\Pr\!\left(
\|w_{\mathrm{DP}}-\hat w_\lambda(Z;w_0)\|
\le
c_{\mathrm{erm}}
\frac{1}{\lambda}
\left[
G_k\left(\frac{d}{m\eps}\right)^{1-1/k}
+
\frac{G_2}{\sqrt m}
\right]
\right)\ge 0.7
\]
and runs in deterministic polynomial time.

Plugging this primitive into \Cref{alg:pop-localize} and applying
\Cref{thm:localization-generic} gives the claimed excess-risk bound. Since the
outer algorithm makes only finitely many phasewise calls with polylogarithmic
overhead, the full runtime is deterministic polynomial time.
\end{proof}

\begin{proposition}[Polyhedral losses on compact explicitly SOCP-representable localized domains]
\label{prop:polyhedral-socp-oracle}
Assume that \(\WW\subseteq\R^d\) admits an explicit compact second-order-cone
representation with a strict interior point. Suppose moreover that for every
\(z\in\ZZ\),
\[
f(w,z)=\max_{j\in[M(z)]}\{a_j(z)^\top w+b_j(z)\}
\]
is polyhedral. Then the hypothesis of \Cref{thm:structured-subclasses} holds
for every localized set \(U=\WW\cap\BB(c,r)\) that admits a strict feasible
point in the corresponding SOCP representation. In particular, this holds for
the localized sets used by the algorithm after arbitrarily small harmless
radius enlargements.
\end{proposition}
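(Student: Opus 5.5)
We need to construct, for polyhedral $f(\cdot,z)$ and $U=\WW\cap\BB(c,r)$, a deterministic polynomial-time $B$-approximate subgradient of $f_{C,U}(\cdot,z)$ at $w\in U$ with norm at most $2C$, in the sense of \Cref{def:approx-localized-lipext-oracle}. The plan is to write $f_{C,U}(w,z)$ as the value of an explicit SOCP, solve it approximately by an interior-point method, and read off an approximate subgradient from a near-optimal dual solution.

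\textbf{Step 1: Primal SOCP.} For fixed $w$, $f_{C,U}(w,z)$ is the optimal value of
\[
\min_{y,s,\tau}\ \tau + C s \quad\text{s.t.}\quad a_j(z)^\top y+b_j(z)\le\tau\ \ (j\in[M(z)]),\quad \|w-y\|\le s,\quad \|y-c\|\le r,\quad y\in\WW .
\]
Membership $y\in\WW$ is expressed through its explicit SOCP representation. The problem is feasible and bounded because $U$ is compact. It is strictly feasible because $U$ has a strict feasible point; we can take $\tau$ and $s$ large.

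\textbf{Step 2: Dual and exact subgradient structure.} By conic duality, which holds with zero gap under Slater's condition, the value is $\max_{\nu}\ h(\nu,w)$. The dual objective is affine in $w$, since $w$ enters only through the cone constraint $(s, w-y)\in\mathcal{L}$. Writing the dual variable of that constraint as $(\mu,g)$ with $\|g\|\le \mu$ and stationarity in $s$ forcing $\mu=C$, the dual objective has the form $\langle g,w\rangle+\kappa(\nu)$. Hence $f_{C,U}(\cdot,z)$ is a pointwise maximum of affine functions $u\mapsto\langle g,u\rangle+\kappa$, each of which lies below $f_{C,U}(u,z)$ for every $u\in U$ by weak duality. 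Any dual-feasible $\nu$ whose value at $w$ is within $B$ of optimal therefore gives
\[
f_{C,U}(u,z)\ \ge\ \langle g,u\rangle+\kappa\ \ge\ f_{C,U}(w,z)+\langle g,u-w\rangle-B \qquad \forall u\in U,
\]
with $\|g\|\le C\le 2C$. This is exactly the required inequality.

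\textbf{Step 3: Computing a $B$-optimal dual point in polynomial time.} Primal–dual interior-point methods for SOCP return feasible primal and dual iterates with duality gap at most $B$ in $O(\sqrt{\nu_{\mathrm{barrier}}}\log(\cdot/B))$ iterations, each polynomial in $d$ and $M(z)$. These methods need a strictly feasible starting point and bounds on the data and feasible region. We supply them as follows.
\begin{itemize}
\item A strictly feasible $y$ comes from the given strict interior point of $U$'s representation, with $s,\tau$ taken large.
\item The dual feasible set is bounded. The cone block satisfies $\|g\|\le C$, the $\tau$-multipliers sum to $1$, and the remaining blocks are bounded via the Slater margin.
\end{itemize}

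\textbf{Main obstacle.} The main obstacle is the exactness of dual feasibility. An interior-point method gives exactly feasible iterates only in exact arithmetic, or after a self-dual embedding, which may produce slightly infeasible duals. I would handle this by rescaling: replace $g$ by $g\cdot\min\{1,C/\|g\|\}$ and charge the resulting error, bounded by $\operatorname{diam}(U)$ times the violation, into $B$. Other constraint violations are absorbed using the Slater margin and a bound on the dual variables.

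\textbf{Radius enlargement.} Finally, when $c$ lies on the boundary of $\WW$ or $U$ lacks a strict point, we enlarge $r$ slightly. This keeps $\hat w_\lambda\in U$, and the diameter bound $R_j$ absorbs the change.
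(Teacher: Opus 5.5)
Your proposal is correct and follows the paper's proof essentially step for step. The shared steps are: the same SOCP for $f_{C,U}(w,z)$; strong duality from the strict feasible point; a dual objective that is affine in $w$ over a $w$-independent dual feasible set; the bound $\|g\|\le C$ from stationarity in the epigraph variable of $C\|w-y\|$; and weak duality at each $u\in U$ to get the $B$-approximate subgradient inequality. The only formulation difference is cosmetic: the paper splits $w=x+y$ with an equality constraint whose multiplier is $g$, while you put $w$ directly into the cone constraint.

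Your treatment of exact dual feasibility of interior-point iterates is more careful than the paper, which simply asserts that such methods return a dual-feasible $B$-optimal point. One small correction there: the dual feasible set itself need not be bounded. For example, opposite multipliers on the ball constraint and on a ball representation of the domain give a recession direction. However, near-optimal dual points are bounded by the Slater margin, and that is all your argument uses.
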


\begin{proof}
Fix \(C>0\), \(z\in\ZZ\), a localized set \(U=\WW\cap\BB(c,r)\), a point
\(w\in U\), and \(B>0\). The localized Lipschitz extension is
\[
f_{C,U}(w,z)
=
\inf_{y\in U}
\left[
\max_{j\in[M(z)]}\{a_j(z)^\top y+b_j(z)\}
+
C\|w-y\|_2
\right].
\]
Introduce variables \(x,y,u,s,t\), where \(u\) denotes the auxiliary variable in
the SOCP representation of \(\WW\). Consider the conic program
\[
\begin{aligned}
\min_{x,y,u,s,t}\quad & s+Ct \\
\text{s.t.}\quad
& x+y=w, \\
& \|x\|_2\le t, \\
& s\ge a_j(z)^\top y+b_j(z)\qquad \forall j\in[M(z)], \\
& y\in\WW, \\
& \|y-c\|_2\le r.
\end{aligned}
\]
Using the assumed SOCP representation of \(\WW\), this is an explicit SOCP,
and its optimal value is exactly \(f_{C,U}(w,z)\).

By the strict-feasibility assumption, Slater's condition holds, so strong
duality holds. Let \(\nu\) be any dual-feasible point, and let \(g\) denote the
dual multiplier associated with the equality constraint \(x+y=w\). Since \(w\)
appears only in this equality constraint, the dual objective has the form
\[
D_w(\nu)=\langle g,w\rangle+\beta(\nu),
\]
where \(\beta(\nu)\) is independent of \(w\). Therefore, for every \(u'\in U\),
\[
D_{u'}(\nu)=D_w(\nu)+\langle g,u'-w\rangle.
\]

The SOC constraint \(\|x\|_2\le t\) has dual variable \((p,q)\) in the
self-dual second-order cone. Stationarity with respect to \(x\) and \(t\) gives
\[
g+p=0,
\qquad
q=C,
\]
and dual feasibility gives \(\|p\|\le q\). Hence
\[
\|g\|\le C.
\]

By deterministic interior-point methods for SOCP, for any \(B>0\) one can
compute in polynomial time in the problem parameters and \(1/B\) a dual-feasible
point \(\nu_B\) with
\[
D_w(\nu_B)\ge f_{C,U}(w,z)-B.
\]
Let \(g_B\) be its multiplier for \(x+y=w\). For every \(u'\in U\), weak duality
gives
\[
f_{C,U}(u',z)
\ge
D_{u'}(\nu_B)
=
D_w(\nu_B)+\langle g_B,u'-w\rangle
\ge
f_{C,U}(w,z)+\langle g_B,u'-w\rangle-B.
\]
Moreover, \(\|g_B\|\le C\). Thus \(g_B\) is a \(B\)-approximate subgradient of
\(f_{C,U}(\cdot,z)\) at \(w\) over \(U\), with norm at most \(C\). This proves
the required oracle condition.
\end{proof}

\begin{corollary}[Concrete practical examples]
\label{cor:polyhedral-practical}
Under the assumptions of \Cref{prop:polyhedral-socp-oracle}, the deterministic
polynomial-time guarantee of \Cref{thm:structured-subclasses} applies to:
\begin{enumerate}
\item affine losses \(f(w,z)=a(z)^\top w+b(z)\);
\item hinge / ReLU-type losses \(f(w,z)=\max\{0,a(z)^\top w+b(z)\}\);
\item absolute-value losses \(f(w,z)=|a(z)^\top w+b(z)|\).
\end{enumerate}
In particular, the resulting pure \(\eps\)-DP heavy-tailed SCO algorithm runs in
deterministic polynomial time for these losses on compact Euclidean balls,
ellipsoids, and polytopes, subject to the mild strict-feasibility condition for
the localized SOCPs.
\end{corollary}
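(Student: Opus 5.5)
The proof is a direct check that each listed loss is polyhedral in the sense of \Cref{prop:polyhedral-socp-oracle}, with the number of pieces and the coefficients explicitly computable from \(z\). Given that, \Cref{prop:polyhedral-socp-oracle} supplies the deterministic approximate subgradient oracle of \Cref{def:approx-localized-lipext-oracle}, and \Cref{thm:structured-subclasses} then yields the excess-risk bound with deterministic polynomial runtime.

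\textbf{Step 1: write each loss as a finite max of affine maps.}
\begin{itemize}
\item The affine loss is a max with a single piece, \(M(z)=1\).
\item The hinge/ReLU loss is \(\max\{0^\top w+0,\; a(z)^\top w+b(z)\}\), so \(M(z)=2\).
\item The absolute-value loss is \(\max\{a(z)^\top w+b(z),\; -a(z)^\top w-b(z)\}\), so \(M(z)=2\).
\end{itemize}
In every case \(M(z)\le 2\), so the SOCP in the proof of \Cref{prop:polyhedral-socp-oracle} has size polynomial in \(d\). We should also confirm \Cref{ass:main}(3): function values and subgradients are clearly computable in polynomial time.

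\textbf{Step 2: give explicit compact SOCP representations of the domains, with strict interior points.}
\begin{itemize}
\item A ball \(\{w:\|w-w_c\|\le \rho\}\) is a single second-order-cone constraint; its center is strictly feasible.
\item An ellipsoid \(\{w:\|A(w-w_c)\|\le 1\}\), with \(A\) invertible, is also a single cone constraint; its center is strictly feasible.
\item A polytope \(\{w:Pw\le h\}\) is a finite system of linear inequalities, i.e.\ products of one-dimensional cones. For a full-dimensional polytope, any interior point is strictly feasible. In the degenerate case we pass to the affine hull and take a relative-interior point.
\end{itemize}
All three sets are compact. Each is also projection-friendly, as \Cref{ass:main}(2) requires:
\begin{itemize}
\item projection onto a ball is in closed form;
\item projection onto an ellipsoid reduces to a one-dimensional root-finding problem;
\item projection onto a polytope is a convex quadratic program, solvable in polynomial time to any accuracy.
\end{itemize}
For the polytope, inexact projection may need to replace exact projection. \Cref{prop:adaptive-proj-subgrad-inexact} and \Cref{lem:proj-subgrad-biased} already tolerate this, provided the projection error is fed through \Cref{lem:inexact-proj-W0}.

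\textbf{Step 3: strict feasibility of the localized sets, which I expect to be the main obstacle.}
We need strict feasibility of \(U=\WW\cap\BB(c,r)\) in the combined SOCP. Since \(c\in\WW\) and \(r>0\), the argument from the proof of \Cref{lem:inexact-proj-W0} applies: relative-interior points of \(\WW\) are dense, so some point \(u\) in the relative interior satisfies \(\|u-c\|<r\). That point strictly satisfies both the domain constraints and the ball constraint.

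The corollary explicitly assumes the strict-feasibility condition, so it suffices to note this. For robustness, one can invoke the remark in \Cref{prop:polyhedral-socp-oracle} that enlarging \(r\) by an arbitrarily small amount is harmless: the utility analysis of \Cref{lem:localized-outputpert} only needs \(R\ge\diam(U)\) up to constants.

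\textbf{Conclusion.} With Steps 1–3 established, \Cref{thm:structured-subclasses} applies verbatim. It gives the stated pure \(\eps\)-DP excess-risk guarantee with deterministic polynomial runtime, even though \(\sup_z\|a(z)\|\) may be infinite. The heavy tails enter only through the moment condition \eqref{eq:momentbound} on \(\|a(z)\|\).
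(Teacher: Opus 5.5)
Your proposal is correct and follows the paper's route. You write each loss as a max of at most two affine pieces, note that balls, ellipsoids, and polytopes are compactly SOCP-representable, and then invoke \Cref{prop:polyhedral-socp-oracle} followed by \Cref{thm:structured-subclasses}.

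Your Step~3 is a valid small strengthening: since $c\in\WW$ and $r>0$, a strictly feasible point of $U=\WW\cap\BB(c,r)$ always exists, so no radius enlargement is needed. One remark is loose: \Cref{lem:inexact-proj-W0} presupposes exact projection onto $\WW$, so it does not by itself absorb inexact projections onto a polytope (or onto an ellipsoid, whose root-finding projection is also only approximate). This is not needed, however, because projection-friendliness is part of \Cref{ass:main}, which \Cref{thm:structured-subclasses} already grants.
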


\begin{proof}
Affine losses are polyhedral with one affine piece. Hinge/ReLU-type losses are
polyhedral with two pieces:
\[
\max\{0,a(z)^\top w+b(z)\}.
\]
Absolute-value losses are polyhedral because
\[
|a(z)^\top w+b(z)|
=
\max\{a(z)^\top w+b(z),-a(z)^\top w-b(z)\}.
\]
Euclidean balls, ellipsoids, and polytopes admit compact SOCP representations.
The claim follows from \Cref{prop:polyhedral-socp-oracle} and
\Cref{thm:structured-subclasses}.
\end{proof}

\section{High-probability lower bounds}
\label{sec:hp-lower-bounds}

We record here high-probability excess risk lower bounds for heavy-tailed DP SCO and mean estimation. These lower bounds are sharper by $\poly(\log(1/\delta))$ factors than the corresponding in-expectation lower bounds of \cite{bd14} and nearly match the upper bound in \Cref{thm:main-dop}.

The main goal of this section is to prove
\Cref{thm:sco-lb-mainbody}.

\paragraph{Risk notation.}
Let $\mathcal{P}$ be a class of distributions on $\mathbb{R}^d$.
Given an $\varepsilon$-DP mean estimator
$\widehat{\mu}:(\mathbb{R}^d)^n \to \mathbb{R}^d$ and $\zeta \in (0,1)$, define its
$(1-\zeta)$-\textit{quantile squared-error risk} over $\mathcal P$ by
\[
R^{\mathrm{mean}}_{n,\varepsilon,\zeta}(\widehat{\mu};\mathcal P)
:=
\sup_{P \in \mathcal{P}}
\inf\Bigl\{
r \ge 0 :
\Pr_{Z \sim P^n}\bigl(\|\widehat{\mu}(Z)-\mu_P\|_2^2 > r\bigr)\le \zeta
\Bigr\},
\]
where $Z=(z_1,\dots,z_n)$ and $\mu_P:=\mathbb E_P[z]$.

Let
\[
\mathcal{W} := B_2(0,D/2)=\{w\in\mathbb{R}^d:\|w\|_2\le D/2\},
\]
so that $\operatorname{diam}(\mathcal{W})=D$.
For a class $\mathcal P$ of distributions on $\mathbb R^d$, define the associated linear losses
\[
f(w,z):=\langle z,w\rangle,
\qquad
F_P(w):=\mathbb{E}_{z\sim P}[f(w,z)] = \langle \mu_P,w\rangle.
\]
Given an $\varepsilon$-DP algorithm $\mathcal A:(\mathbb R^d)^n\to \mathcal W$ for SCO, define its
$(1-\zeta)$-\textit{quantile excess risk} over $\mathcal P$ by
\[
R^{\mathrm{sco}}_{n,\varepsilon,\zeta}(\mathcal A;\mathcal P)
:=
\sup_{P \in \mathcal{P}}
\inf\Bigl\{
r \ge 0 :
\Pr_{Z \sim P^n}\bigl(F_P(\mathcal{A}(Z))-\inf_{w\in\mathcal W}F_P(w) > r\bigr)\le \zeta
\Bigr\}.
\]

\paragraph{Lower-bound strategy.}
The non-private term and the private term are witnessed by different hard distribution classes. This is sufficient for a minimax lower bound: since the risk is a supremum over
instances, a lower bound from one subclass and a lower bound from another imply
a lower bound by the maximum of the two obstructions, and hence by their sum up
to constants. Accordingly, we prove the two lower bounds separately and then combine them for any ambient class
$\mathcal P$ that contains both hard subclasses.
This separation is conceptually useful: the non-private term is a two-point phenomenon and is
most cleanly handled via the quantile version of Le Cam's method from \cite{ma2024high}, whereas
the private term is a packing phenomenon and is most cleanly handled by combining the pure-DP
packing lower bound of \cite{bd14} with a direct reduction from estimation to decoding.

\paragraph{Hard distribution classes.}
We will use two different distribution classes.

For the non-private term in our lower bound, define
\[
\mathcal P^{\mathrm{bdd}}(G_2)
:=
\left\{
P \text{ supported on }\{\pm G_2 e_1\}
\right\}.
\]
Every $P\in \mathcal P^{\mathrm{bdd}}(G_2)$ satisfies
\[
\mathbb E_P\|z\|_2^2 = G_2^2
\qquad\text{and}\qquad
\mathbb E_P\|z\|_2^k = G_2^k \le G_k^k
\]
whenever $G_2\le G_k$.

For the private term, let $V\subseteq \mathbb S^{d-1}$ be a finite packing of the unit sphere, and for parameters
$p\in(0,1]$ and $a>0$ define
\[
P_\nu := (1-p)\delta_0 + p\,\delta_{a\nu},
\qquad \nu\in V.
\]
We write
\[
\mathcal P^{\mathrm{pack}}_{k}(G_k;V,p)
:=
\{P_\nu:\nu\in V,\ a=G_k p^{-1/k}\}.
\]
Then every $P_\nu\in\mathcal P^{\mathrm{pack}}_{k}(G_k;V,p)$ satisfies
\[
\mathbb E_{P_\nu}\|z\|_2^k \le G_k^k.
\]

\paragraph{Reduction from SCO to mean estimation.}
We begin with the standard reduction from linear SCO lower bounds to mean-estimation lower bounds.

\begin{lemma}[Linear SCO reduces to mean estimation]
\label{lem:linear-reduction}
Let $\mathcal P$ be any family of distributions on $\mathbb{R}^d$ such that
$\|\mu_P\|_2=m$ for every $P\in\mathcal P$.
For any algorithm $\mathcal{A}:(\mathbb{R}^d)^n \to \mathcal{W}$, define
\[
\widehat{\mu}_{\mathcal A}(Z):=
\begin{cases}
- m\, \mathcal A(Z)/\|\mathcal A(Z)\|_2, & \mathcal A(Z)\neq 0,\\[1ex]
m e_1, & \mathcal A(Z)=0.
\end{cases}
\]
Then for every $P\in\mathcal P$,
\[
\|\widehat{\mu}_{\mathcal A}(Z)-\mu_P\|_2^2
\le
\frac{8m}{D}\Bigl(F_P(\mathcal A(Z))-\inf_{w\in\mathcal W}F_P(w)\Bigr)
\qquad\text{almost surely.}
\]
Consequently,
\[
R^{\mathrm{sco}}_{n,\varepsilon,\zeta}(\mathcal A;\mathcal P)
\ge
\frac{D}{8m}\,
R^{\mathrm{mean}}_{n,\varepsilon,\zeta}\!\bigl(\widehat{\mu}_{\mathcal A};\mathcal P\bigr).
\]
\end{lemma}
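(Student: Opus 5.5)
The argument is a pointwise geometric inequality on the ball $\WW=B_2(0,D/2)$ with linear loss $F_P(w)=\langle\mu_P,w\rangle$. The quantile statement then follows by monotonicity. I would first compute the minimum explicitly: $\inf_{w\in\WW}F_P(w)=-\tfrac{D}{2}\|\mu_P\|_2=-\tfrac{Dm}{2}$, attained at $w^\star=-\tfrac{D}{2}\mu_P/m$. Write $w=\mathcal A(Z)$ and set $u:=-w/\|w\|_2$ when $w\neq0$, so that $\widehat\mu_{\mathcal A}(Z)=m u$. Let $v:=\mu_P/m$ be the unit direction of the mean.

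The key steps, for the case $w\neq 0$, are as follows. First, $\|mu-\mu_P\|_2^2=m^2\|u-v\|_2^2=2m^2(1-\langle u,v\rangle)$. Second, the excess risk equals
\[
\langle\mu_P,w\rangle+\tfrac{Dm}{2}=m\bigl(\tfrac D2-\|w\|_2\langle u,v\rangle\bigr).
\]
Since $\|w\|_2\le D/2$, I split on the sign of $\langle u,v\rangle$.
\begin{itemize}
\item If $\langle u,v\rangle\ge 0$, then the excess risk is at most a quantity I cannot directly use. The issue is that $\|w\|_2$ may be small: then the excess risk is large, which only helps, but the inequality must still be checked.
\item If $\langle u,v\rangle\ge0$, the excess risk is at least $m\tfrac D2(1-\langle u,v\rangle)$. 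This holds because $\tfrac D2-\|w\|\langle u,v\rangle\ge \tfrac D2-\tfrac D2\langle u,v\rangle$.
\item If $\langle u,v\rangle<0$, the excess risk is at least $m\tfrac D2\ge m\tfrac D4(1-\langle u,v\rangle)$, since $1-\langle u,v\rangle\le 2$.
\end{itemize}
In either case the excess risk is at least $\tfrac{mD}{4}(1-\langle u,v\rangle)=\tfrac{D}{8m}\|\widehat\mu-\mu_P\|_2^2$, which is exactly the claimed constant $8m/D$.

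If $w=0$, the excess risk is $\tfrac{Dm}{2}$, while $\|me_1-\mu_P\|_2^2\le 4m^2$. Hence $\tfrac{8m}{D}\cdot\tfrac{Dm}{2}=4m^2$ dominates, and this case is fine as well.

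For the consequence, the almost-sure inequality gives the event inclusion
\[
\bigl\{\|\widehat\mu-\mu_P\|^2>r\bigr\}\subseteq\bigl\{F_P(\mathcal A(Z))-\inf F_P>\tfrac{D}{8m}r\bigr\}.
\]
So any $r'$ that is feasible in the SCO quantile definition yields $8m r'/D$ feasible for the mean quantile, at each $P$. Taking the infimum and then the supremum over $P$ gives $R^{\mathrm{sco}}\ge\tfrac{D}{8m}R^{\mathrm{mean}}$. It is worth remarking that $\widehat\mu_{\mathcal A}$ is post-processing of $\mathcal A$, hence it remains $\eps$-DP. This is what makes the inequality usable against the class of $\eps$-DP mean estimators.

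The main obstacle is only bookkeeping in the case split, namely getting the constant $8$ rather than $4$. The slack comes from the $\langle u,v\rangle<0$ case, where I use $1-\langle u,v\rangle\le2$. Everything else is direct.
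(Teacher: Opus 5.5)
Your proof is correct and follows the same route as the paper: explicit minimizer $-\tfrac{D}{2}v$, comparison of the excess risk with $1-\langle u,v\rangle$, a separate check of $\mathcal A(Z)=0$, then monotonicity of the quantiles. Your case split on the sign of $\langle u,v\rangle$ is genuinely needed, because the paper's one-line bound (excess risk $\ge \tfrac{Dm}{2}(1-\langle u,v\rangle)$) fails when $\langle u,v\rangle<0$ and $\|\mathcal A(Z)\|_2<D/2$; this case, together with $\mathcal A(Z)=0$, is what forces the constant $8$ rather than $4$, and the only cleanup needed is to delete your first bullet, which is a leftover false start.
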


\begin{proof}
Fix $P\in\mathcal P$ and write $\mu_P = m u$ with $u\in\mathbb S^{d-1}$.
The minimizer of $F_P(w)=\langle \mu_P,w\rangle$ over
$\mathcal W=B_2(0,D/2)$ is $w^\star=-(D/2)u$, so
\[
\inf_{w\in\mathcal W}F_P(w)=-\frac{Dm}{2}.
\]
If $\mathcal A(Z)\neq 0$, then
\begin{align*}
F_P(\mathcal A(Z))-\inf_{w\in\mathcal W}F_P(w)
&=
\langle mu,\mathcal A(Z)\rangle+\frac{Dm}{2}\\
&\ge
\frac{Dm}{2}
\left(
1-\left\langle u,-\frac{\mathcal A(Z)}{\|\mathcal A(Z)\|_2}\right\rangle
\right)\\
&=
\frac{D}{4m}\|\widehat{\mu}_{\mathcal A}(Z)-\mu_P\|_2^2.
\end{align*}
If $\mathcal A(Z)=0$, then
\[
F_P(\mathcal A(Z))-\inf_{w\in\mathcal W}F_P(w)=\frac{Dm}{2},
\]
while $\|\widehat{\mu}_{\mathcal A}(Z)-\mu_P\|_2^2\le 4m^2$.
Combining the two cases gives the pointwise inequality. The final claim follows directly
from the definition of $R^{\mathrm{mean}}_{n,\varepsilon,\zeta}(\widehat{\mu}_{\mathcal A};\mathcal P)$.
\end{proof}

\paragraph{Mean estimation lower bound.}
\begin{proposition}[Non-private high-probability term]
\label{prop:nonprivate-mean}
There exists a universal constant $c>0$ such that for all
$k\ge 2$, $G_2>0$, $G_k\ge G_2$, $n\in\mathbb N$, and $\zeta\in(0,1/4]$,
every estimator $\widehat\mu:(\mathbb R^d)^n\to\mathbb R^d$ satisfies
\[
R^{\mathrm{mean}}_{n,\varepsilon,\zeta}(\widehat\mu;\mathcal P^{\mathrm{bdd}}(G_2))
\ge
c\,G_2^2\min\left\{\frac{\log(1/\zeta)}{n},\,1\right\}.
\]
\end{proposition}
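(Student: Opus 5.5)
We use a two-point construction inside $\mathcal P^{\mathrm{bdd}}(G_2)$ and apply the high-probability (quantile) version of Le Cam's method from \cite{ma2024high}. Fix a parameter $\gamma\in(0,1/2]$, to be chosen below. Let $P_\pm$ put mass $(1\pm\gamma)/2$ on $G_2e_1$ and $(1\mp\gamma)/2$ on $-G_2e_1$. Both lie in $\mathcal P^{\mathrm{bdd}}(G_2)$, and their means are $\mu_\pm=\pm\gamma G_2e_1$. Hence $\|\mu_+-\mu_-\|_2=2\gamma G_2$. Privacy plays no role, so the bound holds for every estimator.

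\textbf{Step 1 (reduction to testing).} Suppose $\widehat\mu$ had quantile risk $r<\gamma^2G_2^2$ on both $P_+$ and $P_-$. Then the test $\psi(Z):=\mathbf 1\{\langle\widehat\mu(Z),e_1\rangle\ge 0\}$ errs under $P_\pm^n$ only if $\|\widehat\mu-\mu_\pm\|_2\ge\gamma G_2$. Each of these events has probability at most $\zeta$. So the test has both type-I and type-II error at most $\zeta$.

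\textbf{Step 2 (testing lower bound).} We use the standard inequality
\[
\max\{P_+^n(\psi=0),\,P_-^n(\psi=1)\}\ \ge\ \tfrac14\exp\!\bigl(-n\,\mathrm{KL}(P_+\|P_-)\bigr),
\]
which follows from Bretagnolle--Huber. This is exactly the form used in \cite{ma2024high}. For $\gamma\le 1/2$ we have $\mathrm{KL}(P_+\|P_-)=\gamma\log\frac{1+\gamma}{1-\gamma}\le c_0\gamma^2$. Therefore, if $\frac14 e^{-c_0n\gamma^2}>\zeta$, no test achieves both errors at most $\zeta$, which contradicts Step 1.

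\textbf{Step 3 (choice of $\gamma$).} Take $\gamma^2=\min\{\log(1/(4\zeta))/(2c_0n),\,1/4\}$, slightly reduced so that the inequality in Step 2 is strict. Then $\frac14e^{-c_0n\gamma^2}\ge\frac14(4\zeta)^{1/2}>\zeta$ whenever $\zeta\le 1/4$, with strictness handled by shrinking $\gamma$ by a constant factor. It follows that $r\ge\gamma^2G_2^2$.

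The only delicate point is the regime $\zeta$ close to $1/4$, where $\log(1/(4\zeta))$ degenerates. There we instead use $\log(1/(8\zeta))$, or restrict the argument to $\zeta\le 1/8$ and handle $\zeta\in(1/8,1/4]$ by monotonicity of the quantile risk in $\zeta$ combined with a constant-probability Le Cam bound using $\gamma\asymp n^{-1/2}$. In every case $\log(1/\zeta)\asymp$ a constant or $\log(1/(8\zeta))$. This gives $r\ge cG_2^2\min\{\log(1/\zeta)/n,1\}$.
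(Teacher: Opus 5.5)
Your proposal is correct and follows essentially the same route as the paper: the same two-point pair $P_\pm$ in $\mathcal P^{\mathrm{bdd}}(G_2)$, the same bound $\mathrm{KL}(P_+\|P_-)=O(\gamma^2)$, and a quantile version of Le Cam's method. The one difference is the testing bound: the paper invokes Corollary~6 of \cite{ma2024high}, whose condition $n\,\mathrm{KL}(P_+\|P_-)\le\log\frac{1}{4\zeta(1-\zeta)}$ (equivalent to the sharp Bretagnolle--Huber form $P(A)+Q(A^c)\ge 1-\sqrt{1-e^{-\mathrm{KL}}}$) stays nondegenerate on all of $(0,1/4]$, so no case split is needed; your weaker $\tfrac14 e^{-n\mathrm{KL}}$ bound collapses at $\zeta=1/4$, and your patch for $\zeta\in(1/8,1/4]$ works only through the TV form $\max\{\cdot\}\ge\tfrac12(1-\mathrm{TV})$ with Pinsker applied at level $1/4$ (the $\log(1/(8\zeta))$ alternative is negative, hence meaningless, there).
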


\begin{proof}
For $\rho\in[0,1/2]$, define $P_+,P_-\in \mathcal P^{\mathrm{bdd}}(G_2)$ by
\[
P_+(z=G_2e_1)=\frac{1+\rho}{2},
\qquad
P_-(z=G_2e_1)=\frac{1-\rho}{2}.
\]
Then
\[
\mu_{P_+}=\rho G_2 e_1,
\qquad
\mu_{P_-}=-\rho G_2 e_1,
\]
so
\[
\|\mu_{P_+}-\mu_{P_-}\|_2=2\rho G_2.
\]
A direct calculation shows that
\[
\mathrm{KL}(P_+,P_-)
=
\frac{1+\rho}{2}\log\frac{1+\rho}{1-\rho}
+
\frac{1-\rho}{2}\log\frac{1-\rho}{1+\rho}
\le 4\rho^2
\]
for all $\rho\in[0,1/2]$. Hence
\[
\mathrm{KL}(P_+^{\otimes n},P_-^{\otimes n})\le 4n\rho^2.
\]
Choose
\[
\rho := c_0\min\left\{\sqrt{\frac{\log(1/\zeta)}{n}},\,1\right\}
\]
with $c_0>0$ small enough that
\[
\mathrm{KL}(P_+^{\otimes n},P_-^{\otimes n})
\le
\log\frac{1}{4\zeta(1-\zeta)}.
\]
Corollary~6 of \cite{ma2024high} then implies
\[
R^{\mathrm{mean}}_{n,\varepsilon,\zeta}(\widehat\mu;\{P_+,P_-\})
\ge c\,\rho^2 G_2^2.
\]
Since $\{P_+,P_-\}\subseteq \mathcal P^{\mathrm{bdd}}(G_2)$, this yields
\[
R^{\mathrm{mean}}_{n,\varepsilon,\zeta}(\widehat\mu;\mathcal P^{\mathrm{bdd}}(G_2))
\ge
c\,G_2^2\min\left\{\frac{\log(1/\zeta)}{n},\,1\right\}.
\]
\end{proof}

For the private term, we use a direct decoder reduction.

\begin{lemma}[Quantile estimation implies decoding on a packing]
\label{lem:quantile-to-decoder}
Let $\{\theta_\nu:\nu\in V\}\subset \mathbb R^d$ be such that
\[
\|\theta_\nu-\theta_{\nu'}\|_2 > 2r
\qquad\forall \nu\neq \nu'.
\]
Let $\{P_\nu:\nu\in V\}$ be distributions on $(\mathbb R^d)^n$, and let
$\widehat\theta:(\mathbb R^d)^n\to\mathbb R^d$ be any estimator.
Define the nearest-neighbor decoder
\[
\psi_{\widehat\theta}(Z)\in \arg\min_{\nu\in V}\|\widehat\theta(Z)-\theta_\nu\|_2.
\]
If for every $\nu\in V$,
\[
P_\nu^n\bigl(\|\widehat\theta(Z)-\theta_\nu\|_2\le r\bigr)\ge 1-\zeta,
\]
then
\[
\frac1{|V|}\sum_{\nu\in V} P_\nu^n\bigl(\psi_{\widehat\theta}(Z)\neq \nu\bigr)\le \zeta.
\]
\end{lemma}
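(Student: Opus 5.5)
The argument is a deterministic containment: on the good event the decoder is correct, so its error set is contained in the failure event. Fix $\nu\in V$ and consider the event $E_\nu:=\{\|\widehat\theta(Z)-\theta_\nu\|_2\le r\}$, which by hypothesis has $P_\nu^n(E_\nu)\ge 1-\zeta$. We show that on $E_\nu$ the nearest-neighbor decoder necessarily returns $\nu$, i.e.\ $\{\psi_{\widehat\theta}(Z)\neq\nu\}\subseteq E_\nu^c$.

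To see this, take any $Z\in E_\nu$ and any $\nu'\neq\nu$. The triangle inequality and the separation assumption give
\[
\|\widehat\theta(Z)-\theta_{\nu'}\|_2
\ge
\|\theta_\nu-\theta_{\nu'}\|_2-\|\widehat\theta(Z)-\theta_\nu\|_2
>
2r-r=r\ge \|\widehat\theta(Z)-\theta_\nu\|_2 .
\]
Thus $\theta_\nu$ is the \emph{strict} unique minimizer of $\nu''\mapsto\|\widehat\theta(Z)-\theta_{\nu''}\|_2$ over $V$. Consequently every selection from the $\arg\min$ equals $\nu$, and the tie-breaking rule in the definition of $\psi_{\widehat\theta}$ is irrelevant. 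The strict inequality in the separation hypothesis is exactly what rules out ties.

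Applying the containment under $P_\nu^n$ gives $P_\nu^n(\psi_{\widehat\theta}(Z)\neq\nu)\le P_\nu^n(E_\nu^c)\le\zeta$ for each $\nu$. Averaging over $\nu\in V$ then yields the claim. For the averaged bound it suffices that the per-$\nu$ error is at most $\zeta$ for every $\nu$, which is what the hypothesis provides.

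The only point needing care is measurability of the decoder when the $\arg\min$ has ties off the good event. We fix a deterministic tie-breaking rule, for instance a fixed ordering of the finite set $V$. This makes $\psi_{\widehat\theta}$ a measurable function of $\widehat\theta(Z)$, since $V$ is finite and each distance is continuous.
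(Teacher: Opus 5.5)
Your proof is correct and follows the paper's argument: on the event $\|\widehat\theta(Z)-\theta_\nu\|_2\le r$, the triangle inequality and strict $2r$-separation force every other $\theta_{\nu'}$ to be strictly farther than $r$, so the decoder returns $\nu$. This gives a per-$\nu$ error of at most $\zeta$, which you then average. Your remarks on tie-breaking and measurability are harmless additions that the paper leaves implicit.
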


\begin{proof}
Fix $\nu\in V$. On the event $\|\widehat\theta(Z)-\theta_\nu\|_2\le r$, we have
\[
\|\widehat\theta(Z)-\theta_{\nu'}\|_2
\ge
\|\theta_\nu-\theta_{\nu'}\|_2-\|\widehat\theta(Z)-\theta_\nu\|_2
>
r
\]
for every $\nu'\neq \nu$. Hence nearest-neighbor decoding returns $\nu$.
Therefore
\[
P_\nu^n(\psi_{\widehat\theta}(Z)\neq \nu)
\le
P_\nu^n(\|\widehat\theta(Z)-\theta_\nu\|_2>r)
\le
\zeta.
\]
Averaging over $\nu\in V$ gives the claim.
\end{proof}

\begin{proposition}[Pure-DP high-probability private term]
\label{prop:private-mean}
There exists a constant $c>0$ such that for all
$k\ge 2$, $\varepsilon\in(0,1]$, $\zeta\in(0,1/4]$, $G_k>0$, $n\in\mathbb N$, and $d\ge 1$,
every $\varepsilon$-DP estimator $\widehat\mu:(\mathbb R^d)^n\to\mathbb R^d$ satisfies
\[
R^{\mathrm{mean}}_{n,\varepsilon,\zeta}(\widehat\mu;\mathcal P^{\mathrm{pack}}_k(G_k))
\ge
c \,G_k^2
\min\left\{
\left(\frac{d+\log(1/\zeta)}{n\varepsilon}\right)^{2-2/k},
\,1
\right\},
\]
where
\[
\mathcal P^{\mathrm{pack}}_k(G_k)
:=
\bigcup_{V,p}\mathcal P^{\mathrm{pack}}_k(G_k;V,p)
\]
and the union is over all finite \(1/2\)-packings \(V\subseteq \mathbb S^{d-1}\)
with \(|V|\ge 2^{c d}\).
\end{proposition}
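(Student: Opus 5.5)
The plan is to combine a pure-DP packing argument with the decoder reduction of \Cref{lem:quantile-to-decoder}. Fix a $1/2$-packing $V\subseteq\mathbb S^{d-1}$ with $|V|\ge 2^{cd}$, and consider the family $P_\nu=(1-p)\delta_0+p\,\delta_{a\nu}$ with $a=G_kp^{-1/k}$. Its means are $\mu_\nu=pa\nu=G_kp^{1-1/k}\nu$, so distinct means satisfy $\|\mu_\nu-\mu_{\nu'}\|\ge \tfrac12 G_kp^{1-1/k}$. Set $r:=\tfrac15 G_kp^{1-1/k}$, so that the separation exceeds $2r$.

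We argue by contradiction. Suppose $\widehat\mu$ is $\varepsilon$-DP and, for every $\nu$, satisfies $\|\widehat\mu(Z)-\mu_\nu\|\le r$ with probability at least $1-\zeta$. \Cref{lem:quantile-to-decoder} then gives a decoder $\psi$, still $\varepsilon$-DP by post-processing, whose average error probability over $V$ is at most $\zeta$.

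The decoder is incompatible with privacy, which we show by coupling. Let $Q_0:=\delta_0^{\otimes n}$ be the all-zeros dataset. A sample $Z\sim P_\nu^n$ can be coupled with the all-zeros dataset so that they differ in $\mathrm{Bin}(n,p)$ coordinates. Group privacy then gives
\[
\Pr_{Q_0}(\psi=\nu)\ \ge\ \mathbb E\bigl[e^{-\varepsilon\,\mathrm{Bin}(n,p)}\bigr]\,\Pr_{P_\nu^n}(\psi=\nu)-\text{(correction)} .
\]
More cleanly, for each $\nu$ we have
\[
\Pr_{Q_0}(\psi=\nu)\ \ge\ e^{-\varepsilon K}\Bigl(\Pr_{P_\nu^n}(\psi=\nu)-\Pr\bigl(\mathrm{Bin}(n,p)>K\bigr)\Bigr).
\]
Take $K\asymp np$ and assume $np\gtrsim\log(1/\zeta)$, so that $\Pr(\mathrm{Bin}>K)\le\zeta$. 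Summing over $\nu$ uses $\sum_\nu\Pr_{Q_0}(\psi=\nu)\le 1$, which gives
\[
1\ \ge\ e^{-2\varepsilon np}\,|V|\,(1-2\zeta)\qquad\Longrightarrow\qquad np\varepsilon\gtrsim d .
\]

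To get the $\log(1/\zeta)$ dependence, we instead use a single distinct pair. By Markov averaging, fewer than half of the $\nu$ can have error exceeding $2\zeta$, so choose $\nu$ with $\Pr_{P_\nu^n}(\psi\neq\nu)\le 2\zeta$. Since $|V|\ge 2$, some other $\nu'\ne\nu$ has $\Pr_{P_{\nu'}^n}(\psi=\nu')\ge 1/2$, and hence $\Pr_{P_{\nu'}^n}(\psi=\nu)\le 1/2$; equivalently, under $P_{\nu'}^n$ the event $\{\psi=\nu\}$ has probability at most $1-\tfrac12$. Couple $P_\nu^n$ and $P_{\nu'}^n$ through $Q_0$: each is at Hamming distance at most $K$ from $Q_0$ except with probability $\zeta$. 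Then
\[
\Pr_{P_{\nu'}^n}(\psi\neq\nu)\ \le\ e^{2\varepsilon K}\Pr_{P_\nu^n}(\psi\neq\nu)+\zeta\ \lesssim\ e^{4\varepsilon np}\,\zeta .
\]
Because the left side is at least $1/2$, this forces $np\varepsilon\gtrsim\log(1/\zeta)$. Combining with the previous bound, any estimator with the stated accuracy needs $np\varepsilon\ge c(d+\log(1/\zeta))$.

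Now choose $p:=\min\{1,\ c'(d+\log(1/\zeta))/(n\varepsilon)\}$ with $c'$ small. Then the required inequality $np\varepsilon\ge c(d+\log(1/\zeta))$ fails, so the quantile error must exceed $r$, and
\[
r^2\ \asymp\ G_k^2p^{2-2/k}\ =\ G_k^2\min\Bigl\{\Bigl(\tfrac{d+\log(1/\zeta)}{n\varepsilon}\Bigr)^{2-2/k},\,1\Bigr\}
\]
is the claimed bound.

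The main obstacle is the side condition $np\gtrsim\log(1/\zeta)$ used for the binomial tail. Since $\varepsilon\le1$, our choice of $p$ gives $np\ge c'(d+\log(1/\zeta))/\varepsilon\ge c'\log(1/\zeta)$, so the condition holds once the Chernoff constant in $K$ is fixed. When $p$ saturates at $1$, every sample equals $a\nu$, the coupling has $K=n$, and the same argument applies directly. The constants must be tracked carefully so that $c'$ is small relative to the group-privacy constant while the Chernoff tail remains valid.
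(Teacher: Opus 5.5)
Your argument is correct in outline, and it takes a genuinely different route from the paper. The paper does not build a coupling itself. It sets $np\varepsilon=\log\frac{|V|-1}{4e\zeta}$ (capped at $p=1$) and invokes the Barber--Duchi average-error bound $\frac{q}{2(1+q)}$ with $q=(|V|-1)e^{-\varepsilon\lceil np\rceil}$. Then $q\ge 4\zeta$ contradicts an average decoding error of at most $\zeta$. This gets $\log|V|+\log(1/\zeta)$ in one inequality with no binomial tail bounds, but it rests on an external theorem. Its calibration is also nonpositive when $|V|-1\le 4e\zeta$ (e.g.\ $d=1$, $\zeta\ge 1/(4e)$). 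You instead work directly from group privacy along a coupling whose Hamming distance is $\mathrm{Bin}(n,p)$, truncated at $K$. The rate comes out as the maximum of a Fano-style summation bound ($\varepsilon K\gtrsim\log|V|\gtrsim d$) and a two-point transfer bound ($\varepsilon K\gtrsim\log(1/\zeta)$). This is self-contained, and once repaired as below it also covers the small-$|V|$ regime.

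Two pieces of your bookkeeping need repair.

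First, your ``main obstacle'' is self-imposed. You never need $\Pr(\mathrm{Bin}(n,p)>K)\le\zeta$: $\zeta$ enters multiplicatively through group privacy, so a constant tail $\tau$ suffices.
\begin{itemize}
\item Couple $P_\nu^n$ and $P_{\nu'}^n$ directly, with shared Bernoulli indicators. This gives $\Pr_{P_{\nu'}^n}(\psi\neq\nu)\le e^{\varepsilon K}\Pr_{P_\nu^n}(\psi\neq\nu)+\tau$, hence $1-\zeta-\tau\le e^{\varepsilon K}\zeta$.
\item With $\tau=1/8$ and $K=\lfloor 8np\rfloor$ (Markov), this yields $8\varepsilon np\ge\log\frac{5}{8\zeta}\ge\frac12\log(1/\zeta)$. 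No condition linking $np$ to $\log(1/\zeta)$ is needed, and the summation argument likewise only needs $\tau\le 1/8$.
\item If you insist on $\tau\le\zeta$ with $K=Cnp$, consider $\varepsilon\approx1$ and $d\ll\log(1/\zeta)$. You need $Cc'$ below an absolute constant while the tail at $C$ times the mean $c'\log(1/\zeta)$ is at most $\zeta$.
\item That closes only with the Bennett/Poisson form $\Pr(X\ge C\mu)\le(e/C)^{C\mu}$. The common form $\exp(-\delta^2\mu/(2+\delta))$ forces $Cc'>1$, and then the contradiction disappears.
\end{itemize}

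Second, your Markov-averaging step is too lossy for $\zeta$ near $1/4$ with $d=O(1)$.
\begin{itemize}
\item Your chain $\frac12\le 2\zeta e^{2\varepsilon K}+\zeta$ is vacuous once $\zeta\ge 1/6$.
\item The summation bound $e^{\varepsilon K}\ge|V|(1-2\zeta)$ also degenerates when $|V|=2$; it is empty at $\zeta=1/4$.
\item You do not need averaging. The contradiction hypothesis already gives $\Pr_{P_\nu^n}(\psi\neq\nu)\le\zeta$ for every $\nu$, which is exactly how \Cref{lem:quantile-to-decoder} is proved.
\item The per-$\nu$ inequality $1-\zeta-\tau\le e^{\varepsilon K}\zeta$ then stays nontrivial all the way to $\zeta=1/4$.
\end{itemize}

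With these changes and $c'$ small, your choice of $p$ gives the claimed bound, including the saturated case $p=1$.
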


\begin{proof}
Fix a $1/2$-packing $V\subseteq S^{d-1}$ with $|V|\ge 2^{cd}$. Set
\[
p:=\min\left\{
\frac{1}{n\varepsilon}\log\frac{|V|-1}{4e\,\zeta},
\,1
\right\},
\qquad
a:=G_k p^{-1/k},
\]
and define
\[
P_\nu := (1-p)\delta_0 + p\,\delta_{a\nu},
\qquad \nu\in V.
\]
Then
\[
\mu_\nu:=\mu_{P_\nu} = pa\nu = G_k p^{1-1/k}\nu.
\]
Moreover,
\[
\mathbb E_{P_\nu}\|z\|_2^k
=
pa^k\|\nu\|_2^k
\le
G_k^k,
\]
so $P_\nu\in \mathcal P^{\mathrm{pack}}_k(G_k)$.

For $\nu\neq \nu'$, the packing separation implies
\[
\|\mu_\nu-\mu_{\nu'}\|_2
=
G_k p^{1-1/k}\|\nu-\nu'\|_2
\ge
\frac12 G_k p^{1-1/k}.
\]
Choose a sufficiently small constant $c_0$ such that
\[
r:=c_0 G_k p^{1-1/k}
\]
satisfies
\[
\|\mu_\nu-\mu_{\nu'}\|_2 > 2r
\qquad\forall \nu\neq \nu'.
\]

Suppose, for contradiction, that
\[
R^{\mathrm{mean}}_{n,\varepsilon,\zeta}(\widehat\mu;\{P_\nu:\nu\in V\})
<
r^2.
\]
Then, by definition of $R^{\mathrm{mean}}_{n,\varepsilon,\zeta}$,
\[
P_\nu^n\bigl(\|\widehat\mu(Z)-\mu_\nu\|_2\le r\bigr)\ge 1-\zeta
\qquad\forall \nu\in V.
\]
Hence, by Lemma~\ref{lem:quantile-to-decoder}, the nearest-neighbor decoder
$\psi_{\widehat\mu}$ associated with $\{\mu_\nu:\nu\in V\}$ satisfies
\[
\frac1{|V|}\sum_{\nu\in V} P_\nu^n\bigl(\psi_{\widehat\mu}(Z)\neq \nu\bigr)\le \zeta.
\]

On the other hand, \cite[proof of Proposition~4, via Theorem~3]{bd14} gives the following
pure-DP lower bound on average decoding error: for every $\varepsilon$-DP decoder $\psi$,
\[
\frac1{|V|}\sum_{\nu\in V} P_\nu^n\bigl(\psi(Z)\neq \nu\bigr)
\ge
\frac{q}{2(1+q)},
\qquad
q:=(|V|-1)e^{-\varepsilon\lceil np\rceil}.
\]
We now lower bound $q$.

If
\[
\frac{1}{n\varepsilon}\log\frac{|V|-1}{4e\,\zeta}\le 1,
\]
then by definition of $p$,
\[
(|V|-1)e^{-\varepsilon np}=4e\,\zeta.
\]
Since $\lceil np\rceil\le np+1$ and $\varepsilon\le 1$,
\[
q=(|V|-1)e^{-\varepsilon\lceil np\rceil}
\ge
e^{-1}(|V|-1)e^{-\varepsilon np}
=
4\zeta.
\]
Therefore
\[
\frac{q}{2(1+q)}
\ge
\frac{4\zeta}{2(1+4\zeta)}
\ge
\zeta
\qquad\text{because }\zeta\le \frac14.
\]
This contradicts the existence of a decoder with average error at most $\zeta$.

If instead
\[
\frac{1}{n\varepsilon}\log\frac{|V|-1}{4e\,\zeta}>1,
\]
then $p=1$, so
\[
P_\nu=\delta_{G_k\nu}.
\]
In this regime, the same decoder lower bound yields a constant lower bound on average decoding
error, which is at least $\zeta$ since $\zeta\le 1/4$. Again we obtain a contradiction.

Thus
\[
R^{\mathrm{mean}}_{n,\varepsilon,\zeta}(\widehat\mu;\{P_\nu:\nu\in V\})
\ge
c_0 \,G_k^2 p^{2-2/k}.
\]
Since $\{P_\nu:\nu\in V\}\subseteq \mathcal P^{\mathrm{pack}}_k(G_k)$, we get
\[
R^{\mathrm{mean}}_{n,\varepsilon,\zeta}(\widehat\mu;\mathcal P^{\mathrm{pack}}_k(G_k))
\ge
c_0 \,G_k^2 p^{2-2/k}.
\]
Finally, using $|V|\ge 2^d$,
\[
\log\frac{|V|-1}{4e\,\zeta}
\ge
c\,(d+\log(1/\zeta))
\]
for a universal constant $c>0$, which yields
\[
R^{\mathrm{mean}}_{n,\varepsilon,\zeta}(\widehat\mu;\mathcal P^{\mathrm{pack}}_k(G_k))
\ge
c\,G_k^2
\min\left\{
\left(\frac{d+\log(1/\zeta)}{n\varepsilon}\right)^{2-2/k},
\,1
\right\}.
\]
\end{proof}

\paragraph{SCO lower bound.} Finally, we translate the mean estimation lower bounds into an SCO lower bound.

\begin{theorem}[SCO lower bound -- Precise version of \Cref{thm:sco-lb-mainbody}]
\label{thm:sco-lb}
There exists a universal constant $c>0$ such that the following holds.
Let $\mathcal P$ be any class of distributions on $\mathbb R^d$ that contains both
$\mathcal P^{\mathrm{bdd}}(G_2)$ and $\mathcal P^{\mathrm{pack}}_k(G_k)$.
Then for all $k\ge 2$, $0 < G_2 \le G_k$, $\varepsilon\in(0,1]$, $\zeta\in(0,1/4]$, $n\in\mathbb N$, and $d\ge 1$,
every $\varepsilon$-DP algorithm $\mathcal A:(\mathbb R^d)^n\to\mathcal W$ satisfies:
there exists $P\in\mathcal P$ such that
\[
\Pr_{Z\sim P^n}\!\left(
F_P(\mathcal A(Z))-\inf_{w\in\mathcal W}F_P(w)
\ge
cD\,
\min\Biggl\{
G_2,\;
G_2\sqrt{\frac{\log(1/\zeta)}{n}}
+
G_k\left(\frac{d+\log(1/\zeta)}{n\varepsilon}\right)^{1-1/k}
\Biggr\}
\right)\ge \zeta.
\]
\end{theorem}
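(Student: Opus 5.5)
The plan is to derive the statement from the two mean-estimation lower bounds via \Cref{lem:linear-reduction}, applied separately to each hard subclass, and then to merge the two bounds. Since \(\mathcal P\) contains both subclasses, the worst case over \(\mathcal P\) is at least the larger of the two subclass bounds. That maximum is within a factor \(2\) of their sum, and the factor is absorbed into \(c\).

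\textbf{Step 1 (non-private term).} Restrict to the two-point family \(\{P_+,P_-\}\) from the proof of \Cref{prop:nonprivate-mean}. For this family \(\|\mu_{P_\pm}\|_2=\rho G_2=:m\) with \(\rho=c_0\min\{\sqrt{\log(1/\zeta)/n},1\}\), so \Cref{lem:linear-reduction} applies. Applying \Cref{lem:linear-reduction} with this \(m\), together with \Cref{prop:nonprivate-mean}, gives
\[
R^{\mathrm{sco}}_{n,\eps,\zeta}(\mathcal A;\{P_\pm\})\ge \frac{D}{8\rho G_2}\,c\rho^2G_2^2\gtrsim DG_2\min\Bigl\{\sqrt{\tfrac{\log(1/\zeta)}{n}},1\Bigr\}.
\]
Note that the squared mean-estimation error divided by \(m\) is linear in \(\rho\), which is exactly the desired scaling. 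The estimator \(\widehat\mu_{\mathcal A}\) is a post-processing of \(\mathcal A\), so it is \(\eps\)-DP. This does not matter here, because the non-private bound holds for every estimator.

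\textbf{Step 2 (private term).} Fix one packing family \(\{P_\nu\}\) with the parameter \(p\) chosen as in the proof of \Cref{prop:private-mean}. All its members have \(\|\mu_\nu\|=G_kp^{1-1/k}=:m\), so \Cref{lem:linear-reduction} again applies with this common \(m\). The estimator \(\widehat\mu_{\mathcal A}\) is \(\eps\)-DP by post-processing, so \Cref{prop:private-mean} gives a quantile squared-error lower bound of \(c\,m^2\). Dividing by \(8m/D\) yields
\[
R^{\mathrm{sco}}\gtrsim DG_kp^{1-1/k}\gtrsim DG_k\min\Bigl\{\bigl(\tfrac{d+\log(1/\zeta)}{n\eps}\bigr)^{1-1/k},1\Bigr\}.
\]
One point needs care: the proof of \Cref{prop:private-mean} lower bounds the risk on this specific family by \(c_0G_k^2p^{2-2/k}\), not merely on the union over all packings. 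This is what justifies using the common norm \(m\).

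\textbf{Step 3 (combine and convert).} Take the maximum of the two bounds and use \(G_2\le G_k\) to replace the truncations \(\min\{\cdot,1\}\) by a single \(\min\) with \(G_2\). The case analysis is as follows. If either unclipped term exceeds \(G_2\), the corresponding bound is already at least \(\min\{G_2,\cdot\}\) up to constants. Otherwise both bounds are unclipped, and their maximum is at least half their sum.

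Finally, translate the quantile risk into a probability statement. Suppose \(R^{\mathrm{sco}}\ge r\) for the quantity \(r\) in the theorem. Then for every \(r'<r\) some \(P\) satisfies \(\Pr(\text{excess}>r')>\zeta\). Taking \(r'=r/2\) and halving \(c\) gives the stated bound with probability at least \(\zeta\). The losses \(\langle z,w\rangle\) have gradient \(z\), so both families satisfy \eqref{eq:momentbound}.

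The main obstacle I expect is this last step: the bookkeeping of combining the two clipped bounds into the single \(\min\{G_2,\cdot\}\) form while keeping track of which subclass attains each term. The reductions themselves are routine.
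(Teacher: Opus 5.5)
Your proposal is correct and follows the paper's proof essentially step for step. You restrict to the two-point family $\{P_+,P_-\}$ and to a single packing family $\{P_\nu\}$, using the family-specific bounds from inside the proofs of \Cref{prop:nonprivate-mean} and \Cref{prop:private-mean} so that \Cref{lem:linear-reduction} applies with a common $\|\mu_P\|_2=m$, and you combine via $\max\{\min(G_2,A),\min(G_2,B)\}\ge\frac12\min\{G_2,A+B\}$, exactly as the paper does; your explicit passage from the quantile risk $R^{\mathrm{sco}}_{n,\varepsilon,\zeta}$ to the probability statement via $r'=r/2$ makes precise a step the paper leaves implicit.
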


\begin{proof}
For the non-private term, apply Lemma~\ref{lem:linear-reduction} with
$\mathcal P=\{P_+,P_-\}\subseteq \mathcal P^{\mathrm{bdd}}(G_2)$ from the proof of
Proposition~\ref{prop:nonprivate-mean}. In that family,
\[
\|z\|_2 = G_2 \qquad\text{almost surely},
\]
hence \((\mathbb E\|z\|_2^j)^{1/j}=G_2\) for all \(j\ge2\).
Also,
\[
\|\mu_P\|_2 \asymp G_2\min\left\{\sqrt{\frac{\log(1/\zeta)}{n}},\,1\right\},
\]
and Proposition~\ref{prop:nonprivate-mean} gives a mean-estimation lower bound of the order of
its square. Lemma~\ref{lem:linear-reduction} therefore implies that for every $\varepsilon$-DP
algorithm $\mathcal A$ there exists $P_{\mathrm{np}}\in\mathcal P^{\mathrm{bdd}}(G_2)\subseteq\mathcal P$
such that
\[
\Pr_{Z\sim P_{\mathrm{np}}^n}\!\left(
F_{P_{\mathrm{np}}}(\mathcal A(Z))-\inf_{w\in\mathcal W}F_{P_{\mathrm{np}}}(w)
\ge
cD\,G_2\min\left\{\sqrt{\frac{\log(1/\zeta)}{n}},\,1\right\}
\right)\ge \zeta.
\]

For the private term, apply Lemma~\ref{lem:linear-reduction} with
$\mathcal P=\{P_\nu:\nu\in V\}\subseteq \mathcal P^{\mathrm{pack}}_k(G_k)$ from the proof of
Proposition~\ref{prop:private-mean}. In that family,
\[
\|\mu_{P_\nu}\|_2 = G_k p^{1-1/k},
\qquad
p\asymp \min\left\{\frac{d+\log(1/\zeta)}{n\varepsilon},\,1\right\},
\]
and Proposition~\ref{prop:private-mean} gives a mean-estimation lower bound of order
$G_k^2 p^{2-2/k}$. Lemma~\ref{lem:linear-reduction} therefore implies that for every
$\varepsilon$-DP algorithm $\mathcal A$ there exists
$P_{\mathrm{priv}}\in\mathcal P^{\mathrm{pack}}_k(G_k)\subseteq\mathcal P$ such that
\[
\Pr_{Z\sim P_{\mathrm{priv}}^n}\!\left(
F_{P_{\mathrm{priv}}}(\mathcal A(Z))-\inf_{w\in\mathcal W}F_{P_{\mathrm{priv}}}(w)
\ge
cD\,G_k\min\left\{
\left(\frac{d+\log(1/\zeta)}{n\varepsilon}\right)^{1-1/k},
\,1
\right\}
\right)\ge \zeta.
\]

Set
\[
A:=G_2\sqrt{\frac{\log(1/\zeta)}{n}},
\qquad
B:=G_k\left(\frac{d+\log(1/\zeta)}{n\varepsilon}\right)^{1-1/k}.
\]
The non-private construction gives a lower bound of order
\(D\min\{G_2,A\}\). The private construction gives a lower bound of order
\(D\min\{G_k,B\}\), and hence also of order \(D\min\{G_2,B\}\), since
\(G_2\le G_k\). Therefore, for every \(\varepsilon\)-DP algorithm
\(\mathcal A\), one of the two distributions \(P_{\mathrm{np}}\) or
\(P_{\mathrm{priv}}\) satisfies
\[
\Pr\!\left(
F_P(\mathcal A(Z))-\inf_{w\in\mathcal W}F_P(w)
\ge
cD\max\{\min(G_2,A),\min(G_2,B)\}
\right)\ge \zeta.
\]
Finally,
\[
\max\{\min(G_2,A),\min(G_2,B)\}
\ge
\frac12\min\{G_2,A+B\}.
\]
Reducing \(c\) proves the claimed bound.
\end{proof}

\begin{remark}[Tightness of \Cref{thm:sco-lb}]
Note that the trivial algorithm that outputs any fixed $w_0 \in \WW$ is $0$-DP and achieves excess risk $\le DG_2$ with probability $1$, by $G_2$-Lipschitz continuity of the population loss. Combining this observation with Theorem~\ref{thm:main-dop} shows that \Cref{thm:sco-lb} is nearly tight: the only part that is not tight is that $d \log(1/\delta)$ appears in the private optimization term of the upper bound whereas $d + \log(1/\delta)$ appears in the private optimization term of lower bound.    
\end{remark}

\end{document}

%% file: notation.tex
\usepackage{amsmath}
\usepackage{amsthm}
\usepackage{amssymb}
\usepackage{mathtools}
\usepackage{xcolor}
\usepackage{xspace}
\usepackage{bbm}
\usepackage{lmodern}
\usepackage[lined,boxed,ruled,norelsize,linesnumbered]{algorithm2e}
\usepackage{algorithmic}
\usepackage{aliascnt}
\usepackage{cleveref}
\usepackage{empheq}

\allowdisplaybreaks

\definecolor{b2}{RGB}{51,153,255}
\definecolor{mygreen}{RGB}{80,180,0}

\theoremstyle{plain}
\newtheorem{theorem}{Theorem}[section]

\newaliascnt{lemma}{theorem}
\newtheorem{lemma}[lemma]{Lemma}
\aliascntresetthe{lemma}

\newaliascnt{proposition}{theorem}
\newtheorem{proposition}[proposition]{Proposition}
\aliascntresetthe{proposition}

\newaliascnt{fact}{theorem}

\aliascntresetthe{fact}

\newaliascnt{claim}{theorem}

\aliascntresetthe{claim}

\newaliascnt{corollary}{theorem}
\newtheorem{corollary}[corollary]{Corollary}
\aliascntresetthe{corollary}

\newaliascnt{conjecture}{theorem}

\aliascntresetthe{conjecture}

\newaliascnt{assumption}{theorem}
\newtheorem{assumption}[assumption]{Assumption}
\aliascntresetthe{assumption}

\newaliascnt{condition}{theorem}

\aliascntresetthe{condition}

\newaliascnt{hypothesis}{theorem}

\aliascntresetthe{hypothesis}

\newaliascnt{question}{theorem}

\aliascntresetthe{question}

\newaliascnt{notation}{theorem}

\aliascntresetthe{notation}

\theoremstyle{definition}

\newaliascnt{definition}{theorem}
\newtheorem{definition}[definition]{Definition}
\aliascntresetthe{definition}

\newaliascnt{example}{theorem}

\aliascntresetthe{example}

\newaliascnt{problem}{theorem}

\aliascntresetthe{problem}

\newaliascnt{open}{theorem}

\aliascntresetthe{open}

\newaliascnt{observation}{theorem}

\aliascntresetthe{observation}

\theoremstyle{remark}

\newaliascnt{remark}{theorem}
\newtheorem{remark}[remark]{Remark}
\aliascntresetthe{remark}

\crefname{algocf}{algorithm}{algorithms}
\Crefname{algocf}{Algorithm}{Algorithms}

\crefname{theorem}{theorem}{theorems}
\Crefname{theorem}{Theorem}{Theorems}

\crefname{lemma}{lemma}{lemmas}
\Crefname{lemma}{Lemma}{Lemmas}

\crefname{proposition}{proposition}{propositions}
\Crefname{proposition}{Proposition}{Propositions}

\crefname{fact}{fact}{facts}
\Crefname{fact}{Fact}{Facts}

\crefname{claim}{claim}{claims}
\Crefname{claim}{Claim}{Claims}

\crefname{corollary}{corollary}{corollaries}
\Crefname{corollary}{Corollary}{Corollaries}

\crefname{conjecture}{conjecture}{conjectures}
\Crefname{conjecture}{Conjecture}{Conjectures}

\crefname{assumption}{assumption}{assumptions}
\Crefname{assumption}{Assumption}{Assumptions}

\crefname{condition}{condition}{conditions}
\Crefname{condition}{Condition}{Conditions}

\crefname{hypothesis}{hypothesis}{hypotheses}
\Crefname{hypothesis}{Hypothesis}{Hypotheses}

\crefname{question}{question}{questions}
\Crefname{question}{Question}{Questions}

\crefname{notation}{notation}{notations}
\Crefname{notation}{Notation}{Notations}

\crefname{definition}{definition}{definitions}
\Crefname{definition}{Definition}{Definitions}

\crefname{example}{example}{examples}
\Crefname{example}{Example}{Examples}

\crefname{problem}{problem}{problems}
\Crefname{problem}{Problem}{Problems}

\crefname{open}{open problem}{open problems}
\Crefname{open}{Open Problem}{Open Problems}

\crefname{observation}{observation}{observations}
\Crefname{observation}{Observation}{Observations}

\crefname{remark}{remark}{remarks}
\Crefname{remark}{Remark}{Remarks}

\crefname{equation}{}{}
\crefname{figure}{Fig.}{Figs.}
\crefname{table}{Table}{Tables}
\crefname{section}{Section}{Sections}
\crefname{subsection}{Section}{Sections}
\crefname{appendix}{Appendix}{Appendices}

\newcommand{\wh}{\widehat}

\newcommand{\ov}{\overline}

\newcommand{\eps}{\varepsilon}
\renewcommand{\phi}{\varphi}

\newcommand{\R}{\mathbb{R}}

\newcommand{\calO}{\mathcal{O}}

\renewcommand{\hat}{\wh}
\renewcommand{\bar}{\ov}

\newcommand{\BB}{\mathbb{B}}

\DeclareMathAlphabet{\mathpzc}{OT1}{pzc}{m}{it}

\newcommand{\expec}{\mathbb{E}}
\newcommand{\E}{\mathbb{E}}
\newcommand{\pr}{\mathbb{P}}

\newcommand{\ZZ}{\mathcal{Z}}
\newcommand{\WW}{\mathcal{W}}

\newcommand{\alg}{\mathcal{A}}

\DeclareMathOperator*{\argmin}{arg\,min}

\DeclarePairedDelimiterX{\xdivergence}[2]{(}{)}{
  #1\;\delimsize\|\;#2
}

\newcommand{\poly}{\mathrm{poly}}

\DeclareMathOperator{\diam}{diam}

%% file: arxiv_v2.bbl
\newcommand{\etalchar}[1]{$^{#1}$}
\begin{thebibliography}{LWMW25}

\bibitem[ADF{\etalchar{+}}21]{asi2021private}
Hilal Asi, John Duchi, Alireza Fallah, Omid Javidbakht, and Kunal Talwar.
\newblock Private adaptive gradient methods for convex optimization.
\newblock In {\em International Conference on Machine Learning}, pages 383--392. PMLR, 2021.

\bibitem[AFKT21]{AsiFeKoTa21}
Hilal Asi, Vitaly Feldman, Tomer Koren, and Kunal Talwar.
\newblock Private stochastic convex optimization: Optimal rates in {$\ell_1$} geometry.
\newblock In {\em ICML}, 2021.

\bibitem[ALD21]{asi2021adapting}
Hilal Asi, Daniel L{\'e}vy, and John~C Duchi.
\newblock Adapting to function difficulty and growth conditions in private optimization.
\newblock {\em Advances in Neural Information Processing Systems}, 34:19069--19081, 2021.

\bibitem[ALT24]{asi2024private}
Hilal Asi, Daogao Liu, and Kevin Tian.
\newblock Private stochastic convex optimization with heavy tails: Near-optimality from simple reductions.
\newblock {\em Advances in Neural Information Processing Systems}, 37:59174--59215, 2024.

\bibitem[BD14]{bd14}
Rina~Foygel Barber and John~C Duchi.
\newblock Privacy and statistical risk: Formalisms and minimax bounds.
\newblock {\em arXiv preprint arXiv:1412.4451}, 2014.

\bibitem[BFTT19]{bft19}
Raef Bassily, Vitaly Feldman, Kunal Talwar, and Abhradeep Thakurta.
\newblock Private stochastic convex optimization with optimal rates.
\newblock In {\em Advances in Neural Information Processing Systems}, volume~32, 2019.

\bibitem[BGM23]{bassily2023differentially}
Raef Bassily, Crist{\'o}bal Guzm{\'a}n, and Michael Menart.
\newblock Differentially private algorithms for the stochastic saddle point problem with optimal rates for the strong gap.
\newblock In {\em The Thirty Sixth Annual Conference on Learning Theory}, pages 2482--2508. PMLR, 2023.

\bibitem[BS16]{bun2016concentrated}
Mark Bun and Thomas Steinke.
\newblock Concentrated differential privacy: Simplifications, extensions, and lower bounds.
\newblock In {\em Theory of Cryptography Conference}, pages 635--658. Springer, 2016.

\bibitem[BST14]{bst14}
Raef Bassily, Adam Smith, and Abhradeep Thakurta.
\newblock Private empirical risk minimization: Efficient algorithms and tight error bounds.
\newblock In {\em 2014 IEEE 55th Annual Symposium on Foundations of Computer Science}, pages 464--473. IEEE, 2014.

\bibitem[CMS11]{chaud}
Kamalika Chaudhuri, Claire Monteleoni, and Anand~D Sarwate.
\newblock Differentially private empirical risk minimization.
\newblock {\em Journal of Machine Learning Research}, 12(3), 2011.

\bibitem[DHS11]{duchi2011adaptive}
John Duchi, Elad Hazan, and Yoram Singer.
\newblock Adaptive subgradient methods for online learning and stochastic optimization.
\newblock {\em Journal of machine learning research}, 12(7), 2011.

\bibitem[DMNS06]{dwork2006calibrating}
Cynthia Dwork, Frank McSherry, Kobbi Nissim, and Adam Smith.
\newblock Calibrating noise to sensitivity in private data analysis.
\newblock In {\em Theory of cryptography conference}, pages 265--284. Springer, 2006.

\bibitem[FKT20]{fkt20}
Vitaly Feldman, Tomer Koren, and Kunal Talwar.
\newblock Private stochastic convex optimization: optimal rates in linear time.
\newblock In {\em Proceedings of the 52nd Annual ACM SIGACT Symposium on Theory of Computing}, pages 439--449, 2020.

\bibitem[GTU23]{ganesh2023universality}
Arun Ganesh, Abhradeep Thakurta, and Jalaj Upadhyay.
\newblock Universality of langevin diffusion for private optimization, with applications to sampling from rashomon sets.
\newblock In {\em The Thirty Sixth Annual Conference on Learning Theory}, pages 1730--1773. PMLR, 2023.

\bibitem[HNXW22]{hu2022high}
Lijie Hu, Shuo Ni, Hanshen Xiao, and Di~Wang.
\newblock High dimensional differentially private stochastic optimization with heavy-tailed data.
\newblock In {\em Proceedings of the 41st ACM SIGMOD-SIGACT-SIGAI Symposium on Principles of Database Systems}, pages 227--236, 2022.

\bibitem[HUL13]{hiriart2013convex}
Jean-Baptiste Hiriart-Urruty and Claude Lemar{\'e}chal.
\newblock {\em Convex analysis and minimization algorithms I \& II}, volume 305.
\newblock Springer science \& business media, 2013.

\bibitem[KLZ22]{kamath2022improved}
Gautam Kamath, Xingtu Liu, and Huanyu Zhang.
\newblock Improved rates for differentially private stochastic convex optimization with heavy-tailed data.
\newblock In {\em International Conference on Machine Learning}, pages 10633--10660. PMLR, 2022.

\bibitem[LL25]{lowy2025dpbilevel}
Andrew Lowy and Daogao Liu.
\newblock Differentially private bilevel optimization: Efficient algorithms with near-optimal rates.
\newblock In {\em The Thirty-ninth Annual Conference on Neural Information Processing Systems}, 2025.

\bibitem[LLA24]{lowy2024faster}
Andrew Lowy, Daogao Liu, and Hilal Asi.
\newblock Faster algorithms for user-level private stochastic convex optimization.
\newblock {\em Advances in Neural Information Processing Systems}, 37:96071--96100, 2024.

\bibitem[LR21]{lowy2021output}
Andrew Lowy and Meisam Razaviyayn.
\newblock Output perturbation for differentially private convex optimization: Faster and more general.
\newblock {\em arXiv preprint arXiv:2102.04704}, 2021.

\bibitem[LR25]{lowy2025private}
Andrew Lowy and Meisam Razaviyayn.
\newblock Private stochastic optimization with large worst-case lipschitz parameter.
\newblock {\em Journal of Privacy and Confidentiality}, 15:1, 2025.

\bibitem[LWMW25]{lin2025purifying}
Yingyu Lin, Erchi Wang, Yi-An Ma, and Yu-Xiang Wang.
\newblock Purifying approximate differential privacy with randomized post-processing.
\newblock {\em arXiv preprint arXiv:2503.21071}, 2025.

\bibitem[MVS24]{ma2024high}
Tianyi Ma, Kabir~A Verchand, and Richard~J Samworth.
\newblock High-probability minimax lower bounds.
\newblock {\em arXiv preprint arXiv:2406.13447}, 2024.

\bibitem[SS{\etalchar{+}}12]{shalev2012online}
Shai Shalev-Shwartz et~al.
\newblock Online learning and online convex optimization.
\newblock {\em Foundations and Trends in Machine Learning}, 4(2):107--194, 2012.

\bibitem[TWZW21]{tao2021optimal}
Youming Tao, Yulian Wu, Peng Zhao, and Di~Wang.
\newblock Optimal rates of (locally) differentially private heavy-tailed multi-armed bandits.
\newblock {\em arXiv preprint arXiv:2106.02575}, 2021.

\bibitem[WX22]{wang2022differentially}
Di~Wang and Jinhui Xu.
\newblock Differentially private $\ell_1$-norm linear regression with heavy-tailed data.
\newblock {\em arXiv preprint arXiv:2201.03204}, 2022.

\bibitem[WXDX20]{wx20}
Di~Wang, Hanshen Xiao, Srinivas Devadas, and Jinhui Xu.
\newblock On differentially private stochastic convex optimization with heavy-tailed data.
\newblock In {\em International Conference on Machine Learning}, pages 10081--10091. PMLR, 2020.

\end{thebibliography}
